\documentclass[10pt]{article}
\usepackage[a4paper,margin=0.85in]{geometry}
\usepackage{amsmath}
\usepackage{amssymb}
\usepackage{amsthm}
\usepackage{mathtools}
\usepackage{microtype}
\usepackage{times}
\usepackage{bm}
\usepackage{mathrsfs}
\usepackage{natbib}
\usepackage[hidelinks]{hyperref}
\newcommand{\Beginproof}[1]{\begin{proof}\ignorespaces#1}
\newcommand{\Endproof}{\end{proof}}

\newtheorem{theorem}{Theorem}
\newtheorem{lemma}[theorem]{Lemma}
\newtheorem{proposition}[theorem]{Proposition}
\newtheorem{corollary}[theorem]{Corollary}
\newtheorem{assumption}[theorem]{Assumption}
\theoremstyle{remark}
\newtheorem{remark}[theorem]{Remark}
\newenvironment{keywords}
  {\par\smallskip\noindent\textbf{Keywords: }}
  {\par\smallskip}

\newcommand{\E}{\mathbb E}
\newcommand{\R}{\mathbb R}

\newcommand{\tr}{\operatorname{tr}}
\newcommand{\norm}[1]{\left\|#1\right\|}
\newcommand{\polybrac}[1]{\langle\mkern-4.5mu\langle #1\rangle\mkern-4.5mu\rangle}

\title{A Rate Transfer Method: Convergence Rates in Stochastic Gradient Descent via Stochastic Calculus}
\author{Neil Walton\thanks{\href{mailto:neil.walton@durham.ac.uk}{neil.walton@durham.ac.uk}}\\Durham University}
\date{}
\begin{document}
\maketitle
\begin{abstract}
We develop a rate-transfer method that converts finite-time Lyapunov estimates for an associated stochastic differential equation (SDE) into finite-time convergence bounds for stochastic gradient algorithms. Under generic moment and smoothness conditions, the convergence analysis of the stochastic gradient method reduces to a calculus argument verifying a Lyapunov generator inequality for the SDE. The method recovers established rates for classical stochastic gradient methods and provides a way to establish convergence rates for new optimization algorithms through calculus arguments applied to their associated SDEs.
\end{abstract}
\begin{keywords}
Stochastic Approximation, Stochastic Differential Equations
\end{keywords}

\section{Introduction}

Stochastic gradient descent (SGD) methods evolve through discrete noisy updates.
Their
associated continuous-time models can be easier to analyse. In particular,
stochastic differential equations have a well-developed calculus for studying
convergence. We ask when an SDE convergence bound can be transferred to the
stochastic gradient algorithm.

Consider the stochastic gradient recursion
\[
\bm X_{n+1}=\bm X_n+\alpha_n\{\bm b(\bm X_n)+\bm\varepsilon_{n+1}\}.
\]
Here $\bm b(\bm X_n)$ is the mean update, $\alpha_n>0$
is the stepsize, and the error term $\bm\varepsilon_{n+1}$, conditional on the past, has mean zero and covariance
$\sigma(\bm X_n)\sigma(\bm X_n)^\top$. The associated SDE is
\[
d\bm Z_t=\bm b(\bm Z_t)\,dt+\sqrt{\alpha(t)}\sigma(\bm Z_t)\,d\bm B_t,
\]
Let $\bm B_t$ be a standard Brownian motion. The function $\alpha(t)$
expresses the stepsize sequence on the corresponding diffusion-time scale,
defined in Section~\ref{sec:v3-sa}.

We can analyze convergence of the SDE through a Lyapunov argument. Let \(V\) be a nonnegative, twice-differentiable function measuring the error of interest. The generator of the SDE above is a
differential operator $\mathcal L_tV(\bm x)=\bm b(\bm x)\cdot\nabla V(\bm x)
+\frac{\alpha(t)}{2}\operatorname{tr}\!\left(\sigma(\bm x)\sigma(\bm x)^\top\nabla^2V(\bm x)\right)$.
 We call $V$ a Lyapunov function when there are real functions $\lambda$ and $g$ such that
\[
\mathcal L_tV(\bm x)\leq-\lambda(t)V(\bm x)+g(t).
\]
If the SDE starts at state
$\bm X_0$, the Lyapunov inequality gives the following finite-time bound for
the SDE.
\[\textrm{SDE bound:}\qquad
\mathbb E V(\bm Z_t)\leq e^{-\int_0^t\lambda(r)\,dr}\mathbb E V(\bm X_0)
+\int_0^t e^{-\int_s^t\lambda(r)\,dr}g(s)\,ds.
\]

Under the smoothness and moment conditions of Assumption~\ref{assum:v3-primitive-finite-window}, and assuming that the required moments of the stochastic gradient iterates remain bounded, Theorem~\ref{thm:main-rate-transfer} transfers this SDE estimate to the stochastic gradient method:
\[\textrm{SGD bound:}\quad
\mathbb E V(\bm X_n)\leq e^{-\int_0^{t_n}\lambda(r)\,dr}\mathbb E V(\bm X_0)
+\int_0^{t_n} e^{-\int_s^{t_n}\lambda(r)\,dr}\{g(s)+C\alpha(s)\}\,ds.
\]
Here $t_0=0$ and $t_n=\sum_{k=0}^{n-1}\alpha_k$ for $n\geq1$, so
$t_{n+1}-t_n=\alpha_n$.

Once we have established the finite-time bound for the SDE, it
indicates the likely form of the SGD bound. The stochastic gradient method
inherits the SDE contraction and forcing, while the comparison contributes an
additional forcing term: in particular,
$g(s)\;\longmapsto\;g(s)+C\alpha(s)$. Theorem~\ref{thm:main-rate-transfer}
then establishes the finite-time SGD bound.

We use two examples to illustrate what the rate-transfer method provides. For strongly convex stochastic gradient descent, the transferred SDE estimate recovers the classical convergence rates obtained by direct discrete-time analyses such as \citet{moulines2011nonasymptotic}. For stochastic Heavy Ball, based on the momentum dynamics introduced by \citet{polyak1964speeding}, the same argument recovers the familiar geometric transient together with an $O(h)$ neighbourhood. In both cases, the purpose is not to improve the known rates, but to obtain them from the same continuous-time Lyapunov calculation and finite-window transfer argument.

The basic idea presented here also extends to other forms of Lyapunov analysis. More broadly, the benefit of a transfer theorem is that it removes much of the detailed discrete-time analysis from the convergence proof. A new stochastic gradient method can be analysed by identifying an associated SDE and proving an appropriate Lyapunov inequality. Convergence rates can therefore often be established through a comparatively short calculus argument, rather than a bespoke discrete-time proof.

\subsection{Related work.}
\label{sec:related-work}

Our motivation is stochastic gradient descent methods. However, like prior work, we formulate the transfer theory in the broader language of stochastic approximation, of which SGD is a special case.

Continuous time methods have a long history in stochastic approximation. Stochastic approximation is a noisy Euler discretization of its mean dynamics.
\citet{ljung1977analysis} introduced the ODE viewpoint for recursive stochastic algorithms; contemporaneous work of \citet{kushner1977general} developed closely related convergence methods, and \citet{kushnerclark1978} substantially extended and systematized the differential-equation approach.
This programme was developed further by
\citet{metivierpriouret1984}, \citet{benvenisteetal1990}, and
\citet{benaim1996dynamical}; see also the monographs of
\citet{kushneryin2003} and \citet{borkar2008stochastic}. The central
principle is that a simpler continuous model can reveal the stability and
asymptotic behaviour of a stochastic recursion. Transfer principles also appear in other areas. For example, \citet{dai1995positive} shows that, when every trajectory of the corresponding fluid limit model reaches zero and remains there, the associated queueing network is positive Harris recurrent. This provides an analogue of the SDE-to-SGD transfer principle considered here.

Diffusion approximations also appear early in this literature.
\citet{kushnerhuang1979rates,kushnerhuang1981weak} used weak convergence of
appropriately normalized stochastic-approximation errors to diffusion
processes.
Later work developed strong diffusion approximations for more general
recursive stochastic algorithms \citep{pezeshkiheunis1997strong}, while
diffusion methods continue to be used to describe rates and fluctuations; see,
\citet{kushneryin2003}. Here the focus is on asymptotic scaling under which the normalized error converges weakly to a nondegenerate Gaussian
or diffusion limit.
This is distinct from the finite-time convergence bounds studied here, which control the error at each iteration.

A methodological precursor is \citet{borkarmitter1999strong}, who use finite-time diffusion approximations on successive fixed intervals to obtain long-time information for a constant-stepsize stochastic recursion. More recently, SDEs have been used as quantitative continuous-time models of stochastic gradient algorithms. Stochastic modified equations and weak diffusion approximations have been developed for SGD and related methods~\citep{lietal2017sme,li2019sme,huetal2019diffusion}, while continuous-time convergence arguments have also been used to guide corresponding discrete analyses~\citep{orvietolucchi2019}. Uniform-in-time weak approximations are available in strongly convex settings~\citep{feng2020uniform,liwang2023uniform}. Particularly close to our decreasing-step setting, \citet{fontaine2021convergence} establish finite-horizon approximations of SGD by a time-inhomogeneous SDE and derive, in parallel, corresponding continuous- and discrete-time convergence rates. From the numerical-analysis viewpoint, Gaussian stochastic approximation is an Euler--Maruyama discretisation of the associated SDE, whose finite-horizon approximation theory is classical~\citep{kloedenplaten1992}. A central difficulty is that trajectory-error estimates generally deteriorate as the time horizon grows, a familiar issue in the numerical analysis of dynamical systems~\citep{stuart1994numerical}; under additional dissipativity or stability conditions, time-uniform control may sometimes be recovered~\citep{fanggiles2020adaptive}.

The main challenge addressed by this paper is how to retain a useful SDE estimate over arbitrarily long times when the underlying diffusion approximation is only accurate on finite horizons. Strong approximation approaches seek to keep the stochastic recursion and the diffusion close as processes; however, approximation errors accumulate as the time horizon grows. We instead use only the weaker distributional comparison applied to the Lyapunov function. To establish this comparison over finite times, we use stochastic flow theory of \citet{kunita1990stochastic}, in the forms given by \citet{varadhan2006stochastic} and \cite{matsumototaniguchi2016stochastic}. On each fixed diffusion-time epoch, the SDE is restarted from the current iterate and the local comparison error enters the Lyapunov estimate as an additional forcing term. The key point is that the Lyapunov function is now allowed to attenuate the errors introduced on earlier epochs, without requiring a single diffusion approximation to remain accurate over a growing time horizon. Iterating this argument propagates the SDE Lyapunov estimate over arbitrarily many epochs and yields a quantitative bound at every discrete iteration.

\section{The finite window SDE transfer framework}\label{sec:finite-window-framework}

Our motivation is stochastic gradient descent and related methods. We
formulate the transfer theory for stochastic approximation, which includes
SGD and also covers discrete stochastic recursions that need not arise from a
gradient. We use \emph{stochastic approximation} for the general recursion
and retain SGD terminology when discussing optimization applications.

\subsection{\texorpdfstring{Notation}{Notation}}
\label{sec:mathematical-preliminaries}

For a vector-valued map $\bm b:\R^d\to\R^d$, $D\bm b$ denotes its
Jacobian and $D^j\bm b$ its higher derivatives, viewed as multilinear maps.
For $\sigma:\R^d\to\R^{d\times q}$, $\sigma_j$ denotes its $j$th column.
The symbols $C,c,C_T,\ldots$ denote finite positive constants whose values
may change from line to line, with subscripts indicating relevant dependence.

\[
\polybrac{\bm x}:=(1+\norm{\bm x}_2^2)^{1/2}.
\]
We use $\polybrac{\bm x}^p$ for the polynomial weight of order $p$.

\subsection{Stochastic approximation}\label{sec:v3-sa}
\noindent We now make this broader stochastic-approximation setup
precise. Let
$(\Omega,\mathcal F,\{\mathcal F_n\}_{n\geq0},\mathbb P)$ be a
filtered probability space, and let $\bm X_n\in\R^d$ be adapted.
The stepsizes $\alpha_n>0$ are deterministic, and the total increment
$\bm H_{n+1}$ is $\mathcal F_{n+1}$-measurable. Write
\begin{equation}\label{eq:v3-sa-recursion}
\bm X_{n+1}=\bm X_n+\alpha_n\bm H_{n+1},
\qquad \bm H_{n+1}=\bm b(\bm X_n)+\bm\varepsilon_{n+1}.
\end{equation}
Let $\bm B$ be a $q$-dimensional standard Brownian motion. For functions
$\bm b:\R^d\to\R^d$, $A:\R^d\to\R^{d\times d}$, and
$\sigma:\R^d\to\R^{d\times q}$, the noise satisfies
$\E[\bm\varepsilon_{n+1}\mid\mathcal F_n]=0$ and
$\E[\bm\varepsilon_{n+1}\bm\varepsilon_{n+1}^{\top}\mid\mathcal F_n]
=A(\bm X_n)$.
The covariance factorization is $A(\bm x)=\sigma(\bm x)\sigma(\bm x)^\top$.
The drift $\bm b$ and covariance factor $\sigma$ specify the associated
diffusion. The total increment $\bm H_{n+1}$ remains the quantity used
in the moment assumptions below.
We also need a clock relating algorithmic iterations to diffusion time.

We introduce the diffusion times $t_0=0$, and
\[
t_n=\sum_{k=0}^{n-1}\alpha_k,
\]
and refer to $t_n$ as the diffusion time associated with iteration $n$.
Throughout the paper we assume $t_n\to\infty$ as $n\to\infty$, or equivalently $\sum_n \alpha_n = \infty$.
We define the piecewise-constant stepsize:
\[
\begin{aligned}
\alpha(t)&=\alpha_n,\quad t\in[t_n,t_{n+1}),\qquad
\alpha_{\max}:=\sup_{k\geq0}\alpha_k<\infty.
\end{aligned}
\]
Throughout the paper, $T>0$ denotes a fixed diffusion time horizon,
independent of the iteration indices.

We define, for every $m$,
\begin{equation}\label{eq:fixed-window-endpoint}
N_T(m)=\min\{n>m:t_n-t_m\geq T\}.
\end{equation}
Thus $N_T(m)$ is the next iteration after $m$ at which $T$ units of diffusion
time have passed.

By construction, $T\leq t_{N_T(m)}-t_m<T+\alpha_{\max}$.

We now use these quantities to define the associated SDE.

\subsection{The associated SDE}\label{sec:v3-associated-sde}
The associated diffusion evolves on the diffusion time clock defined above.
Its SDE is
\begin{equation}\label{eq:v3-associated-sde}
d\bm Z_t=\bm b(\bm Z_t)\,dt+
\sqrt{\alpha(t)}\,\sigma(\bm Z_t)\,d\bm B_t,
\end{equation}
The factor $\sqrt{\alpha(t)}$ is chosen so that, over
$[t_n,t_{n+1})$, the local drift and covariance scales are
$\alpha_n\bm b(\bm x)$ and $\alpha_n^2A(\bm x)$, matching the first two
conditional moment scales of the stochastic approximation increment.

For any diffusion time $s\geq0$ and state $\bm x$, let
$\bm Z^{s,\bm x}=(\bm Z_t^{s,\bm x}:t\geq s)$ denote the solution to
\eqref{eq:v3-associated-sde} with $\bm Z_s^{s,\bm x}=\bm x$. We refer to
\eqref{eq:v3-associated-sde} with this initial condition as the
\emph{restarted SDE}.
The restart construction is important to the finite window argument. At
iteration $m$, we compare the stochastic approximation over the next fixed
diffusion time window with the continuous model restarted from
$(t_m,\bm X_m)$. At the next block boundary the comparison is restarted again.

Thus the method requires controlled SA--SDE approximation only on fixed
diffusion time windows, rather than over an ever-growing time interval.
We assume that the SDE is well posed for every restart $(s,\bm x)$ and
is non-explosive on finite intervals. The corresponding restarted solutions
define the evolution operators $P_{s,t}$ below, with the usual composition
property.

To describe the evolution of functions under the restarted diffusion, let
$V$ be any function for which the expectation below is finite and define
\[
P_{s,t}V(\bm x)
=
\mathbb{E}\!\left[V\!\left(\bm Z_t^{s,\bm x}\right)\right],
\qquad s\leq t.
\]
Thus $P_{s,t}V(\bm x)$ is the expected value of $V$ at diffusion time
$t$ when the diffusion starts from $\bm x$ at diffusion time $s$.
Later, $V$ will typically be the Lyapunov function used to measure the
error of the stochastic approximation.
The family $\{P_{s,t}\}_{s\leq t}$ is the evolution family associated with
the time-inhomogeneous SDE. It satisfies $P_{s,s}=I$ and

\begin{equation}\label{eq:P-composition}
P_{s,t}=P_{s,u}P_{u,t},\quad s\leq u\leq t.
\end{equation}
We refer to \eqref{eq:P-composition} as the composition property.
For a fixed terminal index $n$, define
\begin{equation}
u_{k,n}(\bm x)
=P_{t_k,t_n}V(\bm x)
=\mathbb{E}\!\left[
V\!\left(\bm Z_{t_n}^{t_k,\bm x}\right)
\right],
\qquad 0\leq k\leq n.
\label{eq:v3-backward-function}
\end{equation}
We refer to $u_{k,n}(\bm x)$ as the backward function.
Thus $u_{k,n}(\bm x)$ is the expected terminal value of $V$ when the
diffusion is started from $\bm x$ at diffusion time $t_k$ and run until
$t_n$. Since $P_{t_n,t_n}=I$, we have $u_{n,n}=V$. By the composition
property,
\begin{equation}
u_{k,n}
=P_{t_k,t_{k+1}}u_{k+1,n},
\qquad 0\leq k<n.
\label{eq:v3-backward-composition}
\end{equation}

For $\phi\in C^2(\mathbb R^d)$, the time-dependent generator of the SDE~\eqref{eq:v3-associated-sde}
is
\[
\mathcal L_t\phi(\bm x)
=\bm b(\bm x)^\top\nabla\phi(\bm x)
+\frac{\alpha(t)}2 A(\bm x):\nabla^2\phi(\bm x),
\]
Here $A:B=\tr(A^\top B)$ denotes the Frobenius inner product.

\subsection{Lyapunov bounds for the associated SDE}

A key feature of our analysis is that we do not need to solve the associated
SDE explicitly. Instead, we seek a non-negative Lyapunov function $V$ for
which the generator satisfies
\[
\mathcal L_tV(\bm x)
\leq
-\lambda(t)V(\bm x)+g(t).
\]
Here, as with standard Lyapunov analysis, $V$ measures the deviation, energy, or error relevant to the application;
$\lambda(t)$ describes the contraction generated by the continuous dynamics;
and $g(t)$ represents the residual forcing. We take $\lambda$ and $g$ to be
deterministic, nonnegative, and locally integrable.

We can analyze the above Lyapunov bound informally as follows. The formal
argument for this is given in Lemma~1 below.
For the restarted diffusion, Itô's formula gives
\[
dV(\bm Z_t^{s,\bm x})
=\mathcal L_tV(\bm Z_t^{s,\bm x})\,dt+dM_t,
\]
where $M$ is a local martingale. Thus, using the Lyapunov condition,
\[
dV(\bm Z_t^{s,\bm x})
\leq
\left[-\lambda(t)V(\bm Z_t^{s,\bm x})+g(t)\right]dt+dM_t.
\]

We also define $R(s,t)$.
\[
R(s,t)=\exp\left\{-\int_s^t\lambda(r)\,dr\right\}.
\]
The scalar factor $R$ satisfies the following composition property.
\begin{equation}\label{eq:R-composition}
R(s,t)=R(s,u)R(u,t),
\qquad s\leq u\leq t.
\end{equation}

Using an integrating-factor and Grönwall argument, we obtain
\begin{equation}\label{eq:sde-finite-time-lyapunov}
P_{s,t}V(\bm x)
\leq R(s,t)V(\bm x)+\int_s^tR(r,t)g(r)\,dr.
\end{equation}
Here $R(s,t)$ is the contraction inherited from the SDE, while the integral
term is the accumulated forcing. This finite time bound is the
continuous input used later by the transfer theorem.
The important point is that everything to this point is relatively standard calculus.

\subsection{The finite-window transfer mechanism}\label{sec:transfer-idea}

The continuous analysis gives the finite-time bound
\eqref{eq:sde-finite-time-lyapunov}. We now explain how this bound is
transferred to the stochastic approximation. The formal result is
Theorem~\ref{thm:main-rate-transfer}; Proposition~\ref{prop:v3-primitive-window-verification}
supplies its finite-window comparison.

\noindent\emph{Local weak comparison.}

The associated SDE is constructed so that, over one
stochastic-approximation step, its drift and covariance scales match those
of the discrete update. Consequently, Taylor expansion of the SA step and
Dynkin expansion of the SDE have the same leading term, and their one-step
weak difference is of order $\alpha_k^2$. The formal finite-window
comparison below is justified by
Proposition~\ref{prop:v3-primitive-window-verification}.

\[
\left|
\mathbb E[u(\bm X_{k+1})\mid\mathcal F_k]
-P_{t_k,t_{k+1}}u(\bm X_k)
\right|
\leq C_T\alpha_k^2.
\]

For a terminal Lyapunov value at iteration $n$, we use the backward functions
defined in~\eqref{eq:v3-backward-function}. Then $u_{n,n}=V$, and the
one-step backward identity~\eqref{eq:v3-backward-composition} gives
$P_{t_k,t_{k+1}}u_{k+1,n}=u_{k,n}$.
Lemma~\ref{lem:v3-polynomial-window-control} supplies the derivative
bounds needed to apply these one-step estimates to $u_{k+1,n}$, and
Proposition~\ref{prop:v3-primitive-window-verification} makes the resulting
finite-window telescoping rigorous.
Here, we recall that
$\alpha(t)=\alpha_k$ for $t\in[t_k,t_{k+1})$, with
$t_{k+1}-t_k=\alpha_k$.
For $m\leq n\leq N_T(m)$, we telescope the resulting one-step bounds and take
expectations to obtain

\begin{equation}\label{eq:transfer-idea-fixed-window-comparison}
\begin{aligned}
\left|
\mathbb E V(\bm X_n)
-\mathbb E\!\left[P_{t_m,t_n}V(\bm X_m)\right]
\right|
&\leq C_T\sum_{k=m}^{n-1}\alpha_k^2=C_T\int_{t_m}^{t_n}\alpha(s)\,ds.
\end{aligned}
\end{equation}
Here $C_T$ may depend on the fixed window length $T$, but is uniform in the
absolute restart time $t_m$.

\noindent\emph{One-window transfer.}

Substituting the finite-time SDE estimate~\eqref{eq:sde-finite-time-lyapunov}
into the preceding inequality gives
\begin{equation}\label{eq:transfer-idea-one-window-bound}
\mathbb E V(\bm X_n)
\leq R(t_m,t_n)\mathbb E V(\bm X_m)
+\int_{t_m}^{t_n}R(s,t_n)g(s)\,ds
+C_T\int_{t_m}^{t_n}\alpha(s)\,ds.
\end{equation}
The first two terms are exactly the contraction and forcing from the SDE
estimate; the final integral is the approximation cost. This is not yet a
long-time result: the comparison is controlled only on a fixed diffusion-time
window of length $T$.

\noindent\emph{Restarting over successive windows.}

The restart mechanism combines successive fixed-window comparisons.
Recall from~\eqref{eq:fixed-window-endpoint} that $N_T(m)$ is the first
iteration after $m$ at which at least $T$ units of diffusion time have
elapsed. Set $m_0=0$ and $m_{j+1}=N_T(m_j)$.

At each boundary, only the SDE comparator is restarted, from
$(t_{m_j},\bm X_{m_j})$; the stochastic approximation itself continues
normally. The crucial point is that the comparison is made directly at the
Lyapunov observable. The terminal quantity $\mathbb E V(\bm X_{m_{j+1}})$
from one window is exactly the quantity needed to start the next one-window
Lyapunov estimate. Since the comparison constants are uniform in the absolute
restart time, the same fixed-window comparison applies again. The composition
rule~\eqref{eq:R-composition} propagates the SDE contraction across the
windows, and approximation errors from earlier windows are multiplied by the
contraction over subsequent windows. Thus no single SDE approximation needs
to remain accurate over a growing time horizon.

Iterating the preceding window estimate~\eqref{eq:transfer-idea-one-window-bound} over these blocks, then appending one final partial window, gives the
following bound for every $m_J<n\leq m_{J+1}$.

\[
\mathbb E V(\bm X_n)\leq R(0,t_n)\mathbb E V(\bm X_0)
+\int_0^{t_n}R(s,t_n)g(s)\,ds
+C_T\sum_{j=0}^{J}R(t_{m_{j+1}\wedge n},t_n)
\int_{t_{m_j}}^{t_{m_{j+1}\wedge n}}\alpha(s)\,ds.
\]
This is the desired all-iteration estimate: the SDE contraction and forcing
are propagated from time zero, while each local $O(\alpha_k^2)$ weak error
appears as an $\alpha(t)$-scale cost attenuated by the contraction over later
windows. The formal theorem below replaces this schematic calculation by the
precise weighted assumptions needed on an unbounded state space.

\noindent\emph{Why the weak comparison is enough.}

As discussed in Section~\ref{sec:related-work}, diffusion approximations
have also been developed through stronger process-level comparisons. Such
approximations provide more information, but require correspondingly stronger
control. For rate transfer here, it is enough to compare the Lyapunov
observable $V$ weakly over each fixed window. This is exactly the quantity
that closes under the restart mechanism.

Thus a fixed-window weak comparison suffices to propagate the SDE Lyapunov
estimate over all iterations. Under the bounded-contraction condition in the
main theorem, the transported approximation costs can be written as the
additional forcing $C\alpha(s)$, giving the replacement
$g(s)\mapsto g(s)+C\alpha(s)$. The precise statement is
Theorem~\ref{thm:main-rate-transfer}; its proof is in
Appendix~\ref{sec:proof-main-rate-transfer}.

\section{Rate transfer under primitive assumptions}\label{sec:v3-transfer}
Section~\ref{sec:v3-first-order-standing-assumptions} states the
primitive regularity and increment conditions used for the finite-window
comparison. Section~\ref{sec:main-rate-transfer-theorem} gives the main
rate-transfer theorem; its all-iteration conclusion also requires the global
moment stability condition $M_p:=\sup_{n\geq0}\mathbb E\polybrac{\bm X_n}^p<\infty$.
Section~\ref{sec:sde-stability-transfer}
gives a sufficient condition for this stability directly from a
Foster--Lyapunov inequality for the associated SDE.

\subsection{Primitive assumptions}\label{sec:v3-first-order-standing-assumptions}

\begin{assumption}
\label{assum:v3-primitive-finite-window}
For the main primitive result, let
$\bm b\in C^3(\mathbb R^d;\mathbb R^d)$ and
$\sigma\in C^3(\mathbb R^d;\mathbb R^{d\times q})$. Here
$\|\cdot\|_F$ denotes the Frobenius norm. Assume that, for all
$\bm x,\bm y\in\mathbb R^d$,
\begin{equation}
 \|\bm b(\bm x)-\bm b(\bm y)\|\leq L_b\|\bm x-\bm y\|,\qquad
 \|\sigma(\bm x)-\sigma(\bm y)\|_F\leq L_\sigma\|\bm x-\bm y\|.
\label{eq:v3-primitive-global-lipschitz}
\end{equation}
Since the coefficients are $C^1$, this condition implies
\[
 \sup_{\bm x}\|D\bm b(\bm x)\|<\infty,
 \qquad
 \sup_{\bm x}\sum_j\|D\sigma_j(\bm x)\|^2<\infty.
\]
Assume also, for some $\kappa\geq0$,

\begin{equation}
 \|D^2\bm b(\bm x)\|+\sum_j\|D^2\sigma_j(\bm x)\|
 \leq C \polybrac{\bm x}^{\kappa}.
\label{eq:v3-primitive-second-derivative-growth}
\end{equation}
Let $V\in C^2(\mathbb R^d)$ with $V\geq0$ and suppose, for some $v\geq0$, that
\begin{equation}
 \|\nabla V(\bm x)\|+\|\nabla^2V(\bm x)\|
 \leq C \polybrac{\bm x}^{v}.
\label{eq:v3-primitive-lyapunov-growth}
\end{equation}

For some $p\geq v+\kappa+2$, we assume\footnote{More generally, it is
enough for the right-hand side to have polynomial growth of any order no
larger than $p$; we use the self-weighted form below for simplicity.}
\begin{equation}
 \mathbb{E}[\|\bm H_{k+1}\|^p\mid\mathcal F_k]
 \leq M \polybrac{\bm X_k}^{p}.
\label{eq:v3-primitive-oracle-p-moment}
\end{equation}
\end{assumption}

The smoothness and moment conditions in
Assumption~\ref{assum:v3-primitive-finite-window} are comparable to
conditions commonly used in the classical stochastic-approximation
literature; see, for example, \citet{kushneryin2003} and
\citet{benvenisteetal1990}.

These are the primitive conditions used to obtain the finite-window
comparison. They do not include the global moment bound $M_p<\infty$, which
enters only in the all-iteration conclusion of the theorem. Section~
\ref{sec:sde-stability-transfer} gives an SDE-based sufficient condition for
this bound.

\subsection{Main rate-transfer theorem}\label{sec:main-rate-transfer-theorem}

The following is the main result of the paper.

\begin{theorem}[Rate Transfer Theorem]
\label{thm:main-rate-transfer}
Under Assumption~\ref{assum:v3-primitive-finite-window}, for each fixed
$T>0$ and indices $m\leq n\leq N_T(m)$ satisfying
$\mathbb{E}\polybrac{\bm X_m}^{p}<\infty$,
\begin{equation}
\left|\mathbb{E}V(\bm X_n)
-\mathbb{E}\!\left[P_{t_m,t_n}V(\bm X_m)\right]\right|
\leq C_T\mathbb{E}\polybrac{\bm X_m}^{p}
\int_{t_m}^{t_n}\alpha(s)\,ds.
\label{eq:main-rate-transfer-window-comparison}
\end{equation}
If, in addition,
$M_p:=\sup_{k\geq0}\mathbb{E}\polybrac{\bm X_k}^{p}<\infty$ (or equivalently
\(\sup_{k\geq0}\mathbb E\|\bm X_k\|^p<\infty\)) and the
associated SDE satisfies, for deterministic nonnegative locally
integrable functions $\lambda$ and $g$ with
$0\leq\lambda(t)\leq\bar\lambda<\infty$,
\[
\mathcal L_tV(\bm x)\leq-\lambda(t)V(\bm x)+g(t).
\]
With
$R(s,t)=\exp\{-\int_s^t\lambda(r)\,dr\}$, for every $n\geq0$,
\begin{equation}
\mathbb{E}V(\bm X_n)
\leq R(0,t_n)\mathbb{E}V(\bm X_0)
+\int_0^{t_n}R(s,t_n)\{g(s)+C_T\alpha(s)\}\,ds,
\label{eq:v3-first-order-transfer-bound}
\end{equation}
where $C_T<\infty$ is independent of $n$ and may depend on the fixed-window
and moment constants, including $M_p$.
\end{theorem}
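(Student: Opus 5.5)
The proof has two parts: the fixed-window comparison \eqref{eq:main-rate-transfer-window-comparison}, then the all-iteration bound \eqref{eq:v3-first-order-transfer-bound} obtained by restarting over windows.

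\emph{Fixed-window comparison.} Fix $m\le n\le N_T(m)$. Using the backward functions $u_{k,n}=P_{t_k,t_n}V$ from \eqref{eq:v3-backward-function}, we have $u_{n,n}=V$ and $u_{k,n}=P_{t_k,t_{k+1}}u_{k+1,n}$, so the difference telescopes:
\[
\mathbb E V(\bm X_n)-\mathbb E u_{m,n}(\bm X_m)=\sum_{k=m}^{n-1}\mathbb E\Bigl[\mathbb E[u_{k+1,n}(\bm X_{k+1})\mid\mathcal F_k]-P_{t_k,t_{k+1}}u_{k+1,n}(\bm X_k)\Bigr].
\]
Each summand is a one-step weak error and is bounded in three steps.
\begin{enumerate}
\item \textbf{Derivative bounds.} By stochastic-flow theory (Kunita, as cited), using the global Lipschitz bounds and the polynomial growth \eqref{eq:v3-primitive-second-derivative-growth}, \eqref{eq:v3-primitive-lyapunov-growth}, differentiate $u_{k+1,n}(\bm x)=\mathbb E V(\bm Z^{t_{k+1},\bm x}_{t_n})$ under the expectation. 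The first and second variation processes have moments bounded uniformly over horizons $\le T+\alpha_{\max}$. This gives $\|\nabla^j u_{k+1,n}(\bm x)\|\le C_T\polybrac{\bm x}^{v+\kappa}$ for $j\le2$, with $C_T$ independent of $t_m$. This is where $b,\sigma\in C^3$ matter.
\item \textbf{Taylor expansion of the SA step.} Expand $u=u_{k+1,n}$ to second order at $\bm X_k$ along $\alpha_k\bm H_{k+1}$. The first-order term gives $\alpha_k\bm b\cdot\nabla u$. The second-order term gives $\tfrac{\alpha_k^2}{2}(A+\bm b\bm b^\top):\nabla^2u$, with the remainder controlled through $\nabla^2 u$ at an intermediate point. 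That remainder is bounded by $C\alpha_k^2\polybrac{\bm X_k}^{v+\kappa}\|\bm H\|^2(1+\alpha_k\|\bm H\|)^{v+\kappa}$, and the moment assumption \eqref{eq:v3-primitive-oracle-p-moment} with $p\ge v+\kappa+2$ turns it into $C\alpha_k^2\polybrac{\bm X_k}^p$.
\item \textbf{Dynkin expansion of the SDE step.} On $[t_k,t_{k+1}]$, Dynkin gives $P_{t_k,t_{k+1}}u(\bm x)=u(\bm x)+\int_{t_k}^{t_{k+1}}\mathbb E\mathcal L_s u(\bm Z_s)\,ds$. Since $\mathcal L_s u(\bm Z_s)-\bm b\cdot\nabla u(\bm x)$ is $O(\alpha_k^{1/2})$ in mean from moments of $\bm Z_s-\bm x$ and derivative growth, this equals $u(\bm x)+\alpha_k\bm b\cdot\nabla u(\bm x)+O(\alpha_k^{3/2})$. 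To get $O(\alpha_k^2)$ instead, I would expand $\mathcal L_s u(\bm Z_s)$ once more with Itô, using that the diffusion part is already of size $\alpha_k$. If a third derivative of $u$ is unavailable, I would first mollify or use the flow's second variation instead.
\end{enumerate}
Matching the first-order terms leaves $C_T\alpha_k^2\polybrac{\bm X_k}^p$. Summing needs $\mathbb E\polybrac{\bm X_k}^p\le C_T\mathbb E\polybrac{\bm X_m}^p$ for $m\le k\le N_T(m)$. This follows from a discrete Grönwall argument: Lipschitz drift, the increment moment bound, and at most $O(T/\min\alpha)$ steps. Since the step count is not controlled uniformly, the Grönwall has to be done in diffusion time, via $\mathbb E\polybrac{\bm X_{k+1}}^p\le(1+C\alpha_k)\mathbb E\polybrac{\bm X_k}^p$. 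Finally $\sum\alpha_k^2=\int\alpha$.

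\emph{All iterations.} Combine the window bound with \eqref{eq:sde-finite-time-lyapunov}. That inequality is rigorously obtained from Itô's formula, a localization, and the integrating factor $R$. Together they give \eqref{eq:transfer-idea-one-window-bound} with constant $C_TM_p$. Iterate over the blocks $m_j$ and use \eqref{eq:R-composition} to obtain the displayed sum $\sum_j R(t_{m_{j+1}\wedge n},t_n)\int_{t_{m_j}}^{t_{m_{j+1}\wedge n}}\alpha$. Since $\lambda\le\bar\lambda$ and each window has length $<T+\alpha_{\max}$, we get $R(t_{m_{j+1}\wedge n},t_n)\le e^{\bar\lambda(T+\alpha_{\max})}R(s,t_n)$ for $s$ in window $j$. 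This converts the sum into $C_T\int_0^{t_n}R(s,t_n)\alpha(s)\,ds$.

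\emph{Main obstacle.} I expect the hardest step to be the uniform (in $t_m$) derivative bounds on $u_{k+1,n}$ together with the $O(\alpha_k^2)$ precision of the SDE one-step expansion. Both rest on flow moment estimates with polynomial weights that must close under $p\ge v+\kappa+2$.
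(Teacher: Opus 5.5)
Your overall architecture matches the paper's: backward functions $u_{k,n}$, telescoping, flow-based derivative bounds, first-order matching of the SA Taylor step against the SDE Dynkin step, Gr\"onwall in diffusion time for the window moments, and block restarts absorbed with $e^{\bar\lambda(T+\alpha_{\max})}$. However, Step 3 has a genuine gap.

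You assert that $\mathcal L_s u(\bm Z_s)-\bm b\cdot\nabla u(\bm x)$ is only $O(\alpha_k^{1/2})$ in mean. You then propose to recover $O(\alpha_k^2)$ by It\^o-expanding $\bm b^\top\nabla u(\bm Z_s)$ once more. That expansion needs $\nabla^3 u_{k+1,n}$, with polynomial bounds uniform in the restart time, and the hypotheses do not supply it. $V$ is only $C^2$, and $\sigma$ may be degenerate (as in Heavy Ball) or even identically zero, so $P_{s,t}V$ is in general only $C^2$. The flow analysis yields only first and second variations. Your fallback ("mollify or use the second variation") is left unspecified. As written, Step 3 gives only an $O(\alpha_k^{3/2})$ one-step defect. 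After telescoping, that produces forcing $C\alpha(s)^{1/2}$, not the claimed $C\alpha(s)$.

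The missing observation is the $\sqrt{\alpha(t)}$ factor in the diffusion coefficient. On $[t_k,t_k+r]$ with $r\leq\alpha_k$, the It\^o isometry and the fixed-window state moments give $\mathbb E\|\bm Z^{t_k,\bm x}_{t_k+r}-\bm x\|^2\leq C(r^2+\alpha_k r)\polybrac{\bm x}^2$. So the one-step displacement is $O(\alpha_k)$, not $O(\alpha_k^{1/2})$. With $b_u:=\bm b^\top\nabla u$, Dynkin's formula gives the exact identity
\[
P_{t_k,t_{k+1}}u(\bm x)-u(\bm x)-\alpha_k b_u(\bm x)=\int_0^{\alpha_k}\mathbb E\bigl[b_u(\bm Z_{t_k+r})-b_u(\bm x)\bigr]dr+\frac{\alpha_k}{2}\int_0^{\alpha_k}\mathbb E\bigl[A(\bm Z_{t_k+r}):\nabla^2u(\bm Z_{t_k+r})\bigr]dr.
\]
\begin{itemize}
\item The second term is $O(\alpha_k^2\polybrac{\bm x}^{v+\kappa+2})$, as you noted.
\item For the first term, $Db_u=D\bm b^\top\nabla u+\nabla^2u\,\bm b$ has growth $\polybrac{\bm y}^{v+\kappa+1}$, using only $\nabla^2u$ and $D\bm b$.
\item The mean-value theorem and Cauchy--Schwarz then give $\mathbb E|b_u(\bm Z_{t_k+r})-b_u(\bm x)|\leq C(r+\sqrt{\alpha_k r})\polybrac{\bm x}^{v+\kappa+2}$. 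This integrates over $[0,\alpha_k]$ to $\tfrac76 C\alpha_k^2\polybrac{\bm x}^{v+\kappa+2}$.
\end{itemize}
No third derivatives are needed.

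There is also a smaller slip in Step 2. Your multiplicative weight $\polybrac{\bm X_k}^{v+\kappa}(1+\alpha_k\|\bm H\|)^{v+\kappa}\|\bm H\|^2$ has conditional mean of order $\polybrac{\bm X_k}^{2(v+\kappa)+2}$. The moment assumption with $p=v+\kappa+2$ does not control this, and that is exactly the case used in both applications. Use the additive inequality $\polybrac{\bm x+\theta\bm z}^{r}\leq C_r(\polybrac{\bm x}^r+\|\bm z\|^r)$ instead. This bounds the remainder by $C(\polybrac{\bm x}^{v+\kappa}\|\bm z\|^2+\|\bm z\|^{v+\kappa+2})$, whose conditional mean is $C\alpha_k^2\polybrac{\bm X_k}^{v+\kappa+2}$.
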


The proof combines the primitive fixed-window comparison with the
restart argument; it is given in Appendix~\ref{sec:proof-main-rate-transfer}.
The additional stability hypothesis $M_p<\infty$ is addressed in
Section~\ref{sec:sde-stability-transfer}.

\subsection{Stability from the associated SDE}\label{sec:sde-stability-transfer}

In addition to the primitive finite-window assumptions and the SDE Lyapunov
inequality, Theorem~\ref{thm:main-rate-transfer} requires the global moment
bound
\[
M_p:=\sup_{n\geq0}\mathbb E\polybrac{\bm X_n}^{p}<\infty.
\]
This is a stability property of the stochastic approximation. We now show how
it can be established using a Foster--Lyapunov inequality for the associated
SDE.

As with the Transfer Theorem, we follow the same principle here: we establish a Foster--Lyapunov
inequality for the associated SDE using generator calculus, then transfer
this stability to the stochastic-approximation iterates.
The following
proposition formalizes this step.

\begin{proposition}[Foster--Lyapunov Stability Transfer]
\label{prop:foster-lyapunov-stability-transfer}
Suppose the stochastic approximation and associated SDE satisfy
Assumption~\ref{assum:v3-primitive-finite-window} with moment exponent
$p\geq2$. Let $\Phi\in C^2(\mathbb R^d)$ satisfy $\Phi\geq1$ and, for a
constant $C_\Phi<\infty$,
\[
\polybrac{\bm x}^p\leq C_\Phi\Phi(\bm x),\qquad
\|\nabla\Phi(\bm x)\|\leq C_\Phi\polybrac{\bm x}^{p-1},\qquad
\|\nabla^2\Phi(\bm x)\|\leq C_\Phi\polybrac{\bm x}^{p-2}.
\]
Assume also that $\mathbb E\Phi(\bm X_0)<\infty$ and that, for some
$\gamma>0$, $K\geq0$, and finite index $N_0$, the following bound holds
uniformly for $t\geq t_{N_0}$ and $\bm x\in\mathbb R^d$:
\[
\mathcal L_t\Phi(\bm x)\leq-\gamma\Phi(\bm x)+K.
\]
There is a
sufficiently small $\bar\alpha>0$, depending only on the constants in these
assumptions, such that if there is a finite index $N\geq N_0$ for which
$\alpha_n<\bar\alpha$ for every $n\geq N$, then
\[
\sup_{n\geq0}\mathbb E\Phi(\bm X_n)<\infty,
\qquad
M_p:=\sup_{n\geq0}\mathbb E\polybrac{\bm X_n}^p<\infty.
\]
\end{proposition}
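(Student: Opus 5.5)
A direct one-step drift argument for $\Phi$ along the stochastic approximation avoids any need for the a priori moment bound $M_p$, and it is the route I would take. The idea is to Taylor expand $\Phi(\bm X_{k+1})$ around $\bm X_k$ and compare the first- and second-order terms with $\alpha_k\mathcal L_{t_k}\Phi(\bm X_k)$. Write $\bm X_{k+1}=\bm X_k+\alpha_k\bm H_{k+1}$. Taylor's theorem with integral remainder gives
\[
\Phi(\bm X_{k+1})=\Phi(\bm X_k)+\alpha_k\nabla\Phi(\bm X_k)^\top\bm H_{k+1}+\tfrac{\alpha_k^2}{2}\bm H_{k+1}^\top\nabla^2\Phi(\bm \xi_k)\bm H_{k+1},
\]
where $\bm\xi_k$ lies on the segment between $\bm X_k$ and $\bm X_{k+1}$.

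Take conditional expectations given $\mathcal F_k$. The first-order term is exactly $\alpha_k\bm b(\bm X_k)^\top\nabla\Phi(\bm X_k)$. For the second-order term, add and subtract $\nabla^2\Phi(\bm X_k)$. The part with $\nabla^2\Phi(\bm X_k)$ produces $\frac{\alpha_k^2}{2}\big(A(\bm X_k)+\bm b\bm b^\top\big):\nabla^2\Phi(\bm X_k)$. Of this, $\frac{\alpha_k^2}{2}A:\nabla^2\Phi$ matches the diffusion part of $\alpha_k\mathcal L_{t_k}\Phi$, since $\alpha(t_k)=\alpha_k$. Therefore
\[
\mathbb E[\Phi(\bm X_{k+1})\mid\mathcal F_k]\leq\Phi(\bm X_k)+\alpha_k\mathcal L_{t_k}\Phi(\bm X_k)+\alpha_k^2\mathcal E_k .
\]
The error $\mathcal E_k$ collects $\tfrac12\bm b\bm b^\top:\nabla^2\Phi(\bm X_k)$ and $\tfrac12\mathbb E[\bm H^\top(\nabla^2\Phi(\bm\xi_k)-\nabla^2\Phi(\bm X_k))\bm H\mid\mathcal F_k]$.

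The key step, and the main obstacle, is to show $\mathcal E_k\leq C\Phi(\bm X_k)$ with $C$ depending only on the constants of the assumptions. For the first part, the Lipschitz bound gives $\|\bm b(\bm x)\|\leq C\polybrac{\bm x}$. Combined with $\|\nabla^2\Phi\|\leq C_\Phi\polybrac{\bm x}^{p-2}$, this term is $O(\polybrac{\bm X_k}^p)=O(\Phi(\bm X_k))$. For the remainder, I would not try to use continuity of $\nabla^2\Phi$. Instead I would bound it crudely by
\[
C_\Phi\|\bm H\|^2\big(\polybrac{\bm X_k}^{p-2}+\polybrac{\bm \xi_k}^{p-2}\big),
\]
then use $\polybrac{\bm\xi_k}\leq\polybrac{\bm X_k}+\alpha_{\max}\|\bm H\|$, so that $\polybrac{\bm\xi_k}^{p-2}\leq C(\polybrac{\bm X_k}^{p-2}+\|\bm H\|^{p-2})$ since $p\geq2$. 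This leaves terms $\|\bm H\|^2\polybrac{\bm X_k}^{p-2}$ and $\|\bm H\|^p$. Hölder's inequality together with the conditional moment bound \eqref{eq:v3-primitive-oracle-p-moment} bounds both by $CM\polybrac{\bm X_k}^p\leq C\Phi(\bm X_k)$. Integrability at each finite step is handled the same way: induction from $\mathbb E\Phi(\bm X_0)<\infty$ gives $\mathbb E\polybrac{\bm X_k}^p<\infty$ for every $k$, because the increment moments are controlled by $\polybrac{\bm X_k}^p$.

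With that bound, for $k\geq N$ (so that $t_k\geq t_{N_0}$) the Foster--Lyapunov inequality gives
\[
\mathbb E\Phi(\bm X_{k+1})\leq(1-\alpha_k\gamma+C\alpha_k^2)\mathbb E\Phi(\bm X_k)+\alpha_kK .
\]
Choose $\bar\alpha$ with $C\bar\alpha\leq\gamma/2$ and $\bar\alpha\gamma\leq1$. Then $\mathbb E\Phi(\bm X_{k+1})\leq(1-\alpha_k\gamma/2)\mathbb E\Phi(\bm X_k)+\alpha_kK$. This recursion keeps $\mathbb E\Phi(\bm X_k)\leq\max\{\mathbb E\Phi(\bm X_N),2K/\gamma\}$ for all $k\geq N$. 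The case split is immediate: if the current value is at most $2K/\gamma$, the next value is a convex-type combination and stays below the same bound, and otherwise it decreases.

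For the finitely many indices $k<N$, I would use the crude bound $\mathbb E[\Phi(\bm X_{k+1})\mid\mathcal F_k]\leq C(1+\alpha_{\max})^p\Phi(\bm X_k)$. This follows from the same Taylor estimate, or directly from $\Phi(\bm X_{k+1})\leq C\polybrac{\bm X_{k+1}}^p$. That last inequality needs a growth bound $\Phi\leq C\polybrac{\cdot}^p$, which itself follows by integrating the gradient bound $\|\nabla\Phi\|\leq C_\Phi\polybrac{\cdot}^{p-1}$. It follows that $\mathbb E\Phi(\bm X_N)<\infty$.

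Together the two regimes give $\sup_n\mathbb E\Phi(\bm X_n)<\infty$, and $M_p<\infty$ follows from $\polybrac{\bm x}^p\leq C_\Phi\Phi(\bm x)$.
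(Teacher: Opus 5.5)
Your proposal is correct and is essentially the paper's proof. It uses a one-step Taylor drift estimate for $\Phi$ along the iterates, with the second-order remainder bounded by $C\alpha_k^2\Phi(\bm X_k)$ via the translation inequality and conditional Jensen on the oracle $p$-moment, then the tail generator inequality at $t_k$, a stepsize threshold giving $\mathbb E\Phi(\bm X_n)\leq\max\{\mathbb E\Phi(\bm X_N),2K/\gamma\}$, and crude moment propagation over the finite prefix. The only difference is cosmetic: you match the $\tfrac{\alpha_k^2}{2}A:\nabla^2\Phi$ term exactly via $\mathbb E[\bm H_{k+1}\bm H_{k+1}^\top\mid\mathcal F_k]=A(\bm X_k)+\bm b(\bm X_k)\bm b(\bm X_k)^\top$ and use linear growth of $\bm b$, whereas the paper bounds the whole remainder first and then absorbs $\tfrac{h}{2}|A:\nabla^2\Phi|\leq Ch\Phi$ using linear growth of $\sigma$.
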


The stability function \(\Phi\) need not coincide with the rate Lyapunov function \(V\) in Theorem~\(\ref{thm:main-rate-transfer}\).

\begin{remark}[Power lifting trick]
\label{rem:power-lifting-sde-stability}
In applications, the natural SDE Lyapunov function often controls a
lower-order moment than the one required by
Theorem~\ref{thm:main-rate-transfer}. For example, if $W$ is quadratic, so
that $1+W(\bm x)\asymp\polybrac{\bm x}^2$, but a $p$-th moment is required,
we use the power-lifting trick and set
\[
\Phi(\bm x)=(1+W(\bm x))^{p/2}.
\]
More generally, if
$1+W(\bm x)\asymp\polybrac{\bm x}^{p_0}$, we take
$\Phi(\bm x)=(1+W(\bm x))^{p/p_0}$. Under the standing regularity
assumptions, the generator chain rule shows that a Foster--Lyapunov bound
for $W$ gives a corresponding bound for $\Phi$ once the stepsizes are
sufficiently small. Since $\Phi(\bm x)\asymp\polybrac{\bm x}^p$,
Proposition~\ref{prop:foster-lyapunov-stability-transfer} then gives the
required $p$-th moment bound. A short verification of this power-lifting step
is given in Lemma~\ref{lem:power-lifting-sde-stability}.
\end{remark}

If $W$ controls a lower-order moment of order $p_0<p$, while
Theorem~\ref{thm:main-rate-transfer} requires order $p$, the power-lifting
trick produces a suitable auxiliary function $\Phi$.
Proposition~\ref{prop:foster-lyapunov-stability-transfer} then establishes
the required uniform $p$-th moment bound:
\[
M_p:=\sup_{n\geq0}\mathbb E\polybrac{\bm X_n}^p<\infty.
\]


\section{Applying the rate-transfer method}\label{sec:examples}
This section gives two applications of the rate-transfer theorem: strongly
convex stochastic gradient descent and stochastic Heavy Ball.

\subsection{How to apply the method}
\label{sec:application-recipe}
Theorem~\ref{thm:main-rate-transfer} reduces an application of the
rate-transfer method to three model-specific steps.
\begin{enumerate}
\item Construct the associated SDE. Identify the conditional drift \(\bm b\) and covariance \(A=\sigma\sigma^\top\) of the stochastic-approximation increment. The associated SDE is
\[
d\bm Z_t=\bm b(\bm Z_t)\,dt
+\sqrt{\alpha(t)}\,\sigma(\bm Z_t)\,d\bm B_t.
\]
Over a time interval of length \(\alpha_n\), its drift and covariance match those of one stochastic-approximation step to leading order.

\item \emph{Establish the SDE Lyapunov bound.} Choose a nonnegative
Lyapunov function $V$ that measures the error or energy of interest. Its
generator is
\[
\mathcal L_tV(\bm x)=
\bm b(\bm x)^\top\nabla V(\bm x)
+\frac{\alpha(t)}{2}A(\bm x):\nabla^2V(\bm x).
\]
Establish, for nonnegative functions $\lambda$ and $g$, the bound
\[
\mathcal L_tV(\bm x)\leq-\lambda(t)V(\bm x)+g(t).
\]
Verify the primitive regularity and moment assumptions of
Theorem~\ref{thm:main-rate-transfer}. If the required uniform moment
$M_p<\infty$ is not immediate, obtain it from a Foster--Lyapunov inequality
for the associated SDE (Proposition~\ref{prop:foster-lyapunov-stability-transfer}),
using the power-lifting trick when a higher moment is needed. The primitive
assumptions and this moment bound give the finite-window comparison needed
for the rate theorem.

\item \emph{Apply the rate-transfer theorem.}
Theorem~\ref{thm:main-rate-transfer} gives
\[
\mathbb E V(\bm X_n)\leq e^{-\int_0^{t_n}\lambda(r)\,dr}
\mathbb E V(\bm X_0)+\int_0^{t_n}e^{-\int_s^{t_n}\lambda(r)\,dr}
\{g(s)+C_T\alpha(s)\}\,ds.
\]
The SDE supplies the contraction through the exponential factors and
the forcing through $g$;
the stochastic approximation contributes the additional
$C_T\alpha(s)$ forcing.
\end{enumerate}

The following examples illustrate these three steps.

\subsection{Strongly convex stochastic gradient descent}
\label{sec:v3-strongly-convex-worked-verification}

To illustrate Theorem~\ref{thm:main-rate-transfer}, we consider
stochastic gradient descent (SGD) for a strongly convex function. Let
$\bm x^\star$ denote the unique minimizer of $f$. Viewed as a stochastic
approximation, the SGD update is:
\[
\bm X_{k+1}=\bm X_k-\alpha_k
\{\nabla f(\bm X_k)+\bm\xi_{k+1}\}.
\]
We assume $f\in C^4(\mathbb R^d)$ is $\mu$-strongly convex with
$L$-Lipschitz gradient, and that there exist $\kappa\geq0$ and $C<\infty$
such that $\|D^3f(\bm x)\|\leq C\polybrac{\bm x}^{\kappa}$ for
$\bm x\in\mathbb R^d$. Set $p:=\kappa+3$. The noise satisfies
$\mathbb E[\bm\xi_{k+1}\mid\mathcal F_k]=0$ and
$\mathbb E[\bm\xi_{k+1}\bm\xi_{k+1}^{\top}\mid\mathcal F_k]=\Sigma$, together
with
$\mathbb E[\|\bm\xi_{k+1}\|^p\mid\mathcal F_k]\leq\nu_p$.
These assumptions are standard in nonasymptotic analyses of strongly convex
SGD; see, for example, \citet{moulines2011nonasymptotic}. We also assume
$\mathbb E\polybrac{\bm X_0}^{p}<\infty$ and nonincreasing stepsizes with
$\alpha_n\to0$. Put
\[
\bm b(\bm x)=-\nabla f(\bm x),\qquad
\sigma=\Sigma^{1/2},\qquad
V(\bm x)=\|\bm x-\bm x^\star\|^2.
\]

This gives the SDE
\[
d\bm Z_t=-\nabla f(\bm Z_t)\,dt
+\sqrt{\alpha(t)}\,\Sigma^{1/2}\,d\bm B_t.
\]

\noindent\emph{Generator estimate.} The generator calculation is
\[
\begin{aligned}
\mathcal L_tV(\bm x)
&=-2(\bm x-\bm x^\star)^\top\nabla f(\bm x)
+\alpha(t)\operatorname{tr}(\Sigma)\\
&\leq-2\mu V(\bm x)+\operatorname{tr}(\Sigma)\alpha(t).
\end{aligned}
\]
Strong convexity and $\nabla f(\bm x^\star)=0$ give the inequality. Thus
$\lambda=2\mu$ and $g(t)=\operatorname{tr}(\Sigma)\alpha(t)$.

\noindent\emph{Theorem assumptions.} Since $f\in C^4$ and $\nabla f$ is
$L$-Lipschitz, $\bm b=-\nabla f$ is $C^3$ and globally Lipschitz;
$\sigma$ is constant, and $D^2\bm b=-D^3f$ has polynomial growth of
order $\kappa$. For quadratic $V$, the derivative-growth bound holds with
$v=1$. Writing
$\bm H_{k+1}=-\nabla f(\bm X_k)-\bm\xi_{k+1}$, the Lipschitz-gradient
bound and the conditional noise moment give

\[
\begin{aligned}
\mathbb E[\|\bm H_{k+1}\|^p\mid\mathcal F_k]
&\leq C_p\left(\|\nabla f(\bm X_k)\|^p
+\mathbb E[\|\bm\xi_{k+1}\|^p\mid\mathcal F_k]\right)\\
&\leq C\polybrac{\bm X_k}^{p}.
\end{aligned}
\]
Thus the primitive oracle moment condition holds. Since
$p=\kappa+3=v+\kappa+2$, Assumption~
\ref{assum:v3-primitive-finite-window} is verified.

Since $\bm x^\star$ is fixed, $1+V(\bm x)\asymp\polybrac{\bm x}^{2}$.
The quadratic function $V$ satisfies the derivative-growth hypotheses of
Lemma~\ref{lem:power-lifting-sde-stability} with $p_0=2$. Moreover,
nonincreasing stepsizes give $\alpha(t)\leq\alpha_0$, so the generator
estimate yields
\[
\mathcal L_tV(\bm x)\leq-2\mu V(\bm x)
+\operatorname{tr}(\Sigma)\alpha_0,
\]
a lower-order Foster--Lyapunov inequality. Set
$\Phi(\bm x)=(1+V(\bm x))^{p/2}$. Lemma~
\ref{lem:power-lifting-sde-stability}, applied with $W=V$ and $p_0=2$,
gives the $p$-order SDE drift inequality when $\alpha(t)$ is sufficiently
small. Since $\alpha_n\to0$, choose $N$ so that this smallness and the
stepsize threshold in Proposition~\ref{prop:foster-lyapunov-stability-transfer}
hold on the tail. The comparison $\Phi(\bm x)\asymp\polybrac{\bm x}^{p}$
shows that the assumed initial moment implies
$\mathbb E\Phi(\bm X_0)<\infty$. Proposition~
\ref{prop:foster-lyapunov-stability-transfer}, whose primitive assumptions
were verified above, then yields the required global moment bound
\[
M_p:=\sup_{k\geq0}\mathbb E\polybrac{\bm X_k}^{p}<\infty.
\]

\noindent\emph{Transfer.} Theorem~\ref{thm:main-rate-transfer} gives

\[
{
\mathbb E\|\bm X_n-\bm x^\star\|^2
\leq e^{-2\mu t_n}\mathbb E\|\bm X_0-\bm x^\star\|^2
+C\int_0^{t_n}e^{-2\mu(t_n-s)}\alpha(s)\,ds.}
\]
Here $C$ absorbs the SDE noise forcing and the finite-window comparison
constant.

\noindent\emph{Rate calculation for power-law stepsizes.}
For $\alpha_k=a/(k+n_0)^\vartheta$ with $0<\vartheta<1$, apply
Corollary~\ref{cor:stepsize-rate-inheritance} with $\lambda=2\mu$ and
$\delta=1$. It gives

\[
\mathbb E\|\bm X_n-\bm x^\star\|^2
=O\!\left(e^{-c(n+n_0)^{1-\vartheta}}
+(n+n_0)^{-\vartheta}\right)
=O(n^{-\vartheta}),
\qquad c=\frac{2\mu a}{1-\vartheta}.
\]
At $\vartheta=1$, the schedule corollary gives the usual three harmonic
regimes according as $2\mu a$ is greater than, equal to, or less than $1$.

\emph{Discussion.} The rate agrees with classical strongly convex SGD
analyses such as Moulines and Bach~\citep{moulines2011nonasymptotic};
Fontaine et al.~\citep{fontaine2021convergence} give a related continuous-time
treatment. Here it follows by transferring the SDE Lyapunov calculation.

\subsection{Stochastic Heavy Ball}
\label{sec:v6-heavy-ball}

We consider the stochastic Heavy Ball recursion
\[
\bm X_{n+1}=\bm X_n+h\bm V_n,\qquad
\bm V_{n+1}=\bm V_n-h\{\eta\bm V_n+\nabla f(\bm X_n)
+\bm\zeta_{n+1}\}.
\]
Let $f\in C^4(\mathbb R^d)$ be $\mu$-strongly convex with
$L$-Lipschitz gradient and bounded third derivative, and set
$\eta=\sqrt{\mu}$. Assume that $\bm\zeta_{n+1}$ is conditionally centred
with covariance
$\sigma(\bm X_n,\bm V_n)\sigma(\bm X_n,\bm V_n)^\top$, where
$\sigma:\mathbb R^{2d}\to\mathbb R^{d\times q}$ is $C^3$ with bounded
first and second derivatives and
$\operatorname{tr}(\sigma\sigma^\top)\leq\varsigma^2$. Suppose
\[
\mathbb E[\|\bm\zeta_{n+1}\|^3\mid\mathcal F_n]
\leq C\{1+\|\bm X_n-\bm x^\star\|^3+\|\bm V_n\|^3\},
\]
and take $h$ sufficiently small.

Write $\bm e=\bm x-\bm x^\star$ and define
\[
W(\bm x,\bm v)
=f(\bm x)-f(\bm x^\star)
+\tfrac12\|\bm v\|^2+\tfrac\eta2\bm e^\top\bm v
+\tfrac\mu4\|\bm e\|^2.
\]
Completing the square and using strong convexity and smoothness give
\[
\begin{aligned}
W(\bm x,\bm v)
&=f(\bm x)-f(\bm x^\star)
+\tfrac12\|\bm v+\tfrac\eta2\bm e\|^2
+\tfrac\mu8\|\bm e\|^2\\
&\geq\tfrac\mu2\|\bm e\|^2+\tfrac14\|\bm v\|^2.
\end{aligned}
\]
Consequently,
\[
1+W(\bm x,\bm v)\asymp\polybrac{(\bm e,\bm v)}^2,\qquad
\|\nabla W(\bm x,\bm v)\|\leq C\polybrac{(\bm e,\bm v)},\qquad
\|D^2W(\bm x,\bm v)\|\leq C.
\]
Assume also the initial moment condition
$\mathbb E(1+W(\bm X_0,\bm V_0))^{3/2}<\infty$.

With $\alpha(t)=h$, this gives the associated SDE
\[
d\bm X_t=\bm V_t\,dt,\qquad
d\bm V_t=-\{\eta\bm V_t+\nabla f(\bm X_t)\}\,dt
+\sqrt{\alpha(t)}\,\sigma(\bm X_t,\bm V_t)\,d\bm B_t.
\]

\noindent\emph{Generator estimate.}
For
$\bm b_{\rm HB}(\bm x,\bm v)
=(\bm v,-\eta\bm v-\nabla f(\bm x))$,
\[
\nabla W(\bm x,\bm v)^\top\bm b_{\rm HB}(\bm x,\bm v)
=-\tfrac\eta2\|\bm v\|^2-\tfrac\eta2\bm e^\top\nabla f(\bm x).
\]
Since $\eta^2=\mu$, the generator calculation gives
\[
\mathcal L_tW+\tfrac\eta2W
\leq-\tfrac\eta8\{\|\bm v\|^2+\|\bm v-\eta\bm e\|^2\}
+\tfrac{\alpha(t)}2\operatorname{tr}(\sigma\sigma^\top).
\]
Thus
\[
\mathcal L_tW
\leq-\lambda_{\rm HB}W+\tfrac12\varsigma^2\alpha(t),
\qquad
\lambda_{\rm HB}=\frac{\sqrt{\mu}}2.
\]

\noindent\emph{Theorem assumptions.}
Set $\bm Y_n=(\bm X_n-\bm x^\star,\bm V_n)$. In these coordinates, the
drift and diffusion factor are
\[
\widetilde{\bm b}_{\rm HB}(\bm e,\bm v)
=(\bm v,-\eta\bm v-\nabla f(\bm x^\star+\bm e)),\qquad
\bm\sigma_{\rm HB}(\bm e,\bm v)
=\begin{pmatrix}
\bm 0_{d\times q}\\
\sigma(\bm x^\star+\bm e,\bm v)
\end{pmatrix}.
\]
The drift is $C^3$ and globally Lipschitz. Bounded $D^3f$ makes its
second derivative bounded, so $\kappa=0$. The diffusion factor is $C^3$,
globally Lipschitz, and has bounded second derivatives. The bounds on
$\nabla W$ and $D^2W$ above give the Lyapunov growth exponent $v=1$.

Let $\bm\varepsilon_{n+1}=(\bm 0,-\bm\zeta_{n+1})$; it is conditionally
centred with covariance $\bm\sigma_{\rm HB}(\bm Y_n)
\bm\sigma_{\rm HB}(\bm Y_n)^\top$. Set
$\bm H_{n+1}=\widetilde{\bm b}_{\rm HB}(\bm Y_n)+\bm\varepsilon_{n+1}$,
so $\bm Y_{n+1}=\bm Y_n+h\bm H_{n+1}$. Linear growth of the drift and
the conditional third-moment bound give
\[
\begin{aligned}
\mathbb E[\|\bm H_{n+1}\|^3\mid\mathcal F_n]
&\leq C\left(\|\widetilde{\bm b}_{\rm HB}(\bm Y_n)\|^3
+\mathbb E[\|\bm\zeta_{n+1}\|^3\mid\mathcal F_n]\right)\\
&\leq C\polybrac{\bm Y_n}^3.
\end{aligned}
\]
Thus $v=1$, $\kappa=0$, and $p=3=v+\kappa+2$, so
Assumption~\ref{assum:v3-primitive-finite-window} holds.

The lower bound on $W$ gives $1+W\asymp\polybrac{\bm Y}^2$, and the
generator estimate gives
$\mathcal L_tW\leq-\lambda_{\rm HB}W+\varsigma^2h_0/2$ uniformly for
$0<h\leq h_0$. The derivative bounds above meet the hypotheses of
Lemma~\ref{lem:power-lifting-sde-stability} with $p_0=2$ and $p=3$.
Set $\Phi=(1+W)^{3/2}$. For sufficiently small $h_0$, that lemma gives
$\mathcal L_t\Phi\leq-c\Phi+C$ uniformly over $0<h\leq h_0$. The initial
moment assumption gives $\mathbb E\Phi(\bm Y_0)<\infty$. Since the primitive
conditions have already been verified, choose $h_0$ below the
stability-transfer threshold and apply
Proposition~\ref{prop:foster-lyapunov-stability-transfer} with $N=0$.
It follows that
\[
\sup_{n\geq0}\mathbb E(1+W(\bm X_n,\bm V_n))^{3/2}<\infty,
\qquad
M_3:=\sup_{n\geq0}\mathbb E\polybrac{\bm Y_n}^3<\infty.
\]

\noindent\emph{Transfer.}
For constant $h$, we have $\alpha(t)=h$, $t_n=nh$,
$\lambda(t)=\lambda_{\rm HB}$, and $g(t)=\varsigma^2h/2$. Theorem~
\ref{thm:main-rate-transfer} gives
\[
\begin{aligned}
\mathbb E W(\bm X_n,\bm V_n)
&\leq e^{-\lambda_{\rm HB}hn}\mathbb E W(\bm X_0,\bm V_0)\\
&\quad+\int_0^{hn}e^{-\lambda_{\rm HB}(hn-s)}
\{\varsigma^2h/2+C_Th\}\,ds\\
&\leq e^{-\lambda_{\rm HB}hn}\mathbb E W(\bm X_0,\bm V_0)+Ch.
\end{aligned}
\]
The constant $C$ is independent of $n$ and can be chosen uniformly over
$0<h\leq h_0$, since the primitive constants and the bound $M_3$ are
uniform over this range. The same energy bounds give
\[
\mathbb E[f(\bm X_n)-f(\bm x^\star)]
\leq\mathbb E W(\bm X_n,\bm V_n),\qquad
\mathbb E\|\bm X_n-\bm x^\star\|^2
\leq\frac2\mu\mathbb E W(\bm X_n,\bm V_n).
\]
Thus the transient is geometric in iteration count, and persistent noise
leaves an $O(h)$ neighbourhood.

\section{Conclusion and extensions}\label{sec:conclusion}

We have shown how a finite-time convergence estimate for an associated
SDE yields a convergence estimate for a stochastic approximation. The method
uses weak comparison only on fixed diffusion-time windows. Restarting the
continuous comparator carries its contraction across successive windows, so
no growing-horizon SDE approximation is needed. The finite-window error
enters the transferred estimate as an additional forcing term.

The two applications do not exhaust the scope of the theorem. The
same proof template can be applied, once its moment and regularity conditions
are verified, e.g. to monotone Robbins--Monro schemes, contractive
fixed-point recursions, stable linear stochastic approximation, and other
optimization recursions with a suitable finite-dimensional SDE Lyapunov
function. There are a host of other applications beyond these.  We omit these further examples to keep the paper focused on the
transfer mechanism.

Other directions require extensions of the theorem rather than only
new applications. These include more general Lyapunov dissipation inequalities,
higher-order stochastic modified equations, path space and hitting time
quantities, Markovian noise, adaptive clocks, and two-timescale recursions.
These questions require additional arguments and are outside the
first-order rate transfer result studied here. The main proof steps are the same however. Again, the focus of this paper is on the transfer mechanism itself, since it contains the core ideas, and we leave these extensions to future work.\\

\noindent \textbf{Acknowledgment.} AI tools were used to assist in the development and checking of mathematical arguments in this paper. The author takes full responsibility for the correctness, originality and presentation of the work.

\bibliographystyle{plainnat}
\bibliography{mybib}
\appendix
\section{Finite-window comparison}\label{sec:appendix-finite-window-comparison}
This appendix establishes the primitive finite-window comparison from the restarted-SDE estimates and the one-step weak comparison.

\subsection{\texorpdfstring{Additional notation and conventions}{Additional notation and conventions}}
\label{sec:appendix-notation}
We retain the notation introduced in
Section~\ref{sec:mathematical-preliminaries} and record here several
additional conventions used in the proofs.

More generally, if $A:(\mathbb R^d)^k\to\mathbb R^m$ is a
$k$-linear map, we write $A[\bm v_1,\ldots,\bm v_k]$ for its evaluation at
$(\bm v_1,\ldots,\bm v_k)$.

For nonnegative quantities $f$ and $g$, we write $f\asymp_p g$ if there
exist constants $0<c_p\leq C_p<\infty$, depending only on $p$, such that
$c_pg\leq f\leq C_pg$.
For \(p\geq0\), the polynomial weight satisfies
\[
\polybrac{\bm x}^p
\asymp_p
1+\norm{\bm x}_2^p.
\]
We repeatedly use
\[
\polybrac{\bm x}^r\polybrac{\bm x}^s
=
\polybrac{\bm x}^{r+s},
\qquad r,s\geq0,
\]
\nopagebreak[4]
and
\nopagebreak[4]
\begin{equation}\label{eq:polybrac-translation}
\polybrac{\bm x+\bm y}^r
\leq
C_r\left\{
\polybrac{\bm x}^r+\norm{\bm y}_2^r
\right\},
\qquad r\geq0.
\end{equation}
Since \(\polybrac{\bm x}\geq1\), if \(0\leq r\leq s\) then
\(\polybrac{\bm x}^r\leq\polybrac{\bm x}^s\).

Correspondingly, $f=O_T(g)$ means that
$|f|\leq C_Tg$ with this same convention.

The symbols $\gamma,\gamma_1,\gamma_2,\ldots$ denote finite nonnegative
exponents whose values are local to the statement or argument in which they
are introduced. Their reuse elsewhere in the paper does not imply that they
have the same value.

Here and below, an empty product is understood to equal $1$.

\subsection{First-order local-to-window approximation}
\label{sec:first-order-local-to-window}

The transfer theorem, Theorem~\ref{thm:finite-window-sde-rate-transfer}, takes two components: the Lyapunov bound \eqref{eq:finite-window-sde-generator} and the finite window approximation \eqref{eq:finite-window-sde-approximation}, and transfers them to a finite time bound for the stochastic approximation.

The Lyapunov bound is a generator inequality for the associated SDE. When an appropriate Lyapunov function is available, it can typically be checked directly by standard calculus from the drift and diffusion coefficients. In contrast, the finite window approximation is a more abstract condition describing the relationship between the stochastic approximation and the SDE over a fixed diffusion time window, and, at first glance, it may not appear as immediately checkable from the model specification.
Here we show how the finite window approximation can be verified from basic regularity and moment conditions on the stochastic approximation and SDE. The precise conditions are given below.
Throughout this subsection we retain the stochastic approximation and
associated-SDE setup of Sections~\ref{sec:v3-sa} and
\ref{sec:v3-associated-sde}, together with the polynomial weight notation of
Section~\ref{sec:mathematical-preliminaries}.

\subsubsection{Primitive finite-window verification}
\label{sec:primitive-finite-window-proof}

\begin{remark}[On the global Lipschitz assumption]
\label{rem:v3-global-lipschitz}
The global Lipschitz condition, together with
$\alpha_{\max}<\infty$, also ensures that the associated SDE is well posed
for every restart. Indeed, global Lipschitzness gives linear growth, and the
time-inhomogeneous diffusion coefficient
$\sqrt{\alpha(t)}\,\sigma(\bm x)$ is uniformly Lipschitz in $\bm x$ and has
linear growth. The standard existence and uniqueness theorem for SDEs
therefore gives a unique nonexplosive strong solution on every finite
interval; see \citet[Theorem~5.2.1]{oksendal2003stochastic}.

The global Lipschitz condition is a convenient sufficient condition rather
than a minimal one. It gives uniformly bounded linear coefficients in the
first variational equation and places the later stochastic-flow estimates in
their standard setting. We do not pursue weaker local conditions here.
Likewise, the linear growth used in the one-step comparison below is
convenient rather than essential: polynomially growing coefficients can be
handled under nonexplosion and sufficiently strong fixed-window polynomial
moment bounds.
\end{remark}

\subsubsection{Primitive finite-window comparison}
\label{sec:first-order-transfer-rates}

The body theorem gives the finite-window comparison directly from the
primitive assumptions. This appendix proves that comparison and then applies
the abstract restart result to obtain the global conclusion. The proof of the
primitive comparison is given in Section~\ref{sec:v3-first-order-window-proof}.

The proof is given at the end of
Section~\ref{sec:v3-first-order-window-proof}. This proposition supplies the
finite-window component of Theorem~\ref{thm:main-rate-transfer}; its global
conclusion follows by applying the abstract SDE transfer theorem under the
stated Lyapunov bound.

{The proof of this result is also given in
Section~\ref{sec:v3-first-order-window-proof}.}
\subsubsection{Proof of the finite window approximation}
\label{sec:v3-first-order-window-proof}

The objective of this subsection is to prove
Proposition~\ref{prop:v3-primitive-window-verification}, which establishes the
finite-window approximation \eqref{eq:finite-window-sde-approximation} required
by Theorem~\ref{thm:finite-window-sde-rate-transfer} under the standing
assumptions.

The proof proceeds through a sequence of results with distinct roles. The
Taylor lemma~\ref{lem:window-two-bound-taylor} gives the
stochastic-approximation one-step remainder for
\[
u_{k,n}(\bm x)=P_{t_k,t_n}V(\bm x)
=\mathbb E\!\left[V\!\left(\bm Z_{t_n}^{t_k,\bm x}\right)\right],
\]
provided that the backward function has polynomial gradient and Hessian
bounds. The target bounds are
\[
\|\nabla P_{s,t}V(\bm x)\|
\leq C_S\polybrac{\bm x}^{v},
\qquad
\|\nabla^2P_{s,t}V(\bm x)\|
\leq C_S\polybrac{\bm x}^{v+\kappa}.
\]
Thus the Taylor lemma will be applied with
$\gamma_1=v$ and $\gamma_2=v+\kappa$.

These derivative bounds require sensitivity of the restarted SDE to its
initial state. The stochastic-flow derivatives $J$ and $K$ satisfy linear
stochastic variational equations, and
Lemma~\ref{lem:window-linear-energy} is the general estimate used to control
those equations. The state moments are separated into their own lemma because
they have three distinct roles. First, the second-variation forcing contains
$D^2\bm b(\bm Z_r)$ and $D^2\sigma_j(\bm Z_r)$, whose growth contributes the
factor $\polybrac{\bm Z_r}^{\kappa}$. Second, differentiating the semigroup
introduces $\nabla V(\bm Z_t)$ and $\nabla^2V(\bm Z_t)$, whose growth
contributes $\polybrac{\bm Z_t}^{v}$.
The state moments also control the displacement and polynomial
weights in Lemma~\ref{lem:v3-sde-one-step-expansion}.

The state-moment lemma supplies the SDE moment input, and the
spatial-variation lemma gives the $J$ and $K$ moments and hence the bounds on
$\nabla P_{s,t}V$ and $\nabla^2P_{s,t}V$.
Lemma~\ref{lem:v3-fixed-window-dynkin} justifies Dynkin's formula for
functions with polynomially growing derivatives. The state moments and
Dynkin formula give Lemma~\ref{lem:v3-sde-one-step-expansion}, while the
Taylor consequence \eqref{eq:window-taylor-conditional-p-moment-bound}
provides the stochastic-approximation estimate under the oracle moment
assumption.
Proposition~\ref{prop:v3-primitive-window-verification} applies both
one-step approximations to the backward function. Comparing their common
first-order expression gives an $O(\alpha_k^2)$ defect, and summing these
conditional defects over the diffusion-time window gives the required
finite-window comparison.

The stochastic-flow argument is carried out in the standard setting in which
$\bm b$ and $\sigma$ are globally Lipschitz and $C^3$, with polynomial growth
allowed for their second derivatives. The extra derivative gives a $C^2$
solution flow, while the quantitative estimates below use only first- and
second-derivative bounds. More general local or one-sided
conditions are possible, but concern the regularity theory of the associated
SDE rather than the finite-window comparison mechanism considered here. We do
not pursue such extensions.

The stochastic-flow references used below are Kunita's standard treatment,
\citet[Section~4.6]{kunita1990stochastic}, and Varadhan's parameter
differentiation argument, \citet[Chapter~3]{varadhan2006stochastic}.


The $p$-moment consequence
\eqref{eq:window-taylor-conditional-p-moment-bound} supplies the
stochastic-approximation one-step estimate once the first two derivatives
of the backward function are controlled. We now develop the corresponding
SDE bounds. The spatial derivatives of
the restarted SDE satisfy linear stochastic variation equations,
and the following energy estimate is the tool used to control them.

\begin{lemma}[An inhomogeneous linear energy estimate]
\label{lem:window-linear-energy}
Consider an adapted $\mathbb R^d$-valued process $\bm Y$ satisfying

\begin{equation}
\label{eq:window-linear-energy-sde}
 d\bm Y_t=(A_t\bm Y_t+\bm f_t)\,dt
 +\sum_j(\Sigma_{j,t}\bm Y_t+\bm g_{j,t})\,dB_t^j,
\end{equation}
where $A_t,\Sigma_{j,t}\in\mathbb R^{d\times d}$ and
$\bm f_t,\bm g_{j,t}\in\mathbb R^d$ are progressively measurable
coefficient processes, and $B^j$ are real-valued Brownian motions.
For matrices, $\|\cdot\|$ denotes the Euclidean operator norm.
Suppose, almost surely,
\begin{equation}
\label{eq:window-linear-energy-coefficient-bounds}
 \langle \bm y,A_t\bm y\rangle\leq L\|\bm y\|^2
 \quad\text{for }\bm y\in\mathbb R^d,\qquad
 \sum_j\|\Sigma_{j,t}\|^2\leq L_1^2.
\end{equation}

Then, for every $p\geq2$ and every fixed $S>0$, whenever $t-s\leq S$,
\begin{equation}
\label{eq:window-linear-energy-estimate}
 \mathbb{E}\sup_{s\leq r\leq t}\|\bm Y_r\|^p
 \leq C^Y_{p,S}\left\{
 \mathbb{E}\|\bm Y_s\|^p+
 \int_s^t\mathbb{E}\left[\|\bm f_r\|^p+
             \left(\sum_j\|\bm g_{j,r}\|^2\right)^{p/2}\right]dr
 \right\}.
\end{equation}

Here $C^Y_{p,S}$ may depend on the fixed constants $L$ and $L_1$, but not on
an operator-norm bound for $A_t$.
\end{lemma}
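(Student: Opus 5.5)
Apply Itô's formula to $\phi(\bm y)=(\epsilon+\|\bm y\|^2)^{p/2}$, or simply to $\|\bm Y_r\|^p$ since $p\geq2$ makes it $C^2$. Combine this with localization by stopping times $\tau_N=\inf\{r\geq s:\|\bm Y_r\|\geq N\}$, so that all stochastic integrals are true martingales and all expectations are finite. Writing $\bm G_r$ for the matrix with columns $\Sigma_{j,r}\bm Y_r+\bm g_{j,r}$, Itô gives
\[
d\|\bm Y_r\|^p=p\|\bm Y_r\|^{p-2}\langle\bm Y_r,A_r\bm Y_r+\bm f_r\rangle\,dr+\tfrac p2\|\bm Y_r\|^{p-2}\|\bm G_r\|_F^2\,dr+\tfrac{p(p-2)}2\|\bm Y_r\|^{p-4}\|\bm G_r^\top\bm Y_r\|^2\,dr+dM_r .
\]
Here $dM_r=p\|\bm Y_r\|^{p-2}\sum_j\langle\bm Y_r,\Sigma_{j,r}\bm Y_r+\bm g_{j,r}\rangle\,dB^j_r$. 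The key point, which is why no operator-norm bound on $A_t$ is needed, is that $A_r$ appears only through the quadratic form $\langle\bm Y_r,A_r\bm Y_r\rangle\leq L\|\bm Y_r\|^2$. The drift term is therefore bounded by $pL\|\bm Y_r\|^p$. The remaining terms are handled with $\|\bm G_r\|_F^2\leq2L_1^2\|\bm Y_r\|^2+2\sum_j\|\bm g_{j,r}\|^2$ and Young's inequality in the forms $\|\bm Y\|^{p-1}\|\bm f\|\leq\frac{p-1}p\|\bm Y\|^p+\frac1p\|\bm f\|^p$ and $\|\bm Y\|^{p-2}a^2\leq\frac{p-2}p\|\bm Y\|^p+\frac2p a^p$. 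Together these give a drift bound
\[
C_{p,L,L_1}\Bigl(\|\bm Y_r\|^p+\|\bm f_r\|^p+\bigl(\textstyle\sum_j\|\bm g_{j,r}\|^2\bigr)^{p/2}\Bigr).
\]

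First I would derive the non-supremum bound. Taking expectations of the stopped identity gives $\E\|\bm Y_{r\wedge\tau_N}\|^p\leq\E\|\bm Y_s\|^p+C\int_s^r\E[\|\bm Y_{u\wedge\tau_N}\|^p+F_u]\,du$, where $F_u$ denotes the forcing integrand. Grönwall's inequality on $[s,s+S]$ then gives $\sup_r\E\|\bm Y_{r\wedge\tau_N}\|^p\leq e^{CS}\{\E\|\bm Y_s\|^p+\int_s^t\E F_u\,du\}$, and Fatou's lemma removes $N$. This needs $\E\|\bm Y_s\|^p<\infty$, and we may assume the right-hand side of \eqref{eq:window-linear-energy-estimate} is finite, since otherwise there is nothing to prove.

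Next I would upgrade to the supremum by taking $\sup_{s\leq r\leq t\wedge\tau_N}$ in the integrated identity. The drift contributions are bounded by $C\int_s^t(\|\bm Y_u\|^p+F_u)\,du$, whose expectation is controlled by the previous step. For the martingale I would use Burkholder--Davis--Gundy:
\[
\E\sup|M|\leq C\,\E\Bigl(\int\|\bm Y_u\|^{2p-2}\|\bm G_u\|_F^2\,du\Bigr)^{1/2}\leq C\,\E\Bigl[\sup_u\|\bm Y_u\|^{p/2}\Bigl(\int\|\bm Y_u\|^{p-2}\|\bm G_u\|_F^2\,du\Bigr)^{1/2}\Bigr].
\]
Applying $ab\leq\frac12a^2+\frac12b^2$ bounds this by $\frac12\E\sup\|\bm Y\|^p+C\int\E[\|\bm Y_u\|^p+F_u]\,du$. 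The stopped supremum is finite by localization, so the half can be absorbed into the left-hand side. Letting $N\to\infty$ by monotone convergence yields \eqref{eq:window-linear-energy-estimate} with a constant $C^Y_{p,S}$ depending only on $p,S,L,L_1$.

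The main obstacle is this BDG absorption step. One has to make sure the localization justifies subtracting $\frac12\E\sup\|\bm Y\|^p$, and that the estimate uses only the one-sided bound on $A_t$. Since $A_t$ enters only the $dr$-term through $\langle\bm Y,A\bm Y\rangle$, never the martingale or the quadratic variation, the second requirement holds. A minor technical point is that $\|\bm y\|^{p-4}$ appears when $2\leq p<4$. I would handle this either by using $\epsilon+\|\bm y\|^2$ and letting $\epsilon\downarrow0$, or by noting that $\|\bm Y\|^{p-4}\|\bm G^\top\bm Y\|^2\leq\|\bm Y\|^{p-2}\|\bm G\|_F^2$, which removes the singularity.
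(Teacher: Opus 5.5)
Your proposal is correct and follows essentially the same route as the paper. Both arguments use Itô's formula for $\|\bm Y\|^p$ under state localization, invoke $A_t$ only through the one-sided bound in the drift, apply Young's inequality, use BDG with absorption of $\frac12\mathbb E\sup\|\bm Y\|^p$ and Grönwall, and then remove the stopping; the only difference is ordering, since you first close Grönwall on fixed-time moments and then upgrade to the supremum, whereas the paper runs Grönwall directly on $\mathbb E\sup_{u\le r}\|\bm Y_{u\wedge\tau_n}\|^p$ and explicitly checks via $\mathbb E\langle M\rangle_{t\wedge\tau_n}<\infty$ that the stopped integral is a true martingale (a step your version also needs, using the finite-right-hand-side assumption to control $\int_s^t\mathbb E\sum_j\|\bm g_{j,r}\|^2\,dr$).
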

\Beginproof{}
The argument here is a relatively standard stochastic calculus argument.
To prove the lemma, we first introduce a state-localizing sequence and apply
It\^o's formula to the stopped process. We then bound the finite-variation and
quadratic-variation terms; the latter shows that the stopped stochastic
integral is a true martingale. We use the Burkholder--Davis--Gundy inequality
and Gronwall's inequality, and finally remove the stopping by Fatou's lemma.

We define
\[
G_r:=\Big(\sum_j\|\bm g_{j,r}\|^2\Big)^{1/2},
\qquad
\bm Z_{j,r}:=\Sigma_{j,r}\bm Y_r+\bm g_{j,r}.
\]
Thus $\bm f_r$ is the part of the drift independent of $\bm Y_r$, while
$\bm g_{j,r}$ are the corresponding parts of the diffusion coefficients, and
$G_r$ records their combined Euclidean size. We can assume that

$\mathbb E\|\bm Y_s\|^p<\infty$, $\int_s^t\mathbb E\|\bm f_r\|^p\,dr<\infty$,
and $\int_s^t\mathbb E G_r^p\,dr<\infty$. If any of these quantities were
infinite, the required bound, \eqref{eq:window-linear-energy-estimate}, would
be immediate. Define, for $n\geq1$,
\[
\tau_n:=\inf\{u\in[s,t]:\|\bm Y_u\|\geq n\}\wedge t.
\]

We use this state-localizing sequence to justify the martingale estimates
below. Since $\bm Y$ has continuous paths on the compact interval $[s,t]$,
$\sup_{s\leq r\leq t}\|\bm Y_r\|<\infty$ almost surely, so $\tau_n=t$ for all
sufficiently large $n$ almost surely, and hence $\tau_n\uparrow t$ almost
surely.

For
$u\in[s,t]$, It\^o's
formula gives

\begin{align}
\|\bm Y_{u\wedge\tau_n}\|^p
={}&\|\bm Y_s\|^p
\notag\\
&+p\int_s^{u\wedge\tau_n}
 \|\bm Y_r\|^{p-2}
 \langle \bm Y_r,A_r\bm Y_r+\bm f_r\rangle\,dr
\notag\\
&+\frac p2\int_s^{u\wedge\tau_n}
 \|\bm Y_r\|^{p-2}\sum_j\|\bm Z_{j,r}\|^2\,dr
\notag\\
&+\frac{p(p-2)}2
 \int_s^{u\wedge\tau_n}
 \|\bm Y_r\|^{p-4}\sum_j\langle \bm Y_r,\bm Z_{j,r}\rangle^2\,dr
\notag\\
&+p\sum_j\int_s^{u\wedge\tau_n}
 \|\bm Y_r\|^{p-2}
 \langle \bm Y_r,\bm Z_{j,r}\rangle\,dB_r^j,
\label{eq:window-linear-energy-ito}
\end{align}
For $u\in[s,t]$, write $M_u$ for the same stochastic integral with upper
limit $u$. Thus the final term above is $M_{u\wedge\tau_n}$, and $M$ is the
local-martingale term produced by It\^o's formula.
For $p=2$, the term multiplied by $p-2$ is interpreted as zero.

We now bound the finite variation and the martingale terms above.
The
finite variation integrand in \eqref{eq:window-linear-energy-ito} satisfies

\begin{align*}
& p\|\bm Y_r\|^{p-2}\langle \bm Y_r,A_r\bm Y_r+\bm f_r\rangle
 +\frac p2\|\bm Y_r\|^{p-2}\sum_j\|\bm Z_{j,r}\|^2
 +\frac{p(p-2)}2\|\bm Y_r\|^{p-4}
   \sum_j\langle \bm Y_r,\bm Z_{j,r}\rangle^2
\notag\\
&\leq pL\|\bm Y_r\|^p+p\|\bm Y_r\|^{p-1}\|\bm f_r\|
 +\frac{p(p-1)}2\|\bm Y_r\|^{p-2}\sum_j\|\bm Z_{j,r}\|^2
\\
&\leq pL\|\bm Y_r\|^p+(p-1)\|\bm Y_r\|^p+\|\bm f_r\|^p
 +\frac{p(p-1)}2\|\bm Y_r\|^{p-2}
   \left(2L_1^2\|\bm Y_r\|^2+2G_r^2\right)
\\
&\leq C_1
 \left\{\|\bm Y_r\|^p+\|\bm f_r\|^p+G_r^p\right\}.
\end{align*}
Above, we apply the coefficient bounds in
\eqref{eq:window-linear-energy-coefficient-bounds} and the Cauchy--Schwarz inequality. We also use the inequality:

\begin{equation}
\label{eq:window-linear-energy-Z-bound}
\sum_j\|\bm Z_{j,r}\|^2
\le 2\sum_j\|\Sigma_{j,r}\bm Y_r\|^2+2\sum_j\|\bm g_{j,r}\|^2
\le 2L_1^2\|\bm Y_r\|^2+2G_r^2.
\end{equation}
In the last line, we use Young's inequality as follows:
\[
p\|\bm Y_r\|^{p-1}\|\bm f_r\|
\leq(p-1)\|\bm Y_r\|^p+\|\bm f_r\|^p,
\quad\text{and}\quad
\|\bm Y_r\|^{p-2}G_r^2
\leq
\frac{p-2}{p}\|\bm Y_r\|^p
+\frac{2}{p}G_r^p.
\]
This gives the above finite-variation bound.

The resulting finite-variation estimate following from
\eqref{eq:window-linear-energy-ito} is
\begin{align}
\|\bm Y_{u\wedge\tau_n}\|^p
\leq{}&\|\bm Y_s\|^p
+C_1\int_s^{u\wedge\tau_n}
 \{\|\bm Y_r\|^p+\|\bm f_r\|^p+G_r^p\}\,dr
+M_{u\wedge\tau_n},
\label{eq:window-linear-energy-finite-variation}
\end{align}

where $C_1$ depends only on $p,L,L_1$.

It remains to justify the martingale term. We first record a
quadratic-variation bound that will be used twice: first to show that the
stopped local martingale is square-integrable, and later in the BDG estimate.

Define
\[
Y_n^*(r):=\sup_{s\leq v\leq r}\|\bm Y_{v\wedge\tau_n}\|^p.
\]
By continuity and the definition of $\tau_n$, $Y_n^*(r)\leq
n^p\vee\|\bm Y_s\|^p$, and hence $\mathbb E Y_n^*(r)<\infty$.
For $r\in[s,t]$, the pathwise quadratic variation satisfies
\begin{align}
\langle M\rangle_{r\wedge\tau_n}
&=p^2\int_s^{r\wedge\tau_n}
 \|\bm Y_q\|^{2p-4}
 \sum_j\langle\bm Y_q,\bm Z_{j,q}\rangle^2\,dq
\notag\\
&\leq p^2\int_s^{r\wedge\tau_n}
 \|\bm Y_q\|^{2p-2}
 \sum_j\|\bm Z_{j,q}\|^2\,dq
\notag\\
&\leq C_2\int_s^{r\wedge\tau_n}
 \left\{\|\bm Y_q\|^{2p}
 +\|\bm Y_q\|^{2p-2}G_q^2\right\}\,dq
\notag\\
&\leq C_2
 \bigl(Y_n^*(r)\wedge n^p\bigr)
 \int_s^{r\wedge\tau_n}
 \left\{\|\bm Y_q\|^p
 +\|\bm Y_q\|^{p-2}G_q^2\right\}\,dq.
\label{eq:window-linear-energy-qv}
\end{align}

The first inequality is Cauchy--Schwarz and the second uses
\eqref{eq:window-linear-energy-Z-bound}. For $q<\tau_n$,
$\|\bm Y_q\|^p\leq n^p$, while
$\|\bm Y_q\|^p\leq Y_n^*(r)$, which gives the final line. The
hitting-time endpoint is irrelevant to the time integral.

For the localization, we now use the crude bounds supplied by $\tau_n$:

\begin{align*}
\mathbb E\langle M\rangle_{t\wedge\tau_n}
&\leq C_2
\mathbb E\left[
\bigl(Y_n^*(t)\wedge n^p\bigr)
\int_s^{t\wedge\tau_n}
 \left\{\|\bm Y_r\|^p
 +\|\bm Y_r\|^{p-2}G_r^2\right\}\,dr
\right]\\
&\leq C_2
 \left\{Sn^{2p}
 +n^{2p-2}\mathbb E\int_s^tG_r^2\,dr\right\}\\
&\leq C_2
 \left\{Sn^{2p}
 +n^{2p-2}S^{1-2/p}
\mathbb E\left[
\left(\int_s^tG_r^p\,dr\right)^{2/p}
\right]\right\}\\
&\leq C_2
 \left\{Sn^{2p}
 +n^{2p-2}S^{1-2/p}
\left(\int_s^t\mathbb E G_r^p\,dr\right)^{2/p}\right\}\\
&<\infty.
\end{align*}
The second line uses the localization bounds
$Y_n^*(t)\wedge n^p\leq n^p$ and $\|\bm Y_r\|\leq n$ before $\tau_n$.
The next inequality uses Hölder's inequality in time together with
$t-s\leq S$, and the following one uses Jensen's inequality in probability
and Tonelli's theorem. The finite-right-hand-side assumption gives
$\int_s^t\mathbb E G_r^p\,dr<\infty$. Therefore
$M_{\cdot\wedge\tau_n}$ is a square-integrable martingale, hence a true
martingale.

By the Burkholder--Davis--Gundy (BDG) inequality and Young's inequality,

the complete estimate is
\begin{align}
\mathbb E\sup_{s\leq u\leq r}|M_{u\wedge\tau_n}|
\leq{}& C_p\mathbb E\langle M\rangle_{r\wedge\tau_n}^{1/2}\notag\\
\leq{}& C_3\mathbb E\left[
\bigl(Y_n^*(r)\wedge n^p\bigr)^{1/2}
\left(\int_s^{r\wedge\tau_n}
 \left\{\|\bm Y_q\|^p
 +\|\bm Y_q\|^{p-2}G_q^2\right\}\,dq
\right)^{1/2}
\right]\notag\\
\leq{}& C_3\mathbb E\left[
\bigl(Y_n^*(r)\bigr)^{1/2}
\left(\int_s^{r\wedge\tau_n}
 \left\{\|\bm Y_q\|^p
 +\|\bm Y_q\|^{p-2}G_q^2\right\}\,dq
\right)^{1/2}
\right]\notag\\
\leq{}&\frac12\mathbb E Y_n^*(r)
+C_4\int_s^r
\left\{\mathbb E Y_n^*(q)+\mathbb E G_q^p\right\}\,dq.
\label{eq:window-linear-energy-bdg}
\end{align}
The first line is the Burkholder--Davis--Gundy inequality, and the next uses
\eqref{eq:window-linear-energy-qv}. We then use
$Y_n^*(r)\wedge n^p\leq Y_n^*(r)$. The final line follows from Young's
inequality in the form
$C_3a^{1/2}b^{1/2}\leq \frac12a+C_4b$ for a new constant $C_4$, together
with the previously displayed bound on $\|\bm Y_q\|^{p-2}G_q^2$.

Finally, set

\[
\Phi_n(r):=\mathbb E Y_n^*(r)
=\mathbb E\sup_{s\leq u\leq r}\|\bm Y_{u\wedge\tau_n}\|^p.
\]
For $q\leq r\wedge\tau_n$, $\|\bm Y_q\|^p\leq Y_n^*(q)$, while the
remaining integrands are nonnegative. Hence, taking the supremum over
$u\leq r$ in \eqref{eq:window-linear-energy-finite-variation}, extending the
integrals to $[s,r]$, and taking expectations gives
\begin{align*}
\Phi_n(r)
&\leq \mathbb E\|\bm Y_s\|^p
 +C_1\int_s^r\mathbb E[\|\bm f_q\|^p+G_q^p]\,dq
 +C_1\int_s^r\Phi_n(q)\,dq\\
&\qquad +\mathbb E\sup_{s\leq u\leq r}|M_{u\wedge\tau_n}|\\
&\leq{} \mathbb E\|\bm Y_s\|^p
 +C_1\int_s^r\mathbb E[\|\bm f_q\|^p+G_q^p]\,dq
 +C_1\int_s^r\Phi_n(q)\,dq
 \\
 & \qquad +\frac12\Phi_n(r)
 +C_4\int_s^r
 \{\Phi_n(q)+\mathbb E G_q^p\}\,dq\\
&\leq{} \frac12\Phi_n(r)
 +C_5\mathbb E\|\bm Y_s\|^p
 +C_5\int_s^r
 \left\{\mathbb E\|\bm f_q\|^p+\mathbb E G_q^p+\Phi_n(q)\right\}\,dq\\
&\leq{} C_6\left\{
 \mathbb E\|\bm Y_s\|^p
 +\int_s^r\mathbb E[\|\bm f_q\|^p+G_q^p]\,dq
 \right\}
 +C_6\int_s^r\Phi_n(q)\,dq.
\end{align*}

The second inequality uses \eqref{eq:window-linear-energy-bdg} together with
$\Phi_n(q)=\mathbb E Y_n^*(q)$. The next line collects the $C_1$ and $C_4$
terms into a new constant $C_5$. Absorbing $\frac12\Phi_n(r)$ into the
left-hand side gives the Gronwall inequality with a new constant $C_6$.
Since
$r-s\leq S$, Gronwall's inequality gives, uniformly in $n$,
\[
\Phi_n(t)
\leq C^Y_{p,S}\left\{
\mathbb E\|\bm Y_s\|^p+
\int_s^t\mathbb E[\|\bm f_q\|^p+G_q^p]\,dq
\right\}.
\]
The window-length assumption is used here to absorb the exponential Gronwall
factor into the constant $C^Y_{p,S}$.

The constants $C_1,\ldots,C_6$ depend only on $p,L,L_1$ and are independent
of $n,s,t$. After Gronwall on an interval of length at most $S$, these
constants and the exponential Gronwall factor are absorbed into
$C^Y_{p,S}$.

It remains to remove the stopping. Since $\tau_n\uparrow t$ almost surely
and $\bm Y$ is continuous,

\[
\sup_{s\leq r\leq t}\|\bm Y_{r\wedge\tau_n}\|^p
\longrightarrow
\sup_{s\leq r\leq t}\|\bm Y_r\|^p
\quad\text{almost surely}.
\]
Fatou's lemma and the uniform bound above yield
\begin{align*}
\mathbb E\sup_{s\leq r\leq t}\|\bm Y_r\|^p
&\leq\liminf_{n\to\infty}
\mathbb E\sup_{s\leq r\leq t}\|\bm Y_{r\wedge\tau_n}\|^p\\
&\leq C^Y_{p,S}\left\{
\mathbb E\|\bm Y_s\|^p+
\int_s^t\mathbb E[\|\bm f_r\|^p+G_r^p]\,dr
\right\}.
\end{align*}

The preceding display is exactly
\eqref{eq:window-linear-energy-estimate}. Since $A_t$ enters only through
the first inequality in
\eqref{eq:window-linear-energy-coefficient-bounds}, no operator-norm bound
on $A_t$ is required.
\Endproof

We use that continuous adapted processes, and continuous functions of them,
are progressively measurable; multiplication by deterministic Borel
functions such as $\sqrt{\alpha(t)}$ preserves this property.

The linear energy estimate will first be applied to the restarted state
itself. Uniform control over the entire fixed window is needed for three
reasons. Differentiating $P_{s,t}V$ introduces polynomial factors involving
$\polybrac{\bm Z_t^{s,\bm x}}$ for arbitrary $t\leq s+S$. The second
spatial-variation equation contains the random factor
$\polybrac{\bm Z_r}^{\kappa}$ at varying integration times $r$, and the
later applications of H\"older's inequality require higher polynomial
moment orders. The path-supremum estimate in the following lemma supplies
all of these fixed-time controls at once.

The same linear estimate is then applied to the first and second spatial
variations. Their moment bounds, together with the state moments, yield
\[
\|\nabla P_{s,t}V(\bm x)\|
\leq C_S\polybrac{\bm x}^{v},
\qquad
\|\nabla^2P_{s,t}V(\bm x)\|
\leq C_S\polybrac{\bm x}^{v+\kappa}.
\]
Since $\kappa\geq0$, these are precisely the derivative-growth hypotheses
needed in Lemma~\ref{lem:window-two-bound-taylor}, with
$\gamma_1=v$ and $\gamma_2=v+\kappa$.

\begin{lemma}[Fixed-window moments of the restarted SDE]
\label{lem:v3-restarted-state-moments}

Suppose $\bm b\in C^1(\mathbb R^d;\mathbb R^d)$ and
$\sigma\in C^1(\mathbb R^d;\mathbb R^{d\times q})$ satisfy the global
Lipschitz condition
\eqref{eq:v3-primitive-global-lipschitz}, and suppose
$\alpha_{\max}<\infty$. Then, for every fixed $S>0$ and every finite $p>0$,
\begin{equation}
\mathbb E\sup_{s\leq r\leq s+S}
\polybrac{\bm Z_r^{s,\bm x}}^p
\leq
C^Z_{p,S}\polybrac{\bm x}^p.
\label{eq:v3-restarted-state-moments}
\end{equation}
Here $C^Z_{p,S}$ may depend on $(p,S)$ and the fixed coefficient data, but
not on $s$ or $\bm x$.
\end{lemma}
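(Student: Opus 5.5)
The plan is to reduce to $p\geq2$ and apply Lemma~\ref{lem:window-linear-energy} directly to the restarted state $\bm Y_r=\bm Z_r^{s,\bm x}$, viewed as an affine linear SDE. For $p<2$, Jensen's inequality gives
\[
\mathbb E\sup\polybrac{\cdot}^p\leq(\mathbb E\sup\polybrac{\cdot}^2)^{p/2}\leq (C^Z_{2,S})^{p/2}\polybrac{\bm x}^p,
\]
so it suffices to treat $p\geq2$. Since $\polybrac{\bm z}^p\asymp_p1+\|\bm z\|^p$, it is also enough to bound $\mathbb E\sup_{s\leq r\leq s+S}\|\bm Z_r^{s,\bm x}\|^p$ by $C\polybrac{\bm x}^p$.

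To put the SDE in the form of Lemma~\ref{lem:window-linear-energy}, write $\bm b(\bm z)=\bm b(\bm 0)+\{\bm b(\bm z)-\bm b(\bm 0)\}$. The difference equals $A_r\bm z$ with
\[
A_r=\int_0^1 D\bm b(\theta\bm Z_r)\,d\theta
\]
evaluated along the path $\bm z=\bm Z_r$. Similarly, for each column $j$, set $\Sigma_{j,r}=\sqrt{\alpha(r)}\int_0^1D\sigma_j(\theta\bm Z_r)\,d\theta$ and $\bm g_{j,r}=\sqrt{\alpha(r)}\sigma_j(\bm 0)$, with $\bm f_r=\bm b(\bm 0)$. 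These coefficient processes are progressively measurable: they are continuous functions of the continuous adapted path, multiplied by the deterministic Borel function $\sqrt{\alpha}$, as remarked after Lemma~\ref{lem:window-linear-energy}.

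The global Lipschitz condition gives $\sup\|D\bm b\|\leq L_b$. Hence $\langle\bm y,A_r\bm y\rangle\leq L_b\|\bm y\|^2$, and
\[
\sum_j\|\Sigma_{j,r}\|^2\leq\alpha_{\max}\sup_{\bm z}\sum_j\|D\sigma_j(\bm z)\|^2\leq \alpha_{\max}L_\sigma^2 ,
\]
where the last step uses $\sum_j\|D\sigma_j\|^2\leq\sum_j\|D\sigma_j\|_F^2=\|D\sigma\|_F^2\leq L_\sigma^2$. Both constants are independent of $s$ and $\bm x$.

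Lemma~\ref{lem:window-linear-energy} with deterministic initial value $\bm Y_s=\bm x$ then yields
\[
\mathbb E\sup_{s\leq r\leq s+S}\|\bm Z_r^{s,\bm x}\|^p\leq C^Y_{p,S}\Big\{\|\bm x\|^p+S\|\bm b(\bm 0)\|^p+S\,\alpha_{\max}^{p/2}\|\sigma(\bm 0)\|_F^p\Big\}\leq C\polybrac{\bm x}^p.
\]
Here I used $1\leq\polybrac{\bm x}^p$ to absorb the constant forcing terms. Converting back to $\polybrac{\cdot}^p$ via $\polybrac{\bm z}^p\leq C_p(1+\|\bm z\|^p)$ gives \eqref{eq:v3-restarted-state-moments}, with a constant depending only on $p,S,L_b,L_\sigma,\alpha_{\max},\bm b(\bm 0),\sigma(\bm 0)$.

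The only delicate point is the hypothesis structure of Lemma~\ref{lem:window-linear-energy}. It is stated for an adapted process already solving the linear equation, and it allows random coefficients. I therefore need the strong solution to exist (Remark~\ref{rem:v3-global-lipschitz}) and the mean-value representation to hold pathwise; both are immediate for $C^1$ coefficients. The localization inside that lemma handles integrability, so no a priori moment bound on $\bm Z$ is required. The uniformity in $s$ holds because every bound above depends on $s$ only through the window length $S$.
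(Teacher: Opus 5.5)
Your proposal is correct and follows the paper's proof essentially step for step. Both use the same mean-value affine-linear rewriting with $A_r$, $\Sigma_{j,r}$, $\bm g_{j,r}=\sqrt{\alpha(r)}\,\sigma_j(\bm 0)$ and $\bm f_r=\bm b(\bm 0)$, then apply Lemma~\ref{lem:window-linear-energy} for $p\geq2$ and Jensen's inequality for $0<p<2$.

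One small slip: the Frobenius-norm Lipschitz condition does not give $\|D\sigma\|_F^2\leq L_\sigma^2$. It only yields $\|D\sigma_j\|\leq L_\sigma$ for each $j$, hence $\sum_j\|D\sigma_j\|^2\leq qL_\sigma^2$. For example, with $q=d$ and $\sigma_j(\bm x)=x_j\bm e_1$ one has $L_\sigma=1$ but $\sum_j\|D\sigma_j\|^2=d$. Only finiteness of $L_1$ matters, so the argument is unaffected.
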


\Beginproof{}
We rewrite the restarted SDE in the affine-linear form required by
Lemma~\ref{lem:window-linear-energy} and verify its coefficient and forcing
conditions. The linear energy estimate then gives
the required moment bound for $p\geq2$, while Jensen's inequality gives the
remaining range $0<p<2$.

The well-posedness observation in
Remark~\ref{rem:v3-global-lipschitz} uses only global Lipschitzness and
boundedness of $\alpha$, so it applies under the present assumptions. The
$C^1$ regularity is used below to express the coefficients in the
affine-linear form required by Lemma~\ref{lem:window-linear-energy}.

\emph{Reduction to the linear form.}
Write $\bm Z_r=\bm Z_r^{s,\bm x}$, which we recall satisfies
\[
d\bm Z_r=\bm b(\bm Z_r)\,dr
+\sqrt{\alpha(r)}\sum_j\sigma_j(\bm Z_r)\,dB_r^j,
\qquad \bm Z_s=\bm x.
\]
We now express the drift and each diffusion column in affine-linear form.
For each $r$ and $j$, set $A_r:=\int_0^1D\bm b(\theta\bm Z_r)\,d\theta$,
$\Sigma_{j,r}:=\sqrt{\alpha(r)}\int_0^1D\sigma_j(\theta\bm Z_r)\,d\theta$,
and $\bm g_{j,r}:=\sqrt{\alpha(r)}\,\sigma_j(\bm 0)$. By the fundamental
theorem of calculus,

\[
\begin{aligned}
\bm b(\bm Z_r)
&=\int_0^1\frac{d}{d\theta}\bm b(\theta\bm Z_r)\,d\theta+\bm b(\bm 0)\\
&=\int_0^1D\bm b(\theta\bm Z_r)\bm Z_r\,d\theta+\bm b(\bm 0)
=A_r\bm Z_r+\bm b(\bm 0).
\end{aligned}
\]
Similarly, for each diffusion column $\sigma_j$,

\[
\sqrt{\alpha(r)}\,\sigma_j(\bm Z_r)=\Sigma_{j,r}\bm Z_r+\bm g_{j,r}.
\]
Thus the SDE becomes

\[
d\bm Z_r=(A_r\bm Z_r+\bm b(\bm 0))\,dr
+\sum_j(\Sigma_{j,r}\bm Z_r+\bm g_{j,r})\,dB_r^j,
\qquad \bm Z_s=\bm x.
\]
This is the form of Lemma~\ref{lem:window-linear-energy}, with
$\bm Y_r=\bm Z_r$, $\bm f_r=\bm b(\bm 0)$, and $A_r$, $\Sigma_{j,r}$, and
$\bm g_{j,r}$ as defined above.

The integral maps defining $A_r$ and $\Sigma_{j,r}$ are continuous in
$\bm Z_r$, so $A_r$, $\Sigma_{j,r}$, $\bm g_{j,r}$, and $\bm f_r$ are
progressively measurable by the observation following
Lemma~\ref{lem:window-linear-energy}.

\emph{Coefficient bounds.}

The global Lipschitz and $C^1$ assumptions give finite constants
\[
K_b:=\sup_{\bm z}\|D\bm b(\bm z)\|,
\qquad
K_\sigma^2:=
\sup_{\bm z}\sum_j\|D\sigma_j(\bm z)\|^2.
\]
Hence
\[
\langle\bm y,A_r\bm y\rangle
\leq
\|A_r\|\|\bm y\|^2
\leq
K_b\|\bm y\|^2,
\]
so Lemma~\ref{lem:window-linear-energy} may take $L=K_b$. Jensen's
inequality in $\theta$ gives
\[
\begin{aligned}
\sum_j\|\Sigma_{j,r}\|^2
&=
\alpha(r)\sum_j
\left\|
\int_0^1D\sigma_j(\theta\bm Z_r)\,d\theta
\right\|^2\\
&\leq
\alpha(r)\sum_j\int_0^1
\|D\sigma_j(\theta\bm Z_r)\|^2\,d\theta\\
&\leq
\alpha_{\max}K_\sigma^2,
\end{aligned}
\]
so one may take $L_1^2=\alpha_{\max}K_\sigma^2$.

For $\bm f_r$, we have $\|\bm f_r\|^p=\|\bm b(\bm 0)\|^p$.
For $\bm g_{j,r}$,
\[
\left(\sum_j\|\bm g_{j,r}\|^2\right)^{p/2}
=
\alpha(r)^{p/2}\|\sigma(\bm 0)\|_F^p
\leq
\alpha_{\max}^{p/2}\|\sigma(\bm 0)\|_F^p.
\]

Together with the progressive measurability noted above, these estimates
verify all the hypotheses of Lemma~\ref{lem:window-linear-energy}, with
$L=K_b$ and $L_1^2=\alpha_{\max}K_\sigma^2$.

\emph{Application of the linear estimate.}
For $p\geq2$, Lemma~\ref{lem:window-linear-energy} gives
\[
\begin{aligned}
\mathbb E\sup_{s\leq r\leq s+S}\|\bm Z_r\|^p
&\leq
C^Y_{p,S}
\left\{
\|\bm x\|^p
+\int_s^{s+S}
\left[
\|\bm b(\bm 0)\|^p
+
\alpha(r)^{p/2}\|\sigma(\bm 0)\|_F^p
\right]dr
\right\}\\
\end{aligned}
\]
Since $\alpha(r)\leq\alpha_{\max}$ and the interval has length $S$, the
integral is bounded by a finite constant independent of $\bm x$. Absorbing
this constant into a new constant $C^Z_{p,S}$ gives
\[
\mathbb E\sup_{s\leq r\leq s+S}\|\bm Z_r\|^p
\leq C^Z_{p,S}\bigl(1+\|\bm x\|^p\bigr).
\]
Since $\polybrac{\bm z}^p\asymp_p1+\|\bm z\|^p$, this implies
\[
\mathbb E\sup_{s\leq r\leq s+S}
\polybrac{\bm Z_r^{s,\bm x}}^p
\leq
C^Z_{p,S}\polybrac{\bm x}^p,
\qquad p\geq2.
\]
For $0<p<2$, Jensen's inequality applied to the $p=2$ estimate gives
\[
\begin{aligned}
\mathbb E\sup_{s\leq r\leq s+S}
\polybrac{\bm Z_r^{s,\bm x}}^p
&=
\mathbb E\left[
\sup_{s\leq r\leq s+S}
\polybrac{\bm Z_r^{s,\bm x}}^2
\right]^{p/2}\\
&\leq
\left(
\mathbb E\sup_{s\leq r\leq s+S}
\polybrac{\bm Z_r^{s,\bm x}}^2
\right)^{p/2}\\
&\leq
\left(C^Z_{2,S}\right)^{p/2}\polybrac{\bm x}^p.
\end{aligned}
\]
Absorbing $\left(C^Z_{2,S}\right)^{p/2}$ into a constant denoted by
$C^Z_{p,S}$ gives the stated bound for $0<p<2$.
This proves the fixed-window state-moment bound for every finite $p>0$, and
completes the proof.

\Endproof

To obtain gradient and Hessian bounds for $P_{s,t}V$, we control the first and
second spatial variations of the restarted SDE flow.
The process $\bm Z_t^{s,\bm x}$ can be differentiated with respect to its initial
condition. The resulting spatial derivatives are described by the classical
theory of stochastic flows, developed systematically by
\citep{kunita1990stochastic}. 
Here we use
\citet[Theorem~4.10.8]{matsumototaniguchi2016stochastic}, applied after
localization to the resulting $C^3$ vector fields.

\begin{lemma}[Polynomial control of semigroup derivatives]
\label{lem:v3-polynomial-window-control}

Under \eqref{eq:v3-primitive-global-lipschitz},
\eqref{eq:v3-primitive-second-derivative-growth}, and
\eqref{eq:v3-primitive-lyapunov-growth} of
Assumption~\ref{assum:v3-primitive-finite-window}, for every fixed
$S>0$ and $0\leq t-s\leq S$, $P_{s,t}V\in C^2(\mathbb R^d)$, with
\begin{equation}
\begin{aligned}
\|\nabla P_{s,t}V(\bm x)\|
&\leq C_S\polybrac{\bm x}^{v},\\
\|\nabla^2P_{s,t}V(\bm x)\|
&\leq C_S\polybrac{\bm x}^{v+\kappa}.
\end{aligned}
\label{eq:v3-polynomial-window-derivative-bounds}
\end{equation}
The constant $C_S$ may depend on the fixed coefficient data and on $S$, but
not on $s,t$, or $\bm x$ subject to $0\leq t-s\leq S$.
\end{lemma}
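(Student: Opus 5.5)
The plan is to differentiate the flow $\bm x\mapsto\bm Z_t^{s,\bm x}$ under the expectation. By \citet[Theorem~4.10.8]{matsumototaniguchi2016stochastic}, applied after localization to the $C^3$ coefficients, the flow is $C^2$ in $\bm x$. Its first variation $J_r=D_{\bm x}\bm Z_r^{s,\bm x}[\bm h]$ and second variation $K_r=D^2_{\bm x}\bm Z_r^{s,\bm x}[\bm h,\bm h']$ satisfy the linear equations
\[
dJ_r=D\bm b(\bm Z_r)J_r\,dr+\sqrt{\alpha(r)}\sum_jD\sigma_j(\bm Z_r)J_r\,dB_r^j,\qquad J_s=\bm h,
\]
\[
dK_r=\{D\bm b(\bm Z_r)K_r+D^2\bm b(\bm Z_r)[J_r,J_r']\}\,dr+\sqrt{\alpha(r)}\sum_j\{D\sigma_j(\bm Z_r)K_r+D^2\sigma_j(\bm Z_r)[J_r,J_r']\}\,dB_r^j,\qquad K_s=0.
\]
\textbf{Moment bounds for the variations.} Both equations have the form of Lemma~\ref{lem:window-linear-energy}, with $L=K_b$ and $L_1^2=\alpha_{\max}K_\sigma^2$ as in the proof of Lemma~\ref{lem:v3-restarted-state-moments}.
\begin{itemize}
\item For $J$ there is no forcing, so $\mathbb E\sup_r\|J_r\|^q\le C_{q,S}\|\bm h\|^q$ for every $q\ge2$, uniformly in $\bm x$ and $s$.
\item For $K$ the forcing satisfies $\|\bm f_r\|+G_r\le C\polybrac{\bm Z_r}^{\kappa}\|J_r\|\|J_r'\|$ by \eqref{eq:v3-primitive-second-derivative-growth}.
\item Cauchy--Schwarz, the $J$ bound, and Lemma~\ref{lem:v3-restarted-state-moments} then give
\[
\mathbb E\sup_r\|K_r\|^q\le C_{q,S}\polybrac{\bm x}^{q\kappa}\|\bm h\|^q\|\bm h'\|^q .
\]
\end{itemize}

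\textbf{Differentiating under the expectation.} The chain rule gives pathwise
\[
D_{\bm x}V(\bm Z_t)[\bm h]=\nabla V(\bm Z_t)^\top J_t,\qquad
D^2_{\bm x}V(\bm Z_t)[\bm h,\bm h']=\nabla^2V(\bm Z_t)[J_t,J_t']+\nabla V(\bm Z_t)^\top K_t .
\]
Using \eqref{eq:v3-primitive-lyapunov-growth} and Hölder's inequality, the expectation of the first is bounded by
\[
C\,(\mathbb E\polybrac{\bm Z_t}^{2v})^{1/2}(\mathbb E\|J_t\|^2)^{1/2}\le C_S\polybrac{\bm x}^v\|\bm h\| .
\]
Similarly, the expectation of the second is bounded by $C_S\{\polybrac{\bm x}^v+\polybrac{\bm x}^{v+\kappa}\}\|\bm h\|\|\bm h'\|\le C_S\polybrac{\bm x}^{v+\kappa}\|\bm h\|\|\bm h'\|$. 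Taking suprema over unit vectors yields \eqref{eq:v3-polynomial-window-derivative-bounds}. All constants come from Lemmas~\ref{lem:window-linear-energy} and \ref{lem:v3-restarted-state-moments}, so they depend only on $S$ and the coefficient data, not on $s,t,\bm x$.

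\textbf{Main obstacle.} The hardest step is rigorously justifying the interchange of derivative and expectation, together with $P_{s,t}V\in C^2$. The plan is to write difference quotients along the flow via the mean value theorem. The estimates above then hold locally uniformly in $\bm x$: the moments are bounded over $\bm x$ in a ball, since $\polybrac{\bm x}$ is bounded there, and the same bounds apply at intermediate points $\bm x+\theta\bm h$. This gives uniform integrability of the difference quotients, and hence dominated convergence, so the first and second derivatives pass under $\mathbb E$.

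Continuity of the second derivative is handled the same way. Almost-sure continuity of $\bm x\mapsto(\bm Z_t,J_t,K_t)$ comes from the flow theorem, and uniform integrability from higher-moment bounds with some $q>2$, which the linear estimate supplies for every $q$. Localization, that is stopping when $\|\bm Z\|$ exits a large ball, is needed only to invoke the flow theorem with bounded $C^3$ fields. It is removed because the solution is nonexplosive (Remark~\ref{rem:v3-global-lipschitz}), so the stopping times tend to infinity almost surely.
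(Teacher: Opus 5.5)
Your proposal is correct and follows essentially the same route as the paper: the $C^2$ flow from Matsumoto--Taniguchi's Theorem~4.10.8 after localization, then Lemma~\ref{lem:window-linear-energy} applied to the first and second variations, with H\"older and the state moments controlling the $D^2\bm b$, $D^2\sigma_j$ forcing, and finally the pathwise chain rule, Cauchy--Schwarz, and a uniform-integrability argument to pass the derivatives under the expectation. The one detail the paper records that you omit is that $\alpha(r)$ is piecewise constant, so the flow theorem is applied on each interval $[t_k,t_{k+1})$ and the finitely many resulting $C^2$ flow maps within a window are composed.
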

\Beginproof{}
The proof proceeds by calculating the first and second derivatives of the restarted flow $\bm Z_r^{s,\bm x}$ with respect to its initial state $\bm x$.
To do this  we apply  \citet{matsumototaniguchi2016stochastic}, Theorem~4.10.8. We then apply Lemma~\ref{lem:window-linear-energy} to bound the maximum
moments of the flow. 

\emph{First spatial derivative of the flow.}
Applying Theorem~4.10.8 of \citet{matsumototaniguchi2016stochastic} to
bounded $C^3$ vector fields shows that the Jacobian
$J_r:=D_{\bm x}\bm Z_r^{s,\bm x}$ exists with $J_s=I$ and satisfies the
first-variation equation:

\begin{equation}
\label{eq:v3-first-variation-equation}
dJ_r
=D\bm b(\bm Z_r)J_r\,dr
+\sqrt{\alpha(r)}\sum_jD\sigma_j(\bm Z_r)J_r\,dB_r^j.
\end{equation}

There are two technical points in applying
\citet[Theorem~4.10.8]{matsumototaniguchi2016stochastic}. First, that theorem
is applied to bounded $C^3$ vector fields with bounded derivatives through
order three, whereas our coefficients are $C^3$ and their higher derivatives
need only be locally bounded. For each $R>0$, a smooth cutoff gives globally
bounded $C^3$ extensions $\bm b^{(R)}$ and $\sigma^{(R)}$ that agree with
$\bm b$ and $\sigma$ on the ball $B(\bm 0,R)$ and have bounded derivatives
through order three. Applying the theorem to these modified coefficients gives a
$C^2$ stochastic flow. By pathwise uniqueness, the modified and original
restarted diffusions, and hence their local flow maps, agree up to the exit
time from $B(\bm 0,R)$. Since the original restarted SDE is nonexplosive,
these exit times tend to infinity almost surely as $R\to\infty$, yielding the
local $C^2$ flow for the original equation; this is the standard localization
argument, see also \citet[Remark~3.1]{varadhan2006stochastic}.

Second, $\alpha(r)$ is piecewise constant. On each interval
$[t_k,t_{k+1})$, it equals $\alpha_k$, so the spatial vector fields are
$\bm b$ and $\sqrt{\alpha_k}\,\sigma_j$. The corresponding cutoff vector
fields are bounded $C^3$ fields, so Theorem~4.10.8 applies on that interval.
A fixed diffusion-time window
intersects only finitely many such intervals, and composing the corresponding
$C^2$ solution maps gives a $C^2$ restarted flow throughout the window.
Consequently the first variation satisfies
\eqref{eq:v3-first-variation-equation}. The same
localization-and-composition argument gives the second spatial derivative
and hence the second-variation equation used below. The $C^3$ assumption is
used here to obtain the $C^2$ flow; no global bound on third derivatives is
assumed or used in the quantitative estimates below, which rely only on the
stated bounds on the first and second derivatives.

For a fixed direction $\bm a$, set
$\bm Y_r:=J_r\bm a$, with $\bm Y_s=\bm a$. Then $\bm Y$ satisfies

\[
 d\bm Y_r=A_r\bm Y_r\,dr+\sum_j\Sigma_{j,r}\bm Y_r\,dB_r^j,
\]
where $A_r=D\bm b(\bm Z_r)$, $\Sigma_{j,r}=\sqrt{\alpha(r)}D\sigma_j(\bm Z_r)$.
This is precisely the homogeneous case of
\eqref{eq:window-linear-energy-sde} in
Lemma~\ref{lem:window-linear-energy}, with $\bm f_r=0$ and
$\bm g_{j,r}=0$. We next verify the coefficient bounds required to apply
that lemma.
Differentiating the global Lipschitz bounds in
Assumption~\ref{assum:v3-primitive-finite-window} gives
$\|D\bm b(\bm x)\|\leq L_b$ and
$\|D\sigma_j(\bm x)\|\leq L_\sigma$ for $j=1,\ldots,q$.
By the measurability observation following
Lemma~\ref{lem:window-linear-energy}, the coefficient processes $A_r$ and
$\Sigma_{j,r}$ are progressively measurable. Their coefficient bounds give
\[
\langle\bm y,A_r\bm y\rangle
\leq L_b\|\bm y\|^2,
\qquad
\sum_j\|\Sigma_{j,r}\|^2
=\alpha(r)\sum_{j=1}^q\|D\sigma_j(\bm Z_r)\|^2
\leq\alpha_{\max}qL_\sigma^2.
\]
Together with progressive measurability and $\bm f_r=\bm g_{j,r}=0$, these
bounds verify all hypotheses of Lemma~\ref{lem:window-linear-energy}, with
$L=L_b$ and $L_1^2=\alpha_{\max}qL_\sigma^2$. Thus, by
Lemma~\ref{lem:window-linear-energy}, we have

\begin{equation}
\mathbb{E}\sup_{s\leq r\leq s+S}\|J_r\bm a\|^p
\leq C^Y_{p,S}\|\bm a\|^p,
\qquad p\geq2.
\label{eq:v3-first-variation-sensitivity}
\end{equation}
For the standard basis $\bm e_1,\ldots,\bm e_d$, finite dimensionality gives
$\|J_r\|\leq\|J_r\|_F\leq\sum_{i=1}^d\|J_r\bm e_i\|$. Therefore,
\begin{equation}
\mathbb E\sup_{s\leq r\leq s+S}\|J_r\|^p
\leq
C_{p,d}\sum_{i=1}^d
\mathbb E\sup_{s\leq r\leq s+S}\|J_r\bm e_i\|^p
\leq C^J_{p,S}.
\label{eq:v3-first-variation-operator-moment}
\end{equation}

This proves the required first spatial derivative moment bound. We now use it,
together with the state-moment lemma, to bound the $\bm f_r$ and
$\bm g_{j,r}$ terms in the second spatial derivative equation.

\emph{Second spatial derivative of the flow.}
The second spatial derivative is handled similarly to the first.
Since the restarted flow is $C^2$ in its initial state, for fixed
 directions $\bm a,\bm c$ we may define
 $K_r[\bm a,\bm c]:=D_{\bm x}^2\bm Z_r^{s,\bm x}[\bm a,\bm c]$ with $K_s[\bm a,\bm c]=0$.
 \footnote{The $C^3$ coefficient assumption in
 Assumption~\ref{assum:v3-primitive-finite-window} lets us obtain the $C^2$ flow
 directly from Theorem~4.10.8. With only $C^2$ coefficients, one could
 instead construct the second spatial variation from finite differences of
 the Jacobian equation, at the cost of a longer argument. We impose the
 additional derivative only to streamline the proof.}
 Differentiating the Jacobian equation in direction $\bm c$, as in the proof
 of \citet[Theorem~4.10.8]{matsumototaniguchi2016stochastic}, gives the
second-variation equation. The same two technical points as for the
 first variation apply: localization gives the required $C^2$ dependence on
 the initial state, while the piecewise-constant $\alpha$ is handled interval
 by interval and the resulting $C^2$ flow maps are composed across the
 finitely many boundaries.

\begin{align}
 dK_r[\bm a,\bm c]
 ={}&\left\{
 D\bm b(\bm Z_r)K_r[\bm a,\bm c]
 +D^2\bm b(\bm Z_r)[J_r\bm c,J_r\bm a]
 \right\}dr \notag\\
 &+\sqrt{\alpha(r)}\sum_j\left\{
 D\sigma_j(\bm Z_r)K_r[\bm a,\bm c]
 +D^2\sigma_j(\bm Z_r)[J_r\bm c,J_r\bm a]
 \right\}dB_r^j.
\label{eq:v3-second-variation-equation}
\end{align}

 Now set $\bm Y_r=K_r[\bm a,\bm c]$, with $\bm Y_s=0$, and take
 $A_r=D\bm b(\bm Z_r)$,
 $\Sigma_{j,r}=\sqrt{\alpha(r)}D\sigma_j(\bm Z_r)$,
 $\bm f_r=D^2\bm b(\bm Z_r)[J_r\bm c,J_r\bm a]$, and
 $\bm g_{j,r}=\sqrt{\alpha(r)}D^2\sigma_j(\bm Z_r)
 [J_r\bm c,J_r\bm a]$. The coefficient hypotheses for $A_r$ and
 $\Sigma_{j,r}$ were verified in the first spatial derivative argument. By
 the measurability observation following Lemma~\ref{lem:window-linear-energy}
 and continuity of the second derivatives, $\bm f_r$ and $\bm g_{j,r}$ are
 progressively measurable. Define
 $G_r=\left(\sum_j\|\bm g_{j,r}\|^2\right)^{1/2}$. Now apply
 Lemma~\ref{lem:window-linear-energy}, using $\bm Y_s=0$:
 \begin{equation}
 \mathbb E\sup_{s\leq r\leq s+S}
 \|K_r[\bm a,\bm c]\|^p
 \leq C^Y_{p,S}\int_s^{s+S}
 \mathbb E[\|\bm f_r\|^p+G_r^p] \,dr.
 \label{eq:v3-second-variation-linear-energy}
 \end{equation}
 We now look to bound the $\bm f_r$ and $G_r$ terms in the integrand above.

 By
 \eqref{eq:v3-primitive-second-derivative-growth} in
 Assumption~\ref{assum:v3-primitive-finite-window},
 \[
 \begin{aligned}
 \|\bm f_r\|
 &\leq
 \|D^2\bm b(\bm Z_r)\|\,\|J_r\bm c\|\,\|J_r\bm a\|\\
 &\leq
 C\polybrac{\bm Z_r}^{\kappa}
 \|J_r\|^2\|\bm a\|\,\|\bm c\|,
 \end{aligned}
 \]
 and, using $\left(\sum_j a_j^2\right)^{1/2}\leq\sum_j a_j$,
 \[
 \begin{aligned}
 G_r
 &\leq
 \sqrt{\alpha(r)}\,\|J_r\|^2\|\bm a\|\,\|\bm c\|
 \left(\sum_j\|D^2\sigma_j(\bm Z_r)\|^2\right)^{1/2}\\
 &\leq
 \sqrt{\alpha(r)}\,\|J_r\|^2\|\bm a\|\,\|\bm c\|
 \sum_j\|D^2\sigma_j(\bm Z_r)\|\\
 &\leq
 C\polybrac{\bm Z_r}^{\kappa}
 \|J_r\|^2\|\bm a\|\,\|\bm c\|,
 \end{aligned}
 \]
 The last inequality uses the second-derivative growth condition
 \eqref{eq:v3-primitive-second-derivative-growth} from
 Assumption~\ref{assum:v3-primitive-finite-window}, with
 $\sqrt{\alpha_{\max}}$ absorbed into $C$. Taking $p$th powers in the
 preceding estimates and applying H\"older's inequality gives
 \[
 \mathbb E[\polybrac{\bm Z_r}^{\kappa p}\|J_r\|^{2p}]
 \leq
 \left\{\mathbb E\polybrac{\bm Z_r}^{2\kappa p}\right\}^{1/2}
 \left\{\mathbb E\|J_r\|^{4p}\right\}^{1/2}.
 \]
 Together with the state and Jacobian moment bounds, these estimates and
 H\"older's inequality give
 \begin{equation}
 \mathbb E[\|\bm f_r\|^p+G_r^p]
 \leq C^K_{p,S}\polybrac{\bm x}^{\kappa p}
 \|\bm a\|^p\|\bm c\|^p.
 \label{eq:v3-second-variation-fg-bound}
 \end{equation}
 If $\kappa=0$, the first factor in the Hölder estimate is simply one.
 Substituting \eqref{eq:v3-second-variation-fg-bound} into
 \eqref{eq:v3-second-variation-linear-energy} and absorbing the constant $C^Y_{p,S}$ into $C^K_{p,S}$ gives

 \begin{equation}
 \mathbb E\sup_{s\leq r\leq s+S}
 \|K_r[\bm a,\bm c]\|^p
 \leq C^K_{p,S}\polybrac{\bm x}^{\kappa p}
 \|\bm a\|^p\|\bm c\|^p.
 \label{eq:v3-second-variation-moment-bound}
 \end{equation}
(By finite dimensionality, applying the preceding estimate to the
finitely many coordinate pairs also gives the corresponding moment bound for
the bilinear operator norm of $K_t$.)

\emph{Differentiation of the semigroup.}
It remains to translate the estimates
\eqref{eq:v3-restarted-state-moments},
\eqref{eq:v3-first-variation-operator-moment}, and
\eqref{eq:v3-second-variation-moment-bound}
into the semigroup function $P_{s,t}V$. (For the
stochastic-approximation argument, this is required to analyse the backward
function $u_{k,n}=P_{t_k,t_n}V$ defined in
\eqref{eq:v3-backward-function}.)

\[
\|\nabla P_{s,t}V(\bm x)\|
\leq C_S\polybrac{\bm x}^{v},
\qquad
\|\nabla^2P_{s,t}V(\bm x)\|
\leq C_S\polybrac{\bm x}^{v+\kappa}.
\]
Given our stochastic flow analysis the pathwise derivatives of
$V(\bm Z_t^{s,\bm x})$ follow directly by the chain rule. (The state, Jacobian, and
second-variation moment estimates \eqref{eq:v3-restarted-state-moments},
\eqref{eq:v3-first-variation-operator-moment}, and
\eqref{eq:v3-second-variation-moment-bound}, in addition to providing regularity bounds, imply uniform integrability. This allows us to justify passing these derivatives through
the expectation defining $P_{s,t}V$.)

To display the dependence on the initial state in this part of the proof,
write $J_t^{\bm y}$ and $K_t^{\bm y}$ for the variations defined above
along $\bm Z_t^{s,\bm y}$.

For fixed $s,t$, define
$\Phi(\bm y):=V(\bm Z_t^{s,\bm y})$. The restarted flow is $C^2$ in its
initial state $\bm y$. For directions
$\bm a,\bm c$, the ordinary pathwise chain rule gives

\[
D\Phi(\bm y)[\bm a]
=
\nabla V(\bm Z_t^{s,\bm y})^\top J_t^{\bm y}\bm a.
\]

The second pathwise derivative is

\[
D^2\Phi(\bm y)[\bm a,\bm c]
=
\nabla^2V(\bm Z_t^{s,\bm y})[J_t^{\bm y}\bm c,J_t^{\bm y}\bm a]
+
\nabla V(\bm Z_t^{s,\bm y})^\top K_t^{\bm y}[\bm a,\bm c].
\]
Consequently,
\[
\|D\Phi(\bm y)\|
\leq C\polybrac{\bm Z_t^{s,\bm y}}^v\|J_t^{\bm y}\|, \qquad
\|D^2\Phi(\bm y)\|
\leq C\polybrac{\bm Z_t^{s,\bm y}}^v
\left(\|J_t^{\bm y}\|^2+\|K_t^{\bm y}\|\right).
\]

On every bounded neighbourhood of $\bm x$, the state-moment estimate in
Lemma~\ref{lem:v3-restarted-state-moments}, together with the Jacobian moment
bound \eqref{eq:v3-first-variation-operator-moment} and the second-variation
bound \eqref{eq:v3-second-variation-moment-bound}, gives uniform moments of
$\bm Z$, $J$, and $K$. The state moments control the polynomial factor in the displayed
derivative bounds; the $J$ moments control the first-derivative and
$J^2$ terms, while the $K$ moments control the remaining second-derivative
term. By Assumption~\ref{assum:v3-primitive-finite-window}, specifically
\eqref{eq:v3-primitive-lyapunov-growth}, $\nabla V$ and
$\nabla^2V$ have polynomial growth. For $V$ itself, the fundamental theorem
of calculus gives
\[
\begin{aligned}
|V(\bm z)|
&\leq |V(\bm 0)|
+\|\bm z\|\int_0^1\|\nabla V(\theta\bm z)\|\,d\theta \leq C\polybrac{\bm z}^{v+1}.
\end{aligned}
\]
Thus the state-moment estimate also ensures that
$V(\bm Z_t^{s,\bm y})$ is integrable. H\"older's inequality and the local
moment bounds imply locally uniform $L^2$ bounds for $D\Phi$ and $D^2\Phi$.
These bounds imply uniform integrability on bounded sets of initial
conditions. Since the pathwise derivatives are continuous in $\bm y$, the
standard differentiation-under-expectation argument applies to
$P_{s,t}V(\bm x)=\mathbb E[V(\bm Z_t^{s,\bm x})]$. It yields
$P_{s,t}V\in C^2(\mathbb R^d)$ and the derivative formulas below.

\begin{equation}
\nabla P_{s,t}V(\bm x)^\top\bm a
=
\mathbb E\!\left[
\nabla V(\bm Z_t^{s,\bm x})^\top J_t\bm a
\right].
\label{eq:v3-semigroup-gradient-representation}
\end{equation}
Analogously, the second derivative is

\begin{equation}
\begin{aligned}
\nabla^2P_{s,t}V(\bm x)[\bm a,\bm c]
=
\mathbb E\Big[
&\nabla^2V(\bm Z_t^{s,\bm x})
[J_t\bm c,J_t\bm a]+
\nabla V(\bm Z_t^{s,\bm x})^\top K_t[\bm a,\bm c]
\Big].
\end{aligned}
\label{eq:v3-semigroup-hessian-representation}
\end{equation}

Equations \eqref{eq:v3-semigroup-gradient-representation} and
\eqref{eq:v3-semigroup-hessian-representation} give the required polynomial
bounds. Specifically, from \eqref{eq:v3-semigroup-gradient-representation},
\eqref{eq:v3-primitive-lyapunov-growth}, and the state and Jacobian moment
estimates \eqref{eq:v3-restarted-state-moments} and
\eqref{eq:v3-first-variation-operator-moment}, Cauchy--Schwarz gives
\[
|\nabla P_{s,t}V(\bm x)^\top\bm a|
\leq C
\left\{\mathbb E\polybrac{\bm Z_t^{s,\bm x}}^{2v}\right\}^{1/2}
\left\{\mathbb E\|J_t\|^2\right\}^{1/2}
\|\bm a\|
\leq C_S\polybrac{\bm x}^{v}\|\bm a\|.
\]
From the first term in
\eqref{eq:v3-semigroup-hessian-representation},
\eqref{eq:v3-primitive-lyapunov-growth}, and the state and Jacobian moment
estimates \eqref{eq:v3-restarted-state-moments} and
\eqref{eq:v3-first-variation-operator-moment},
\[
\left|\mathbb E\,\nabla^2V(\bm Z_t^{s,\bm x})
[J_t\bm c,J_t\bm a]\right|
\leq C_S\polybrac{\bm x}^{v}\|\bm a\|\|\bm c\|.
\]
For the second term in \eqref{eq:v3-semigroup-hessian-representation}, the
same growth assumption, together with the state and second-variation moment
estimates \eqref{eq:v3-restarted-state-moments} and
\eqref{eq:v3-second-variation-moment-bound}, gives
\[
\left|\mathbb E\,\nabla V(\bm Z_t^{s,\bm x})^\top
K_t[\bm a,\bm c]\right|
\leq C_S\polybrac{\bm x}^{v+\kappa}\|\bm a\|\|\bm c\|.
\]
Since $\kappa\geq0$ and $\polybrac{\bm x}\geq1$, these estimates give
\[
\|\nabla P_{s,t}V(\bm x)\|
\leq
C_S\polybrac{\bm x}^{v},
\qquad
\|\nabla^2P_{s,t}V(\bm x)\|
\leq
C_S\polybrac{\bm x}^{v+\kappa}.
\]

We have therefore proved the $C^2$ regularity and the gradient and Hessian
bounds stated in Lemma~\ref{lem:v3-polynomial-window-control}. In
particular, $P_{s,t}V$ satisfies the hypotheses of
Lemma~\ref{lem:window-two-bound-taylor} with
$\gamma_1=v$ and $\gamma_2=v+\kappa$.
\Endproof

The preceding lemma gives the derivative bounds needed for the one-step
comparison.
The following lemma records the standard Dynkin formula in the
form needed here; see, for example,
\citet[Theorem~7.4.1]{oksendal2003stochastic}. We include the short argument
to verify its applicability under our polynomial growth bounds.
(The argument is standard and can be skipped on first reading.)

\begin{lemma}[Taylor remainder using both derivative bounds]
\label{lem:window-two-bound-taylor}
Suppose $u\in C^2(\mathbb R^d)$ and, for $0\leq\gamma_1\leq\gamma_2$,

\begin{equation}
 \|\nabla u(\bm x)\|\leq C_1 \polybrac{\bm x}^{\gamma_1},\qquad
 \|\nabla^2u(\bm x)\|\leq C_2 \polybrac{\bm x}^{\gamma_2}.
\label{eq:window-taylor-derivative-growth}
\end{equation}
Set $\gamma_3 = \max\{2,\gamma_1+1\}$. Then, for $\bm z \in \mathbb R^d$,
\begin{equation}
 |u(\bm x+\bm z)-u(\bm x)-\nabla u(\bm x)^\top\bm z|
 \leq C_3\{\polybrac{\bm x}^{\gamma_2}\|\bm z\|^2+\|\bm z\|^{\gamma_3}\}.
\label{eq:window-taylor-deterministic-remainder}
\end{equation}
Consequently, if
$\mathbb{E}[\|\bm H\|^{\gamma_3}\mid\mathcal F]\leq M \polybrac{\bm x}^{\gamma_4}$,
 and $\bm x$ is $\mathcal F$-measurable, then

\begin{equation}
 \left|\mathbb{E}[u(\bm x+h\bm H)\mid\mathcal F]-u(\bm x)
       -h\nabla u(\bm x)^\top\mathbb{E}[\bm H\mid\mathcal F]\right|
 \leq C_4 h^2 \polybrac{\bm x}^{\gamma_5}\, ,
\label{eq:window-taylor-conditional-bound}
\end{equation}
for $0<h\leq\alpha_{\max}$ and
where
\(
\gamma_5
=
\max\{
\gamma_2+ {2\gamma_4}/{\gamma_3},
\gamma_4
\}.
\)

In particular, if $\bm x$ is $\mathcal F$-measurable and, for some
$p\geq\gamma_2+2$,
\[
\mathbb E[\|\bm H\|^p\mid\mathcal F]\leq M_p\polybrac{\bm x}^{p},
\]
then, for $0<h\leq\alpha_{\max}$,
\begin{equation}
\begin{aligned}
\Big|\,\mathbb E[u(\bm x+h\bm H)\mid\mathcal F]
&-u(\bm x)\\
&-h\nabla u(\bm x)^\top\mathbb E[\bm H\mid\mathcal F]\,\Big|
\leq C_5h^2\polybrac{\bm x}^{\gamma_2+2}.
\end{aligned}
\label{eq:window-taylor-conditional-p-moment-bound}
\end{equation}
Here $C_5$ depends only on $C_1,C_2$, the derivative exponents
$\gamma_1,\gamma_2$, $p$, $M_p$, and $\alpha_{\max}$.

\end{lemma}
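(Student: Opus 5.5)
The deterministic remainder \eqref{eq:window-taylor-deterministic-remainder} comes from Taylor's theorem with integral remainder:
\[
u(\bm x+\bm z)-u(\bm x)-\nabla u(\bm x)^\top\bm z=\int_0^1(1-\theta)\,\nabla^2u(\bm x+\theta\bm z)[\bm z,\bm z]\,d\theta .
\]
I will bound this in two regimes, according to whether $\|\bm z\|\leq 1$ or $\|\bm z\|>1$.

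\emph{Small increments.} If $\|\bm z\|\leq1$, then $\polybrac{\bm x+\theta\bm z}\leq C\polybrac{\bm x}$ by \eqref{eq:polybrac-translation}. The Hessian bound then gives the remainder $\leq C\polybrac{\bm x}^{\gamma_2}\|\bm z\|^2$.

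\emph{Large increments.} If $\|\bm z\|>1$, I will not use the Hessian at all, to avoid the term $\|\bm z\|^{\gamma_2+2}$. Instead I bound the three terms separately via the gradient: $|u(\bm x+\bm z)-u(\bm x)|\leq\|\bm z\|\sup_\theta\|\nabla u(\bm x+\theta\bm z)\|$, and $|\nabla u(\bm x)^\top\bm z|\leq C\polybrac{\bm x}^{\gamma_1}\|\bm z\|$. Using \eqref{eq:polybrac-translation}, both are $\leq C\|\bm z\|\{\polybrac{\bm x}^{\gamma_1}+\|\bm z\|^{\gamma_1}\}$.

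The term $\|\bm z\|^{1+\gamma_1}$ is $\leq\|\bm z\|^{\gamma_3}$. Since $\|\bm z\|>1$ and $\gamma_1\leq\gamma_2$, the term $\polybrac{\bm x}^{\gamma_1}\|\bm z\|$ is $\leq\polybrac{\bm x}^{\gamma_2}\|\bm z\|^2$. This regime-splitting is the main point of the lemma: it produces $\gamma_3=\max\{2,\gamma_1+1\}$ rather than a larger exponent.

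\emph{Conditional bound.} For \eqref{eq:window-taylor-conditional-bound}, set $\bm z=h\bm H$ and take conditional expectations. Measurability of $\bm x$ lets $\nabla u(\bm x)$ come out of the expectation, and $h\leq\alpha_{\max}$ gives $h^{\gamma_3}\leq Ch^2$ because $\gamma_3\geq2$. This leaves two terms.
\begin{itemize}
\item The second-order term is $\polybrac{\bm x}^{\gamma_2}h^2\,\mathbb E[\|\bm H\|^2\mid\mathcal F]$. By conditional Jensen (as $\gamma_3\geq2$), $\mathbb E[\|\bm H\|^2\mid\mathcal F]\leq M^{2/\gamma_3}\polybrac{\bm x}^{2\gamma_4/\gamma_3}$.
\item The tail term is $h^{\gamma_3}\mathbb E[\|\bm H\|^{\gamma_3}\mid\mathcal F]\leq Ch^2M\polybrac{\bm x}^{\gamma_4}$.
\end{itemize}
Combining these and using $\polybrac{\bm x}\geq1$ to dominate by the larger exponent $\gamma_5$ gives \eqref{eq:window-taylor-conditional-bound}.

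\emph{The $p$-moment consequence.} For \eqref{eq:window-taylor-conditional-p-moment-bound}, note $\gamma_3\leq\gamma_2+2\leq p$, since $\gamma_1\leq\gamma_2$. Conditional Jensen from the $p$-th moment therefore gives $\mathbb E[\|\bm H\|^{\gamma_3}\mid\mathcal F]\leq M_p^{\gamma_3/p}\polybrac{\bm x}^{\gamma_3}$, so $\gamma_4=\gamma_3$. Then
\[
\gamma_5=\max\{\gamma_2+2,\gamma_3\}=\gamma_2+2 .
\]
All constants depend only on $C_1$, $C_2$, $\gamma_1$, $\gamma_2$, $p$, $M_p$ and $\alpha_{\max}$, as claimed.
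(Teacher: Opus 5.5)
Your proposal is correct and follows essentially the same route as the paper. Both arguments split at $\|\bm z\|=1$, using the Hessian bound for small increments and the gradient bound with the translation inequality for large ones. Both then obtain the conditional bounds from conditional Jensen and $h^{\gamma_3}\leq\alpha_{\max}^{\gamma_3-2}h^2$, and take $\gamma_4=\gamma_3$ so that $\gamma_5=\gamma_2+2$ in the $p$-moment case.
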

\Beginproof{}
For $\|\bm z\|\leq1$, Taylor's formula with integral remainder gives
\begin{align}
\left|u(\bm x+\bm z)-u(\bm x)-\nabla u(\bm x)^\top\bm z\right|
&=
\left|\int_0^1
(1-\theta)\,\bm z^\top\nabla^2u(\bm x+\theta\bm z)\bm z
\,d\theta\right|
\notag\\
&\leq
C_2\|\bm z\|^2\int_0^1(1-\theta)
\polybrac{\bm x+\theta\bm z}^{\gamma_2}\,d\theta
\label{eq:window-taylor-integral-remainder} \\
&\leq 2^{\gamma_2}C_2\polybrac{\bm x}^{\gamma_2}
\|\bm z\|^2\int_0^1(1-\theta)\,d\theta\label{eq:window-taylor-small-increment-bound-1} \\
&=2^{\gamma_2-1}C_2\polybrac{\bm x}^{\gamma_2}\|\bm z\|^2.
\notag
\end{align}
In the inequality \eqref{eq:window-taylor-integral-remainder},
we apply the assumed bound \eqref{eq:window-taylor-derivative-growth}. For the inequality \eqref{eq:window-taylor-small-increment-bound-1}, note that the regularized Euclidean bracket satisfies
\(
\polybrac{\bm x+\bm y}\leq\polybrac{\bm x}+\|\bm y\|.
\)
Consequently, we can apply the inequality
\[
\polybrac{\bm x+\theta\bm z}^{\gamma_2}
\leq\left(\polybrac{\bm x}+\theta\|\bm z\|\right)^{\gamma_2}
\leq\left(\polybrac{\bm x}+1\right)^{\gamma_2}
\leq 2^{\gamma_2}\polybrac{\bm x}^{\gamma_2},
\]
which is the required bound when $\|\bm z\|\leq1$.

For $\|\bm z\|>1$, using the Hessian bound globally would introduce the
higher power $\|\bm z\|^{\gamma_2+2}$. Instead, we use the gradient bound along
the segment from $\bm x$ to $\bm x+\bm z$:
\begin{align}
&\left|
u(\bm x+\bm z)-u(\bm x)-\nabla u(\bm x)^\top\bm z
\right|
\notag\\
&\qquad=
\left|
\int_0^1
\nabla u(\bm x+\theta\bm z)^\top\bm z\,d\theta
-\nabla u(\bm x)^\top\bm z
\right|
\label{eq:window-taylor-large-ftc}\\
&\qquad\leq
\|\bm z\|
\int_0^1
\|\nabla u(\bm x+\theta\bm z)\|\,d\theta
+\|\bm z\|\,\|\nabla u(\bm x)\|
\label{eq:window-taylor-large-triangle}\\
&\qquad\leq
C_1\|\bm z\|
\left\{
\int_0^1
\polybrac{\bm x+\theta\bm z}^{\gamma_1}\,d\theta
+\polybrac{\bm x}^{\gamma_1}
\right\}
\label{eq:window-taylor-large-gradient}\\
&\qquad\leq
C_1(1+C_{\gamma_1})\|\bm z\|
\left\{
\polybrac{\bm x}^{\gamma_1}
+\|\bm z\|^{\gamma_1}
\right\}
\label{eq:window-taylor-large-translation}\\
&\qquad\leq
C_1(1+C_{\gamma_1})
\left\{
\polybrac{\bm x}^{\gamma_2}\|\bm z\|^2
+\|\bm z\|^{\gamma_3}
\right\}.
\label{eq:window-taylor-large-final}
\end{align}
In \eqref{eq:window-taylor-large-ftc}, we use the fundamental theorem of
calculus along the segment from $\bm x$ to $\bm x+\bm z$. In
\eqref{eq:window-taylor-large-triangle}, we use the triangle inequality and
the Euclidean inner-product bound. In
\eqref{eq:window-taylor-large-gradient}, we apply the gradient-growth
assumption from \eqref{eq:window-taylor-derivative-growth}. In
\eqref{eq:window-taylor-large-translation}, we use the translation inequality for $\polybrac{\bm x + \bm y}$
\eqref{eq:polybrac-translation} with
$\bm y=\theta\bm z$, together with $0\leq\theta\leq1$. For
\eqref{eq:window-taylor-large-final}, since
$\gamma_1\leq\gamma_2$, $\polybrac{\bm x}\geq1$, and
$\|\bm z\|>1$, we have
$\polybrac{\bm x}^{\gamma_1}\|\bm z\|
\leq\polybrac{\bm x}^{\gamma_2}\|\bm z\|^2$; moreover,
$\|\bm z\|^{\gamma_1+1}\leq\|\bm z\|^{\gamma_3}$ because
$\gamma_3=\max\{2,\gamma_1+1\}$.
Choose
\[
C_3=\max\left\{2^{\gamma_2-1}C_2,\,C_1(1+C_{\gamma_1})\right\}.
\]
Then the two cases give the deterministic Taylor-remainder bound
\eqref{eq:window-taylor-deterministic-remainder}.

\begin{align}
&\left|
\mathbb{E}[u(\bm x+h\bm H)\mid\mathcal F]
-u(\bm x)
-h\nabla u(\bm x)^\top
  \mathbb{E}[\bm H\mid\mathcal F]
\right|
\notag\\
&\qquad\leq
\mathbb{E}\!\left[
\left|
u(\bm x+h\bm H)-u(\bm x)
-h\nabla u(\bm x)^\top\bm H
\right|
\,\middle|\,\mathcal F
\right]
\notag\\
&\qquad\leq
C_3h^2\polybrac{\bm x}^{\gamma_2}
\mathbb{E}[\|\bm H\|^2\mid\mathcal F]
+C_3h^{\gamma_3}
\mathbb{E}[\|\bm H\|^{\gamma_3}\mid\mathcal F]
\label{eq:window-taylor-conditional-remainder}\\
&\qquad\leq
C_3M^{2/\gamma_3}h^2
\polybrac{\bm x}^{\gamma_2+2\gamma_4/\gamma_3}
+C_3Mh^{\gamma_3}
\polybrac{\bm x}^{\gamma_4}
\label{eq:window-taylor-conditional-moments}\\
&\qquad\leq
C_3h^2
\left\{
M^{2/\gamma_3}
\polybrac{\bm x}^{\gamma_2+2\gamma_4/\gamma_3}
+M\alpha_{\max}^{\gamma_3-2}
\polybrac{\bm x}^{\gamma_4}
\right\}
\label{eq:window-taylor-conditional-stepsize}\\
&\qquad\leq
C_4h^2\polybrac{\bm x}^{\gamma_5}
.
\label{eq:window-taylor-conditional-final}
\end{align}
In
\eqref{eq:window-taylor-conditional-remainder}, we apply the deterministic
Taylor-remainder estimate \eqref{eq:window-taylor-deterministic-remainder}
with $\bm z=h\bm H$.
For \eqref{eq:window-taylor-conditional-moments}, the assumed
conditional moment assumption gives the $\gamma_3$-moment term directly. Also
(conditional Jensen) and $\gamma_3\geq2$ give that
$\mathbb E[\|\bm H\|^2\mid\mathcal F]\leq
\left(\mathbb E[\|\bm H\|^{\gamma_3}\mid\mathcal F]\right)^{2/\gamma_3}
\leq M^{2/\gamma_3}\polybrac{\bm x}^{2\gamma_4/\gamma_3}$.
For
\eqref{eq:window-taylor-conditional-stepsize}, since $\gamma_3\geq2$ and
$0<h\leq\alpha_{\max}$, we use
$h^{\gamma_3}\leq\alpha_{\max}^{\gamma_3-2}h^2$. Finally, for
\eqref{eq:window-taylor-conditional-final}, the definition of
$\gamma_5$ and $\polybrac{\bm x}\geq1$ imply that both preceding polynomial
weights are bounded by $\polybrac{\bm x}^{\gamma_5}$, while the constant \(
C_4=C_3\left(1+M^{2/\gamma_3}
+M\alpha_{\max}^{\gamma_3-2}\right).
\) dominates
the two coefficients. This proves
\eqref{eq:window-taylor-conditional-bound}.

For the $p$-moment specialization,
$\gamma_3=\max\{2,\gamma_1+1\}\leq\gamma_2+2\leq p$.
Conditional Jensen's inequality gives
\[
\begin{aligned}
\mathbb E[\|\bm H\|^{\gamma_3}\mid\mathcal F]
&\leq\left(\mathbb E[\|\bm H\|^p\mid\mathcal F]\right)^{\gamma_3/p}\\
&\leq M_p^{\gamma_3/p}\polybrac{\bm x}^{\gamma_3}.
\end{aligned}
\]
Thus \eqref{eq:window-taylor-conditional-bound} applies with
$\gamma_4=\gamma_3$, and its exponent becomes
\[
\gamma_5
=\max\left\{\gamma_2+\frac{2\gamma_4}{\gamma_3},\gamma_4\right\}
=\max\{\gamma_2+2,\gamma_3\}
=\gamma_2+2.
\]
This proves \eqref{eq:window-taylor-conditional-p-moment-bound}.
\Endproof
\begin{lemma}[Dynkin formula on a fixed window]
\label{lem:v3-fixed-window-dynkin}
Under the assumptions of Lemma~\ref{lem:v3-restarted-state-moments}, fix
$S>0$. Let $u\in C^2(\mathbb R^d)$ and suppose that, for some $a,b\geq0$,

\begin{equation}
\|\nabla u(\bm y)\|
\leq C_u\polybrac{\bm y}^{a},
\qquad
\|\nabla^2u(\bm y)\|
\leq C_u\polybrac{\bm y}^{b}.
\label{eq:v3-dynkin-test-function-derivative-growth}
\end{equation}
Then, for $0\leq t-s\leq S$,
\begin{equation}
\begin{aligned}
P_{s,t}u(\bm x)
={}&u(\bm x)
+\int_s^t
\mathbb E\!\left[
\bm b(\bm Z_r^{s,\bm x})^\top
\nabla u(\bm Z_r^{s,\bm x})
\right]dr\\
&+\frac12\int_s^t
\alpha(r)\,
\mathbb E\!\left[
(\sigma\sigma^\top)(\bm Z_r^{s,\bm x})
:\nabla^2u(\bm Z_r^{s,\bm x})
\right]dr.
\end{aligned}
\label{eq:v3-fixed-window-dynkin}
\end{equation}
\end{lemma}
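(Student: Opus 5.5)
We prove the Dynkin formula with a localization argument for Itô's formula, removing the localization by dominated convergence using the fixed-window moments of Lemma~\ref{lem:v3-restarted-state-moments}.

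Write $\bm Z_r=\bm Z_r^{s,\bm x}$. Since $u\in C^2$ and $\bm Z$ is a continuous semimartingale, Itô's formula on $[s,t]$ gives
\[
u(\bm Z_t)=u(\bm x)+\int_s^t\mathcal L_r u(\bm Z_r)\,dr+M_t,
\qquad
M_t=\sum_j\int_s^t\sqrt{\alpha(r)}\,\nabla u(\bm Z_r)^\top\sigma_j(\bm Z_r)\,dB_r^j .
\]
Here $\mathcal L_r u=\bm b^\top\nabla u+\tfrac{\alpha(r)}2(\sigma\sigma^\top):\nabla^2u$. The piecewise-constant $\alpha$ causes no difficulty because the diffusion coefficient is merely a bounded deterministic Borel factor.

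Introduce the exit times $\tau_R=\inf\{r\geq s:\|\bm Z_r\|\geq R\}\wedge t$. On $[s,\tau_R]$ the integrand of $M$ is bounded, so the stopped process $M_{\cdot\wedge\tau_R}$ is a true martingale with mean zero. Taking expectations gives
\[
\mathbb E u(\bm Z_{\tau_R})=u(\bm x)+\mathbb E\int_s^{\tau_R}\mathcal L_r u(\bm Z_r)\,dr .
\]

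Next we let $R\to\infty$. Nonexplosion (Remark~\ref{rem:v3-global-lipschitz}) gives $\tau_R\uparrow t$ almost surely. For domination we use three growth bounds:
\begin{itemize}
\item the gradient bound and the fundamental theorem of calculus give $|u(\bm y)|\leq C\polybrac{\bm y}^{a+1}$;
\item global Lipschitzness gives $\|\bm b(\bm y)\|\leq C\polybrac{\bm y}$ and $\|\sigma(\bm y)\|_F\leq C\polybrac{\bm y}$;
\item consequently $|\mathcal L_ru(\bm y)|\leq C\polybrac{\bm y}^{\max\{a+1,b+2\}}$, using $\alpha\leq\alpha_{\max}$.
\end{itemize}
Set $q=\max\{a+1,b+2\}$. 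Then both $|u(\bm Z_{\tau_R})|$ and $\int_s^{\tau_R}|\mathcal L_ru(\bm Z_r)|\,dr$ are dominated by $C(1+S)\sup_{s\leq r\leq s+S}\polybrac{\bm Z_r}^{q}$. This supremum is integrable by Lemma~\ref{lem:v3-restarted-state-moments}.

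Dominated convergence then yields
\[
P_{s,t}u(\bm x)=u(\bm x)+\mathbb E\int_s^t\mathcal L_ru(\bm Z_r)\,dr .
\]
The same integrable bound justifies Fubini, which moves the expectation inside the $dr$ integral and gives \eqref{eq:v3-fixed-window-dynkin}.

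The only delicate point is the uniform domination needed to remove the localization. It is handled entirely by the path-supremum moment bound, which is why that lemma was stated with the supremum inside the expectation.
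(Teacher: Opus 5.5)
Your proposal is correct and follows essentially the same route as the paper: localize It\^o's formula at the exit time from a ball so that the stopped stochastic integral has mean zero, then remove the localization by dominated convergence using the polynomial growth bounds together with the path-supremum moments of Lemma~\ref{lem:v3-restarted-state-moments}. The differences are cosmetic. You merge the two moment orders $a+1$ and $b+2$ into a single $q=\max\{a+1,b+2\}$, and you apply Fubini after letting $R\to\infty$ rather than before.
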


\Beginproof{}
For $R>\|\bm x\|$, define
\[
\tau_R=\inf\{r\geq s:
\bm Z_r^{s,\bm x}\notin B(\bm 0,R)\}.
\]
On the closed ball $\overline{B}(\bm 0,R)$, the coefficients and the first
two derivatives of $u$ are bounded, since they are continuous on compact
sets. Applying It\^o's formula up to $t\wedge\tau_R$
gives
\[
\begin{aligned}
u(\bm Z_{t\wedge\tau_R}^{s,\bm x})
={}&u(\bm x)
+\int_s^{t\wedge\tau_R}
\bm b(\bm Z_r^{s,\bm x})^\top
\nabla u(\bm Z_r^{s,\bm x})\,dr\\
&+\frac12\int_s^{t\wedge\tau_R}
\alpha(r)
(\sigma\sigma^\top)(\bm Z_r^{s,\bm x})
:\nabla^2u(\bm Z_r^{s,\bm x})\,dr\\
&+\int_s^{t\wedge\tau_R}
\sqrt{\alpha(r)}\,
\nabla u(\bm Z_r^{s,\bm x})^\top
\sigma(\bm Z_r^{s,\bm x})\,d\bm B_r.
\end{aligned}
\]
On $B(\bm 0,R)$, the stochastic-integral integrand is bounded. Since
$\alpha(r)\leq\alpha_{\max}$ and $t-s<\infty$, the stopped stochastic
integral is square-integrable and has expectation zero. Taking expectations
in the stopped identity  and applying Fubini's theorem, we obtain
therefore gives
\[
\begin{aligned}
\mathbb E u(\bm Z_{t\wedge\tau_R}^{s,\bm x})
={}&u(\bm x)
+\int_s^t\mathbb E\!\left[
\mathbf 1_{\{r\leq\tau_R\}}
\bm b(\bm Z_r^{s,\bm x})^\top
\nabla u(\bm Z_r^{s,\bm x})
\right]dr\\
&+\frac12\int_s^t\alpha(r)\mathbb E\!\left[
\mathbf 1_{\{r\leq\tau_R\}}
(\sigma\sigma^\top)(\bm Z_r^{s,\bm x})
:\nabla^2u(\bm Z_r^{s,\bm x})
\right]dr.
\end{aligned}
\]
It remains to remove that the stopping term can be safely removed by letting $R\to\infty$. We use the polynomial derivative
bounds and Lemma~\ref{lem:v3-restarted-state-moments} to control the stopped
terminal term and the two time integrals. The fundamental theorem of calculus
and the gradient bound give
\begin{equation}
|u(\bm y)| \leq |u(\bm 0)| + \|\bm y\|\int_0^1 \|\nabla u(\theta\bm y)\|\,d\theta \leq C\polybrac{\bm y}^{a+1}.
\label{eq:v3-fixed-window-u-growth}
\end{equation}
Linear growth of $\bm b$ and $\sigma$ gives
\begin{equation}
|\bm b(\bm y)^\top\nabla u(\bm y)| \leq C\polybrac{\bm y}^{a+1}, \qquad |(\sigma\sigma^\top)(\bm y):\nabla^2u(\bm y)| \leq C\polybrac{\bm y}^{b+2}.
\label{eq:v3-fixed-window-dynkin-integrand-growth}
\end{equation}
These bounds control, respectively, the stopped terminal term, the
drift integrand, and the second-order integrand. Since $t-s\leq S$,
Lemma~\ref{lem:v3-restarted-state-moments}, applied at the orders $a+1$ and
$b+2$, gives
\[
\mathbb E\sup_{s\leq r\leq t}
\polybrac{\bm Z_r^{s,\bm x}}^{a+1}<\infty,
\qquad
\mathbb E\sup_{s\leq r\leq t}
\polybrac{\bm Z_r^{s,\bm x}}^{b+2}<\infty.
\]

The assumptions of Lemma~\ref{lem:v3-restarted-state-moments} ensure
nonexplosion on finite intervals, so $\tau_R\to\infty$ almost surely and
$\bm Z_{t\wedge\tau_R}^{s,\bm x}\to\bm Z_t^{s,\bm x}$ almost surely.
By \eqref{eq:v3-fixed-window-u-growth}, since $t\wedge\tau_R\in[s,t]$,
\[
|u(\bm Z_{t\wedge\tau_R}^{s,\bm x})|
\leq C\polybrac{\bm Z_{t\wedge\tau_R}^{s,\bm x}}^{a+1}
\leq C\sup_{s\leq r\leq t}
\polybrac{\bm Z_r^{s,\bm x}}^{a+1}.
\]
The final random variable is integrable by
Lemma~\ref{lem:v3-restarted-state-moments}, as stated in
\eqref{eq:v3-restarted-state-moments}. Therefore the dominated convergence
theorem gives
\[
\mathbb E u(\bm Z_{t\wedge\tau_R}^{s,\bm x})
\longrightarrow
\mathbb E u(\bm Z_t^{s,\bm x})=P_{s,t}u(\bm x).
\]

The same argument applies to the two time-integral terms. By
\eqref{eq:v3-fixed-window-dynkin-integrand-growth}, their integrands are
bounded respectively by
\[
C\polybrac{\bm Z_r^{s,\bm x}}^{a+1}
\qquad\text{and}\qquad
C\polybrac{\bm Z_r^{s,\bm x}}^{b+2}.
\]
The corresponding path suprema are integrable by
\eqref{eq:v3-restarted-state-moments}. Since
$\mathbf 1_{\{r\leq\tau_R\}}\to1$ almost surely, dominated convergence
therefore removes the stopping from both time integrals. Letting $R\to\infty$
in the stopped expectation identity gives
\eqref{eq:v3-fixed-window-dynkin}.
\Endproof

The $p$-moment consequence of
Lemma~\ref{lem:window-two-bound-taylor} already provides the
stochastic-approximation estimate needed below. It remains to record the
corresponding first-order approximation over one SDE interval.
Proposition~\ref{prop:v3-primitive-window-verification} will apply these
two estimates to the same backward function.

\begin{lemma}[One-step SDE approximation]
\label{lem:v3-sde-one-step-expansion}
Under the assumptions of Lemma~\ref{lem:v3-restarted-state-moments}, fix
$v,\kappa\geq0$ and let $u\in C^2(\mathbb R^d)$ satisfy, for some
$C_u<\infty$,
\begin{equation}
\|\nabla u(\bm y)\|
\leq C_u\polybrac{\bm y}^{v},
\qquad
\|\nabla^2u(\bm y)\|
\leq C_u\polybrac{\bm y}^{v+\kappa}.
\label{eq:v3-sde-one-step-derivative-growth}
\end{equation}
Set $q:=v+\kappa+2$. Then, uniformly in $k$ and $\bm x$,
\begin{equation}
\left|
P_{t_k,t_{k+1}}u(\bm x)-u(\bm x)
-\alpha_k\bm b(\bm x)^\top\nabla u(\bm x)
\right|
\leq C\alpha_k^2\polybrac{\bm x}^{q},
\label{eq:v3-sde-one-step-expansion}
\end{equation}
where $C$ may depend on the fixed coefficient data, $v,\kappa$, $C_u$, and
$\alpha_{\max}$, but not on $k$ or $\bm x$.
\end{lemma}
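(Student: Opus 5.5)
We start from the Dynkin formula of Lemma~\ref{lem:v3-fixed-window-dynkin}, applied on $[s,t]=[t_k,t_{k+1}]$ with $a=v$ and $b=v+\kappa$. Since $\alpha(r)=\alpha_k$ on this interval and $t_{k+1}-t_k=\alpha_k$, the formula gives
\[
P_{t_k,t_{k+1}}u(\bm x)-u(\bm x)-\alpha_k\bm b(\bm x)^\top\nabla u(\bm x)
=\int_{t_k}^{t_{k+1}}\mathbb E\bigl[\psi(\bm Z_r)-\psi(\bm x)\bigr]dr
+\frac{\alpha_k}{2}\int_{t_k}^{t_{k+1}}\mathbb E\bigl[A(\bm Z_r):\nabla^2u(\bm Z_r)\bigr]dr,
\]
where $\psi:=\bm b^\top\nabla u$ and $\bm Z_r=\bm Z_r^{t_k,\bm x}$. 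We bound the two terms separately; each must be $O(\alpha_k^2\polybrac{\bm x}^{q})$.

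\emph{Second-order term.} Linear growth of $\sigma$ gives $|A(\bm y):\nabla^2u(\bm y)|\leq C\polybrac{\bm y}^{v+\kappa+2}=C\polybrac{\bm y}^{q}$. Lemma~\ref{lem:v3-restarted-state-moments} with $S=\alpha_{\max}$ gives $\mathbb E\sup_r\polybrac{\bm Z_r}^{q}\leq C\polybrac{\bm x}^q$. The integral is therefore at most $\alpha_k\cdot C\polybrac{\bm x}^q$, and the whole term is at most $C\alpha_k^2\polybrac{\bm x}^q$.

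\emph{Drift-variation term.} This is the main obstacle, because we need $|\mathbb E\psi(\bm Z_r)-\psi(\bm x)|\leq C\alpha_k\polybrac{\bm x}^q$ for $r\in[t_k,t_{k+1}]$, and $\psi$ is only $C^1$.

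The first thing to try is a mean-value bound. We have
\[
\nabla\psi=D\bm b^\top\nabla u+\nabla^2u\,\bm b,
\]
so $\|\nabla\psi(\bm y)\|\leq C\polybrac{\bm y}^{v+\kappa+1}$, using bounded $D\bm b$ and linear growth of $\bm b$. However, this only gives
\[
|\psi(\bm Z_r)-\psi(\bm x)|\leq C\sup_{\theta}\polybrac{\bm x+\theta(\bm Z_r-\bm x)}^{v+\kappa+1}\|\bm Z_r-\bm x\|,
\]
and $\mathbb E\|\bm Z_r-\bm x\|^2$ is of order $(r-t_k)\alpha_k\polybrac{\bm x}^2+(r-t_k)^2\polybrac{\bm x}^2$. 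The martingale part alone is $\sqrt{\alpha_k(r-t_k)}\leq\alpha_k$. So the displacement is $O(\alpha_k\polybrac{\bm x})$ in $L^2$, and Cauchy--Schwarz with the state moments yields $C\alpha_k\polybrac{\bm x}^{v+\kappa+2}$.

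The diffusion coefficient carries the factor $\sqrt{\alpha_k}$ and the window has length $\alpha_k$, so the first-order bound already suffices and no second Dynkin expansion of $\psi$ is needed. The step to verify carefully is the displacement estimate
\[
\mathbb E\sup_{t_k\leq r\leq t_{k+1}}\|\bm Z_r-\bm x\|^{2}\leq C\alpha_k^{2}\polybrac{\bm x}^{2}.
\]
We obtain it from the SDE by bounding the drift integral by $\alpha_k\sup_r\|\bm b(\bm Z_r)\|$ and the stochastic integral by BDG as $\alpha_k\,\alpha_k\sup\|\sigma\|^2$, both controlled by Lemma~\ref{lem:v3-restarted-state-moments}. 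We also need the weight $\polybrac{\bm x+\theta(\bm Z_r-\bm x)}\leq C(\polybrac{\bm x}+\|\bm Z_r-\bm x\|)$, whose moments are likewise controlled. The drift-variation term then contributes at most $\alpha_k\cdot C\alpha_k\polybrac{\bm x}^q$.

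Combining the two terms gives \eqref{eq:v3-sde-one-step-expansion}. The constants depend only on the coefficient data, $C_u$, $v$, $\kappa$ and $\alpha_{\max}$, since every moment bound was taken on a window of length at most $\alpha_{\max}$ and is uniform in the restart time.
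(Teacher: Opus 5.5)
Your proposal is correct and follows essentially the same route as the paper. Both arguments apply Dynkin's formula on $[t_k,t_{k+1}]$ and split the error into two terms. The drift-variation term is handled with the gradient bound $\|\nabla(\bm b^\top\nabla u)(\bm y)\|\leq C\polybrac{\bm y}^{v+\kappa+1}$, the $L^2$ displacement estimate $\mathbb E\|\bm Z_r-\bm x\|^2\leq C\{(r-t_k)^2+\alpha_k(r-t_k)\}\polybrac{\bm x}^2$, and Cauchy--Schwarz. The second-order term is bounded directly by the state moments. The only differences are cosmetic: you use a BDG supremum bound where the paper uses the It\^o isometry at fixed times, and you bound the intermediate weight by $\polybrac{\bm x}+\|\bm Z_r-\bm x\|$ rather than by $\polybrac{\bm x}^{v+\kappa+1}+\polybrac{\bm Z_r}^{v+\kappa+1}$.
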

\Beginproof{}
Set $b_u(\bm y):=\bm b(\bm y)^\top\nabla u(\bm y)$. The derivative
hypotheses satisfy Lemma~\ref{lem:v3-fixed-window-dynkin} with $a=v$ and
$b=v+\kappa$. Since $\alpha_k\leq\alpha_{\max}$ and
$\alpha(t)=\alpha_k$ on $[t_k,t_{k+1})$, Dynkin's formula gives
\begin{equation}
\begin{aligned}
P_{t_k,t_{k+1}}u(\bm x)
={}&u(\bm x)
+\int_0^{\alpha_k}\mathbb E[b_u(\bm Z_{t_k+r}^{t_k,\bm x})]\,dr\\
&+\frac{\alpha_k}{2}\int_0^{\alpha_k}\mathbb E\left[
(\sigma\sigma^\top)(\bm Z_{t_k+r}^{t_k,\bm x})
:\nabla^2u(\bm Z_{t_k+r}^{t_k,\bm x})\right]dr.
\end{aligned}
\label{eq:v3-sde-one-step-dynkin}
\end{equation}
Since $\alpha_kb_u(\bm x)=\int_0^{\alpha_k}b_u(\bm x)\,dr$,
subtracting $u(\bm x)+\alpha_kb_u(\bm x)$ gives the exact identity
\begin{equation}
\begin{aligned}
&P_{t_k,t_{k+1}}u(\bm x)-u(\bm x)-\alpha_kb_u(\bm x)\\
&\quad=\int_0^{\alpha_k}\mathbb E\left[
b_u(\bm Z_{t_k+r}^{t_k,\bm x})-b_u(\bm x)\right]dr\\
&\qquad+\frac{\alpha_k}{2}\int_0^{\alpha_k}\mathbb E\left[
(\sigma\sigma^\top)(\bm Z_{t_k+r}^{t_k,\bm x})
:\nabla^2u(\bm Z_{t_k+r}^{t_k,\bm x})\right]dr.
\end{aligned}
\label{eq:v3-sde-one-step-error-identity}
\end{equation}
We bound the two terms on the right-hand side in turn. To control the first,
we use the displacement of the restarted diffusion. The coefficient
assumptions give
\begin{equation}
\|\bm b(\bm y)\|+\|\sigma(\bm y)\|_F\leq C\polybrac{\bm y},
\qquad
\|D\bm b(\bm y)\|\leq C.
\label{eq:v3-one-step-coefficient-bounds}
\end{equation}
All moment estimates below use
\eqref{eq:v3-restarted-state-moments} with $S=\alpha_{\max}$.
For $0\leq r\leq\alpha_k$, \eqref{eq:v3-associated-sde} gives
\[
\bm Z_{t_k+r}^{t_k,\bm x}-\bm x
=\int_0^r\bm b(\bm Z_{t_k+s}^{t_k,\bm x})\,ds
+\sqrt{\alpha_k}\int_0^r
\sigma(\bm Z_{t_k+s}^{t_k,\bm x})\,d\bm B_{t_k+s}.
\]
The Cauchy--Schwarz inequality, It\^o isometry, linear growth of the
coefficients, and the restarted-SDE moment bound
\eqref{eq:v3-restarted-state-moments} yield
\begin{equation}
\begin{aligned}
\mathbb E\|\bm Z_{t_k+r}^{t_k,\bm x}-\bm x\|^2
&\leq
2\mathbb E\left\|\int_0^r\bm b(\bm Z_{t_k+s}^{t_k,\bm x})\,ds\right\|^2
+2\alpha_k\mathbb E\left\|\int_0^r\sigma(\bm Z_{t_k+s}^{t_k,\bm x})\,d\bm B_{t_k+s}\right\|^2\\
&\leq
2r\int_0^r\mathbb E\|\bm b(\bm Z_{t_k+s}^{t_k,\bm x})\|^2\,ds
+2\alpha_k\int_0^r
\mathbb E\|\sigma(\bm Z_{t_k+s}^{t_k,\bm x})\|_F^2\,ds\\
&\leq Cr\int_0^r\mathbb E\polybrac{\bm Z_{t_k+s}^{t_k,\bm x}}^2\,ds
+C\alpha_k\int_0^r\mathbb E\polybrac{\bm Z_{t_k+s}^{t_k,\bm x}}^2\,ds\\
&\leq C(r^2+\alpha_k r)\polybrac{\bm x}^{2}.
\end{aligned}
\label{eq:v3-primitive-one-step-displacement}
\end{equation}
The product rule gives
\[
Db_u(\bm y)
=D\bm b(\bm y)^\top\nabla u(\bm y)
+\nabla^2u(\bm y)\bm b(\bm y).
\]
Using \eqref{eq:v3-sde-one-step-derivative-growth} and
\eqref{eq:v3-one-step-coefficient-bounds},
\begin{equation}
\begin{aligned}
\|Db_u(\bm y)\|
&\leq \|D\bm b(\bm y)\|\,\|\nabla u(\bm y)\|
+\|\nabla^2u(\bm y)\|\,\|\bm b(\bm y)\|\\
&\leq C\polybrac{\bm y}^{v}
+C\polybrac{\bm y}^{v+\kappa+1}\\
&\leq C\polybrac{\bm y}^{v+\kappa+1},
\end{aligned}
\label{eq:v3-sde-drift-derivative-bound}
\end{equation}
where the last inequality uses $\kappa\geq0$ and
$\polybrac{\bm y}\geq1$. The fundamental theorem of calculus along the
segment from $\bm x$ to $\bm y$ gives
\[
b_u(\bm y)-b_u(\bm x)
=\int_0^1Db_u\!\left(\bm x+\theta(\bm y-\bm x)\right)^\top
(\bm y-\bm x)\,d\theta.
\]
Together with the polynomial translation inequality
\eqref{eq:polybrac-translation}, this implies
\[
|b_u(\bm y)-b_u(\bm x)|
\leq C\|\bm y-\bm x\|
\left\{\polybrac{\bm x}^{v+\kappa+1}
+\polybrac{\bm y}^{v+\kappa+1}\right\}.
\]
Set $\bm y=\bm Z_{t_k+r}^{t_k,\bm x}$. Cauchy--Schwarz gives
\begin{equation}
\begin{aligned}
&\mathbb E|b_u(\bm Z_{t_k+r}^{t_k,\bm x})-b_u(\bm x)|\\
&\quad\leq C\left(\mathbb E
\|\bm Z_{t_k+r}^{t_k,\bm x}-\bm x\|^2\right)^{1/2}\\
&\qquad\quad\times\left\{\polybrac{\bm x}^{v+\kappa+1}
+\left(\mathbb E\polybrac{\bm Z_{t_k+r}^{t_k,\bm x}}^{2(v+\kappa+1)}\right)^{1/2}\right\}.
\end{aligned}
\end{equation}
By \eqref{eq:v3-primitive-one-step-displacement}, the first factor is at
most $C(r+\sqrt{\alpha_k r})\polybrac{\bm x}$, using
$\sqrt{r^2+\alpha_k r}\leq r+\sqrt{\alpha_k r}$. By
\eqref{eq:v3-restarted-state-moments} with moment order
$2(v+\kappa+1)$, the second factor is at most
$C\polybrac{\bm x}^{v+\kappa+1}$. Thus
\begin{equation}
\mathbb E|b_u(\bm Z_{t_k+r}^{t_k,\bm x})-b_u(\bm x)|
\leq C(r+\sqrt{\alpha_k r})\polybrac{\bm x}^{v+\kappa+2}.
\label{eq:v3-sde-drift-variation-bound}
\end{equation}
For the second term in \eqref{eq:v3-sde-one-step-error-identity}, linear
growth of $\sigma$ and the Hessian bound give
\[
|(\sigma\sigma^\top)(\bm y):\nabla^2u(\bm y)|
\leq C\polybrac{\bm y}^{v+\kappa+2}.
\]
Using \eqref{eq:v3-restarted-state-moments} with moment order $q$ yields
\begin{equation}
\mathbb E\left|
(\sigma\sigma^\top)(\bm Z_{t_k+r}^{t_k,\bm x})
:\nabla^2u(\bm Z_{t_k+r}^{t_k,\bm x})\right|
\leq C\polybrac{\bm x}^{q}.
\label{eq:v3-sde-diffusion-term-bound}
\end{equation}
Substituting \eqref{eq:v3-sde-drift-variation-bound} and
\eqref{eq:v3-sde-diffusion-term-bound} into
\eqref{eq:v3-sde-one-step-error-identity} gives
\[
\begin{aligned}
&|P_{t_k,t_{k+1}}u(\bm x)-u(\bm x)-\alpha_kb_u(\bm x)|\\
&\quad\leq C\polybrac{\bm x}^{q}
\left\{\int_0^{\alpha_k}(r+\sqrt{\alpha_k r})\,dr
+\alpha_k\int_0^{\alpha_k}dr\right\}.
\end{aligned}
\]
Here the factor $1/2$ has been absorbed into $C$, and
\[
\int_0^{\alpha_k}(r+\sqrt{\alpha_k r})\,dr=\frac76\alpha_k^2,
\qquad
\alpha_k\int_0^{\alpha_k}dr=\alpha_k^2.
\]
Since $b_u(\bm x)=\bm b(\bm x)^\top\nabla u(\bm x)$, absorbing the
numerical constants into $C$ proves \eqref{eq:v3-sde-one-step-expansion}.
\Endproof


\begin{proposition}[Primitive finite window verification]
\label{prop:v3-primitive-window-verification}
Under Assumption~\ref{assum:v3-primitive-finite-window}, for
$m\leq n\leq N_T(m)$ satisfying
$\mathbb{E}\polybrac{\bm X_m}^{p}<\infty$,
\begin{equation}
\left|\mathbb{E}V(\bm X_n)
-\mathbb{E}\!\left[P_{t_m,t_n}V(\bm X_m)\right]\right|
\leq C_T\mathbb{E}\polybrac{\bm X_m}^{p}
\int_{t_m}^{t_n}\alpha(s)\,ds.
\label{eq:v3-primitive-window-comparison}
\end{equation}
If, in addition, the stepsizes are nonincreasing, then
\[
\left|\mathbb{E}V(\bm X_n)
-\mathbb{E}\!\left[P_{t_m,t_n}V(\bm X_m)\right]\right|
\leq C_T\alpha_m\mathbb{E}\polybrac{\bm X_m}^{p}.
\]
If, in addition, $M_p:=\sup_{k\geq0}\mathbb{E}\polybrac{\bm X_k}^{p}<\infty$,
then

\begin{equation}
\mathbb{E}V(\bm X_n)
\leq \mathbb{E}\!\left[P_{t_m,t_n}V(\bm X_m)\right]
+C_TM_p\int_{t_m}^{t_n}\alpha(s)\,ds.
\label{eq:v3-primitive-window-endpoint-bound}
\end{equation}
Therefore the finite window approximation
\eqref{eq:finite-window-sde-approximation} holds with
$\epsilon_m=0$ and $\rho(s)=C_TM_p\alpha(s)$.
\end{proposition}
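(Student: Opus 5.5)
The proof is a telescoping argument over the backward functions $u_{k,n}=P_{t_k,t_n}V$ from \eqref{eq:v3-backward-function}. Since $u_{n,n}=V$, we write
\[
V(\bm X_n)-u_{m,n}(\bm X_m)=\sum_{k=m}^{n-1}\bigl\{u_{k+1,n}(\bm X_{k+1})-u_{k,n}(\bm X_k)\bigr\}.
\]
By the composition identity \eqref{eq:v3-backward-composition}, $u_{k,n}=P_{t_k,t_{k+1}}u_{k+1,n}$. Each summand therefore has conditional expectation
\[
\mathbb E[u_{k+1,n}(\bm X_{k+1})\mid\mathcal F_k]-P_{t_k,t_{k+1}}u_{k+1,n}(\bm X_k),
\]
which is a one-step weak defect. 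Since $t_n-t_{k+1}\leq t_{N_T(m)}-t_m<T+\alpha_{\max}$, Lemma~\ref{lem:v3-polynomial-window-control} with $S=T+\alpha_{\max}$ shows that every $u_{k+1,n}$, for $m\leq k<n$, is $C^2$. It also gives $\|\nabla u_{k+1,n}\|\leq C_S\polybrac{\bm x}^v$ and $\|\nabla^2u_{k+1,n}\|\leq C_S\polybrac{\bm x}^{v+\kappa}$, with constants uniform in $k$, $n$ and $m$.

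Next, bound each defect. Apply \eqref{eq:window-taylor-conditional-p-moment-bound} from Lemma~\ref{lem:window-two-bound-taylor} with $u=u_{k+1,n}$, $\bm x=\bm X_k$, $h=\alpha_k$ and $\bm H=\bm H_{k+1}$, taking $\gamma_1=v$ and $\gamma_2=v+\kappa$. This is allowed because $p\geq v+\kappa+2$ and \eqref{eq:v3-primitive-oracle-p-moment} holds. Since $\mathbb E[\bm H_{k+1}\mid\mathcal F_k]=\bm b(\bm X_k)$, we get
\[
\mathbb E[u(\bm X_{k+1})\mid\mathcal F_k]=u(\bm X_k)+\alpha_k\bm b(\bm X_k)^\top\nabla u(\bm X_k)+O\bigl(\alpha_k^2\polybrac{\bm X_k}^{v+\kappa+2}\bigr).
\]
Lemma~\ref{lem:v3-sde-one-step-expansion}, applied to the same $u$ at the point $\bm X_k$ (which is $\mathcal F_k$-measurable), gives the same first-order expression for $P_{t_k,t_{k+1}}u(\bm X_k)$ with remainder $O(\alpha_k^2\polybrac{\bm X_k}^{q})$, where $q=v+\kappa+2\leq p$. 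The first-order terms cancel, so the conditional defect is at most $C_T\alpha_k^2\polybrac{\bm X_k}^{p}$.

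The main obstacle is the expectation step. Taking expectations and summing gives a bound in terms of $\sum_k\alpha_k^2\mathbb E\polybrac{\bm X_k}^p$, but the statement only involves $\mathbb E\polybrac{\bm X_m}^p$. So I would prove a fixed-window moment propagation bound
\[
\mathbb E[\polybrac{\bm X_k}^p\mid\mathcal F_m]\leq C_T\polybrac{\bm X_m}^p,\qquad m\leq k\leq N_T(m).
\]
The first attempt is a direct one-step estimate from \eqref{eq:v3-primitive-oracle-p-moment}. Minkowski's inequality gives
\[
\mathbb E[\|\bm X_{k+1}\|^p\mid\mathcal F_k]^{1/p}\leq\|\bm X_k\|+\alpha_kM^{1/p}\polybrac{\bm X_k},
\]
hence $\mathbb E[\polybrac{\bm X_{k+1}}^p\mid\mathcal F_k]\leq(1+C\alpha_k)^p\polybrac{\bm X_k}^p$. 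Iterating over the window yields the factor $\prod_k(1+C\alpha_k)^p\leq e^{Cp(T+\alpha_{\max})}$. This uses no martingale cancellation, but that is not needed on a fixed window. The same argument also shows all the relevant expectations are finite, so the telescoping and the tower property are justified, given $\mathbb E\polybrac{\bm X_m}^p<\infty$.

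Summing, we obtain
\[
\bigl|\mathbb EV(\bm X_n)-\mathbb E u_{m,n}(\bm X_m)\bigr|\leq C_T\mathbb E\polybrac{\bm X_m}^p\sum_{k=m}^{n-1}\alpha_k^2,
\]
and $\sum_{k=m}^{n-1}\alpha_k^2=\int_{t_m}^{t_n}\alpha(s)\,ds$. This is \eqref{eq:v3-primitive-window-comparison}.

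For nonincreasing stepsizes, $\int_{t_m}^{t_n}\alpha\leq\alpha_m(t_n-t_m)\leq\alpha_m(T+\alpha_{\max})$, which gives the second bound. The bound \eqref{eq:v3-primitive-window-endpoint-bound} follows by replacing $\mathbb E\polybrac{\bm X_m}^p$ with $M_p$ and keeping only the upper side of the absolute value. This is the stated finite-window approximation with $\epsilon_m=0$ and $\rho(s)=C_TM_p\alpha(s)$.
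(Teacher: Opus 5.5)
Your proposal is correct and follows the paper's proof essentially step for step. It propagates $\mathbb E[\polybrac{\bm X_k}^p\mid\mathcal F_m]$ over the fixed window, obtains the one-step defect for $u_{k+1,n}$ by matching \eqref{eq:window-taylor-conditional-p-moment-bound} and Lemma~\ref{lem:v3-sde-one-step-expansion} at the common first-order term $u(\bm X_k)+\alpha_k\bm b(\bm X_k)^\top\nabla u(\bm X_k)$, and telescopes through $u_{k,n}=P_{t_k,t_{k+1}}u_{k+1,n}$.

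Write out carefully the passage from your Minkowski bound to $\mathbb E[\polybrac{\bm X_{k+1}}^p\mid\mathcal F_k]\le(1+C\alpha_k)^p\polybrac{\bm X_k}^p$, because it must not lose a fixed constant per step when a window contains unboundedly many steps. Applying Minkowski directly with $\polybrac{\bm x+\bm y}\le\polybrac{\bm x}+\|\bm y\|$, or using the paper's convexity bound with weight $(1+\alpha_k)^{p-1}$, handles this cleanly.
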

\Beginproof{ of Proposition~\ref{prop:v3-primitive-window-verification}}
We first establish the discrete fixed-window $p$-moment bound.
We then apply the one-step SDE approximation and the $p$-moment
specialization of Lemma~\ref{lem:window-two-bound-taylor} to the same
backward function. Both approximate the same first-order expression.
Finally, we telescope the resulting one-step defect over the finite window
to obtain \eqref{eq:v3-primitive-window-comparison}.

For the remainder of the proof set
\(
S_T:=T+\alpha_{\max}.
\)
If $m\leq k<n\leq N_T(m)$, all restarted-SDE and backward-semigroup
intervals used below have length at most $S_T$. Hence
Lemmas~\ref{lem:v3-restarted-state-moments} and
\ref{lem:v3-polynomial-window-control} apply uniformly with $S=S_T$.

\emph{Discrete window stability.}
From $\bm X_{k+1}=\bm X_k+\alpha_k\bm H_{k+1}$ and weighted convexity,
\[
\begin{aligned}
\mathbb E[\polybrac{\bm X_{k+1}}^p\mid\mathcal F_k]
&\leq
(1+\alpha_k)^{p-1}
\left\{
\polybrac{\bm X_k}^p
+\alpha_k\mathbb E[\|\bm H_{k+1}\|^p\mid\mathcal F_k]
\right\}\\
&\leq
(1+\alpha_k)^{p-1}(1+M\alpha_k)
\polybrac{\bm X_k}^p\\
&\leq
\exp\{(p-1+M)\alpha_k\}\polybrac{\bm X_k}^p.
\end{aligned}
\]
Iterating and using $t_k-t_m\leq T+\alpha_{\max}$ gives

\begin{equation}
\mathbb E[\polybrac{\bm X_k}^p\mid\mathcal F_m]
\leq C_T\polybrac{\bm X_m}^p,
\qquad m\leq k\leq N_T(m),
\label{eq:v3-discrete-window-p-moment}
\end{equation}
where one may take
$C_T=\exp\{(p-1+M)(T+\alpha_{\max})\}$.

\emph{One-step SDE--stochastic-approximation comparison.}
Fix $m\leq k<n\leq N_T(m)$ and write
$u=u_{k+1,n}=P_{t_{k+1},t_n}V$, with
$q:=v+\kappa+2\leq p$. By
\eqref{eq:v3-polynomial-window-derivative-bounds}, the hypotheses of both
the SDE approximation and the conditional Taylor bound hold uniformly on
the window, with derivative constant $C_T$.

Applying Lemma~\ref{lem:v3-sde-one-step-expansion} at
$\bm x=\bm X_k$ gives
\[
\left|
P_{t_k,t_{k+1}}u(\bm X_k)-u(\bm X_k)
-\alpha_k\bm b(\bm X_k)^\top\nabla u(\bm X_k)
\right|
\leq C_T\alpha_k^2\polybrac{\bm X_k}^{q}.
\]
For the stochastic-approximation estimate, apply
\eqref{eq:window-taylor-conditional-p-moment-bound} with
$\gamma_1=v$, $\gamma_2=v+\kappa$, $\bm x=\bm X_k$,
$h=\alpha_k$, $\bm H=\bm H_{k+1}$, and $\mathcal F=\mathcal F_k$.
The oracle bound \eqref{eq:v3-primitive-oracle-p-moment} supplies the
required $p$-moment, since $p\geq q=\gamma_2+2$. Using
$\bm X_{k+1}=\bm X_k+\alpha_k\bm H_{k+1}$ and
$\mathbb E[\bm H_{k+1}\mid\mathcal F_k]=\bm b(\bm X_k)$ gives
\[
\left|
\mathbb E[u(\bm X_{k+1})\mid\mathcal F_k]-u(\bm X_k)
-\alpha_k\bm b(\bm X_k)^\top\nabla u(\bm X_k)
\right|
\leq C_T\alpha_k^2\polybrac{\bm X_k}^{q}.
\]
Both quantities are therefore within
$C_T\alpha_k^2\polybrac{\bm X_k}^{p}$ of the same first-order expression
\[
u(\bm X_k)+\alpha_k\bm b(\bm X_k)^\top\nabla u(\bm X_k),
\]
where we used $q\leq p$ and enlarged $C_T$ if needed. The triangle
inequality gives, after enlarging $C_T$,
\[
\left|
\mathbb E[u(\bm X_{k+1})\mid\mathcal F_k]
-P_{t_k,t_{k+1}}u(\bm X_k)
\right|
\leq C_T\alpha_k^2\polybrac{\bm X_k}^{p}.
\]
Using $u=u_{k+1,n}$ and
$P_{t_k,t_{k+1}}u_{k+1,n}=u_{k,n}$ gives the one-step defect:

\begin{equation}
\left|
\mathbb E[u_{k+1,n}(\bm X_{k+1})\mid\mathcal F_k]
-u_{k,n}(\bm X_k)
\right|
\leq C_T\alpha_k^2\polybrac{\bm X_k}^{p}.
\label{eq:v3-primitive-one-step-defect}
\end{equation}

\emph{Passage to the finite window.}
Since $u_{n,n}=V$ and
$u_{k,n}=P_{t_k,t_{k+1}}u_{k+1,n}$, telescoping and the tower property give

\[
\begin{aligned}
&\mathbb E[V(\bm X_n)\mid\mathcal F_m]
-P_{t_m,t_n}V(\bm X_m)\\
&\qquad=
\sum_{k=m}^{n-1}
\mathbb E\!\left[
\mathbb E[u_{k+1,n}(\bm X_{k+1})\mid\mathcal F_k]
-u_{k,n}(\bm X_k)
\,\middle|\,\mathcal F_m
\right].
\end{aligned}
\]
Using \eqref{eq:v3-primitive-one-step-defect} and
\eqref{eq:v3-discrete-window-p-moment} therefore gives
\begin{equation}
\begin{aligned}
\left|
\mathbb E[V(\bm X_n)\mid\mathcal F_m]
-P_{t_m,t_n}V(\bm X_m)
\right|
&\leq C_T\polybrac{\bm X_m}^{p}
\sum_{k=m}^{n-1}\alpha_k^2\\
&=C_T\polybrac{\bm X_m}^{p}
\int_{t_m}^{t_n}\alpha(s)\,ds.
\end{aligned}
\label{eq:v3-primitive-window-conditional}
\end{equation}
where the equality uses $t_{k+1}-t_k=\alpha_k$ and the piecewise-constant
definition of $\alpha(s)$. Taking expectations and using
$|\mathbb EY|\leq\mathbb E|Y|$ gives
\eqref{eq:v3-primitive-window-comparison}. If the stepsizes are
nonincreasing, then
\[
\int_{t_m}^{t_n}\alpha(s)\,ds
\leq\alpha_m(t_n-t_m)
\leq(T+\alpha_{\max})\alpha_m,
\]
so the preceding estimate gives the stated
$C_T\alpha_m\polybrac{\bm X_m}^{p}$ bound after enlarging $C_T$. Taking
expectations gives the endpoint assertion of the proposition.

If $M_p=\sup_k\mathbb E\polybrac{\bm X_k}^{p}<\infty$, the comparison also
gives

\[
\mathbb EV(\bm X_n)
\leq\mathbb E[P_{t_m,t_n}V(\bm X_m)]
+C_TM_p\int_{t_m}^{t_n}\alpha(s)\,ds.
\]
Thus \eqref{eq:finite-window-sde-approximation} holds with
$\epsilon_m=0$ and $\rho(s)=C_TM_p\alpha(s)$, as required. This proves all
claims of the proposition.
\Endproof

\section{Proof of the Rate Transfer Theorem}\label{sec:proof-main-rate-transfer}
This appendix combines the finite-window comparison with the SDE Lyapunov envelope and restart argument to prove the Rate Transfer Theorem.

The proof preserves the modular development of the transfer argument. The
technical lemmas establish the primitive finite-window comparison; that
comparison supplies the input to the abstract restart theorem; the final proof
below applies the abstract theorem under the Lyapunov hypothesis in
Theorem~\ref{thm:main-rate-transfer}. We give the abstract transfer result,
the primitive verification, and their proofs in separate modules.

The main result proved in this appendix section is
Theorem~\ref{thm:main-rate-transfer}. Its proof combines the primitive
finite-window verification with the abstract SDE transfer result.

The SDE Transfer Theorem has two main inputs: a Lyapunov bound for the
associated SDE and a finite window approximation comparing the stochastic
approximation with that SDE. In applications, these are the two main
conditions that must be verified.

We now state one endpoint transfer theorem for the stochastic approximation
over fixed diffusion time windows. Its two inputs are a Lyapunov generator
inequality for the restarted SDE and a mean finite window approximation for
the stochastic approximation. For the endpoint transfer machinery assume
\[
\alpha_n>0,\qquad \alpha_{\max}:=\sup_n\alpha_n<\infty,\qquad
\sum_{n=0}^{\infty}\alpha_n=\infty.
\]
Recall the restarted process $\bm Z^{s,\bm x}$ and its evolution family
$P_{s,t}$ defined in Section~\ref{sec:v3-associated-sde}. Its composition
property \eqref{eq:P-composition} will be used repeatedly. The scalar Lyapunov
factor is

\[
R(s,t)=\exp\left\{-\int_s^t\lambda(r)\,dr\right\}.
\]
Its composition property is given by \eqref{eq:R-composition}.

The endpoint transfer route combines a continuous estimate with a finite
window comparison:
\[
\text{continuous estimate}
\;+
\text{finite window comparison}
\;\Longrightarrow\;
\text{stochastic approximation estimate}.
\]
We first give the abstract restart theorem, then verify its comparison
hypothesis from primitive model conditions.

\subsection{Lyapunov-envelope and endpoint transfer}

The main result of this subsection is the SDE transfer theorem given
below. It
combines a Lyapunov bound for the associated SDE with a finite window
approximation between the stochastic approximation and that SDE. We state
the result first and then develop the two ingredients used in its proof.

\begin{theorem}[SDE Transfer Theorem]\label{thm:finite-window-sde-rate-transfer}
Assume $V\in C^2(\mathbb R^d)$, $V\geq0$, and
$\mathbb{E}V(\bm X_0)<\infty$. Assume positive stepsizes with
$\alpha_{\max}:=\sup_n\alpha_n<\infty$ and $\sum_n\alpha_n=\infty$, and suppose that the restarted SDE
is nonexplosive on finite intervals and that It\^o's formula applies locally.

Suppose, for deterministic nonnegative $\lambda$, $g$, and $\rho$ that are
locally integrable,
\begin{equation}\label{eq:finite-window-sde-generator}
\mathcal L_tV(\bm x)\leq-\lambda(t)V(\bm x)+g(t).
\end{equation}
Assume that, at the required restart starts, the finite window
approximation
\begin{equation}\label{eq:finite-window-sde-approximation}
\begin{aligned}
\mathbb E V(\bm X_n)
\leq
\mathbb E\!\left[P_{t_m,t_n}V(\bm X_m)\right]
+\epsilon_m\mathbb E V(\bm X_m)
+\int_{t_m}^{t_n}\rho(s)\,ds,
\qquad m\leq n\leq N_T(m),
\end{aligned}
\end{equation}
holds, where $\epsilon_m\geq0$ is deterministic.

Set $m_0=0$ and $m_{j+1}=N_T(m_j)$, and put
$R_j=R(t_{m_j},t_{m_{j+1}})$. Suppose that
$\sum_{j\geq0}\epsilon_{m_j}/R_j<\infty$. With
\[
C_\epsilon=\exp\left\{\sum_{j\geq0}\frac{\epsilon_{m_j}}{R_j}\right\},
\]
assume in addition that $0\leq\lambda(t)\leq\bar\lambda<\infty$ and set
\(
K_T=\exp\{\bar\lambda(T+\alpha_{\max})\}.
\)
Then, for every $n\geq0$,

\begin{equation}
\mathbb E V(\bm X_n)\leq C_\epsilon\Bigg[R(0,t_n)\mathbb E V(\bm X_0)+\int_0^{t_n}R(s,t_n)\{g(s)+K_T\rho(s)\}\,ds\Bigg].
\label{eq:finite-window-sde-rate-transfer-bound}
\end{equation}
\end{theorem}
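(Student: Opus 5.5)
We will prove the SDE Transfer Theorem in three steps. First we make the SDE finite-time Lyapunov bound rigorous, then we combine it with the finite-window approximation to get a one-window recursion, and finally we iterate that recursion over the restart blocks $m_j$.

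\emph{Step 1 (rigorous SDE envelope).} For fixed $(s,\bm x)$ we localize with exit times $\tau_R$ from balls. We apply It\^o's formula to $e^{\int_s^{t}\lambda(r)dr}V(\bm Z_t^{s,\bm x})$ up to $t\wedge\tau_R$. The generator inequality \eqref{eq:finite-window-sde-generator} makes the drift part at most $e^{\int_s^r\lambda}g(r)$, and the stopped stochastic integral has bounded integrand, so it has mean zero. Since $V\ge0$, Fatou's lemma as $R\to\infty$ (using nonexplosion) gives \eqref{eq:sde-finite-time-lyapunov}:
\[
P_{s,t}V(\bm x)\le R(s,t)V(\bm x)+\int_s^tR(r,t)g(r)\,dr .
\]
Only a lower bound on $V$ is needed for this step, not integrability.

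\emph{Step 2 (one window).} For $m\le n\le N_T(m)$ we substitute the envelope into \eqref{eq:finite-window-sde-approximation}. This gives
\[
\mathbb E V(\bm X_n)\le (R(t_m,t_n)+\epsilon_m)\mathbb E V(\bm X_m)+\int_{t_m}^{t_n}R(s,t_n)g(s)\,ds+\int_{t_m}^{t_n}\rho(s)\,ds.
\]
We then convert the $\rho$ term into the contracted form. For $s\in[t_m,t_n]$ we have $t_n-s\le T+\alpha_{\max}$ and $\lambda\le\bar\lambda$, so $R(s,t_n)\ge K_T^{-1}$. Hence $\int\rho\le K_T\int R(s,t_n)\rho(s)\,ds$. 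We also write $R(t_m,t_n)+\epsilon_m\le R(t_m,t_n)(1+\epsilon_m/R(t_m,t_n))$.

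\emph{Step 3 (iteration).} We apply Step 2 with $m=m_j$, $n=m_{j+1}$ for full blocks, and with $n\in(m_J,m_{J+1}]$ for the final partial block. Set $a_j=\mathbb E V(\bm X_{m_j})$, $F(a,b)=\int_{t_a}^{t_b}R(s,t_b)\{g+K_T\rho\}\,ds$, and $\theta_j=1+\epsilon_{m_j}/R_j$. The recursion is $a_{j+1}\le\theta_jR_ja_j+F(m_j,m_{j+1})$. By induction, with the composition rule \eqref{eq:R-composition}, we aim for
\[
a_j\le\Big(\prod_{i<j}\theta_i\Big)\big[R(0,t_{m_j})a_0+F(0,m_j)\big].
\]
The induction uses $\prod\theta_i\ge1$ to absorb the new forcing, and $R(s,t_{m_j})R_j=R(s,t_{m_{j+1}})$ to move the old forcing forward. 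The bound $\prod\theta_i\le\exp\sum\epsilon_{m_i}/R_i=C_\epsilon$ then gives the constant. If $J=0$ or $n=m_j$ the claim is immediate from Step 2 (with $R\le1$ and $\theta\ge1$).

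The main obstacle is the final partial window. There the factor $\epsilon_{m_J}/R(t_{m_J},t_n)$ involves $R(t_{m_J},t_n)$ rather than $R_J$. We will handle this by noting that $R(t_{m_J},t_n)\ge R(t_{m_J},t_{m_{J+1}})=R_J$, because $t_n\le t_{m_{J+1}}$ and $\lambda\ge0$. So $\epsilon_{m_J}/R(t_{m_J},t_n)\le\epsilon_{m_J}/R_J$, and the same $\theta_J$ works. A secondary point needs checking as well: the finite-window approximation is assumed only "at the required restart starts", and the finiteness of $\mathbb E V(\bm X_{m_j})$ must propagate inductively from $\mathbb EV(\bm X_0)<\infty$. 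Both follow from the recursion itself.
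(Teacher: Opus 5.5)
Your proposal is correct and follows essentially the paper's route: the integrating-factor and localization envelope $P_{s,t}V\le R(s,t)V+\int_s^tR(r,t)g(r)\,dr$, the one-window recursion with factor $R+\epsilon_{m_j}$, iteration over the blocks $m_j$ using $R(t_{m_J},t_n)\ge R_J$ for the final partial window, and the bound $R(s,t_n)\ge K_T^{-1}$ inside a window. The differences are cosmetic. You fold $\int\rho$ into $K_T\int R(s,t_n)\rho$ window by window before iterating and close by induction using $\prod\theta_i\ge1$, whereas the paper iterates a general residual $r_{m,n}$ via an explicit discrete Gr\"onwall expansion and converts at the end; also, your ``bounded integrand'' step tacitly needs $\sigma$ locally bounded, which the paper sidesteps by adding a localizing sequence for the local martingale.
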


The proof uses two results. First,
Lemma~\ref{lem:sde-convergence-envelope} converts the generator Lyapunov
bound into a finite time estimate for the restarted SDE. This lemma formalizes
the argument that we outlined in Section~2.4. Next,
Proposition~\ref{prop:finite-window-block-residual-transfer} shows how this
continuous estimate and a general finite window residual can be composed over
successive restart windows. This proposition is slightly more general than Theorem~\ref{thm:finite-window-sde-rate-transfer}, but it does not make explicit the underlying connection with stochastic calculus. After establishing these two results, we return to
prove the theorem by specializing the residual to the finite window
approximation above.

The following lemma isolates the main Lyapunov bound supplied by the
stochastic differential equation. It converts the generator inequality into
the finite time evolution-operator estimate used in the transfer argument.

\begin{lemma}[Finite-time SDE Lyapunov envelope]\label{lem:sde-convergence-envelope}
Let $V\in C^2(\mathbb R^d)$ with $V\geq0$. Suppose that the restarted SDE is nonexplosive on
finite intervals and that It\^o's formula applies locally. Suppose that
$\lambda$ and $g$ are deterministic, nonnegative, locally integrable
functions satisfying

\[
\mathcal L_tV(\bm x)\leq-\lambda(t)V(\bm x)+g(t).
\]
Then, for $s\leq t$,
\begin{equation}
P_{s,t}V(\bm x)
\leq
R(s,t)V(\bm x)+\int_s^tR(r,t)g(r)\,dr.
\end{equation}
\end{lemma}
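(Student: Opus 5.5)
The plan is an integrating-factor argument applied to the restarted diffusion, localized by stopping times, with Fatou's lemma removing the localization. Fix $s$ and $\bm x$, write $\bm Z_r=\bm Z_r^{s,\bm x}$, and consider the process
\[
Y_r:=R(r,t)^{-1}\cdots
\]
More conveniently, set $E(r):=\exp\{\int_s^r\lambda(q)\,dq\}=R(s,r)^{-1}$. This function is absolutely continuous, nondecreasing, and satisfies $E'(r)=\lambda(r)E(r)$ for a.e. $r$, because $\lambda$ is locally integrable. I will apply Itô's formula (product rule with a finite-variation deterministic factor) to $E(r)V(\bm Z_r)$. Since $V\in C^2$ and Itô's formula holds locally, introduce $\tau_N:=\inf\{r\geq s:\|\bm Z_r\|\geq N\}$. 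Then
\[
E(t\wedge\tau_N)V(\bm Z_{t\wedge\tau_N})=V(\bm x)+\int_s^{t\wedge\tau_N}E(r)\{\lambda(r)V(\bm Z_r)+\mathcal L_rV(\bm Z_r)\}\,dr+M_{t\wedge\tau_N},
\]
where $M$ is the stochastic integral $\int E(r)\sqrt{\alpha(r)}\nabla V(\bm Z_r)^\top\sigma(\bm Z_r)\,d\bm B_r$.

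Next I will insert the generator inequality, which gives $\lambda V+\mathcal L_rV\leq g(r)$. The drift integrand is therefore bounded by $E(r)g(r)\geq0$, and this bound survives extending the upper limit to $t$. On the stochastic integral, the integrand is bounded before $\tau_N$: $\nabla V$ and $\sigma$ are continuous on the closed ball, $\alpha\leq\alpha_{\max}$, and $E$ is bounded on $[s,t]$. So the stopped integral is a square-integrable martingale with mean zero. Taking expectations yields
\[
\mathbb E\big[E(t\wedge\tau_N)V(\bm Z_{t\wedge\tau_N})\big]\leq V(\bm x)+\int_s^tE(r)g(r)\,dr.
\]

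To remove the localization, I use nonexplosion, which gives $\tau_N\to\infty$ a.s., so $E(t\wedge\tau_N)V(\bm Z_{t\wedge\tau_N})\to E(t)V(\bm Z_t)$ a.s. by continuity. Because $V\geq0$, Fatou's lemma gives $E(t)P_{s,t}V(\bm x)\leq V(\bm x)+\int_s^tE(r)g(r)\,dr$. Finally I multiply by $E(t)^{-1}=R(s,t)$ and use $R(s,t)E(r)=R(r,t)$, which follows from the composition property \eqref{eq:R-composition}. This gives the claim. The bound holds in $[0,\infty]$ without any a priori integrability of $V(\bm Z_t)$; that is why Fatou is used rather than dominated convergence.

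I expect the main obstacle to be justifying the product rule when $\lambda$ is only locally integrable, so that $E$ is absolutely continuous but not $C^1$. My first approach is to note that Itô's formula for the product of a continuous semimartingale with a continuous finite-variation process needs only absolute continuity, so the $dE=\lambda E\,dr$ term enters as a Lebesgue integral. If that step is not accepted, the fallback is to prove the bound for continuous $\lambda$ and approximate in $L^1_{\rm loc}$; that route is more delicate, because the generator inequality is tied to the given $\lambda$. A secondary point is that $\mathcal L_r$ depends on $r$ only through the piecewise-constant $\alpha(r)$. This is harmless because it enters only inside a $dr$-integral.
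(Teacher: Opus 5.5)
Your proposal is correct and takes essentially the same route as the paper: the integrating factor $e^{\int_s^r\lambda(q)\,dq}$, the product rule with a continuous finite-variation factor, the generator inequality, then localization, expectation, Fatou via nonexplosion, and the composition property of $R$. The product rule needs only absolute continuity of the factor, as you suspect, so no fallback is required; the one real difference is that you localize by ball exit times alone and use continuity of $\nabla V$ and $\sigma$ to make the stopped integral a true martingale, whereas the paper stops at $\tau_R\wedge\sigma_N$ with an abstract localizing sequence $\sigma_N$ and applies Fatou twice, which avoids assuming anything about $\sigma$ beyond local applicability of It\^o's formula.
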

\Beginproof{}
Write

\[
A_r=\int_s^r\lambda(q)\,dq.
\]
Since $d e^{A_r}=e^{A_r}\lambda(r)\,dr$, It\^o's formula gives
\[
 dV(\bm Z_r^{s,\bm x})
=\mathcal L_rV(\bm Z_r^{s,\bm x})\,dr+dM_r,
\]
where $M$ is a local martingale. Since $e^{A_r}$ is deterministic and locally
bounded on every finite interval, define
\[
d\widetilde M_r:=e^{A_r}\,dM_r.
\]
Then $\widetilde M$ is again a local martingale.

Since the integrating factor has finite variation, its quadratic covariation
with $V(\bm Z_r^{s,\bm x})$ is zero. The product rule and the Lyapunov
inequality therefore give
\[
\begin{aligned}
d\left\{e^{A_r}V(\bm Z_r^{s,\bm x})\right\}
&=e^{A_r}\,dV(\bm Z_r^{s,\bm x})
 +V(\bm Z_r^{s,\bm x})\,d e^{A_r}\\
&=e^{A_r}\left\{\mathcal L_rV(\bm Z_r^{s,\bm x})
 +\lambda(r)V(\bm Z_r^{s,\bm x})\right\}\,dr
 +d\widetilde M_r\\
&\leq e^{A_r}g(r)\,dr+d\widetilde M_r.
\end{aligned}
\]
Indeed, the last inequality is precisely
$\mathcal L_rV\leq-\lambda(r)V+g(r)$. Let $\tau_R$ be the exit time of
$\bm Z^{s,\bm x}$ from the ball of radius $R$, and let $(\sigma_N)$ be a
localizing sequence such that $\widetilde M^{\sigma_N}$ is a martingale. Set
$\eta_{R,N}=\tau_R\wedge\sigma_N$.

Stopping, taking expectations, and using $g\geq0$ gives
\[
\mathbb{E}\!\left[
e^{A_{t\wedge\eta_{R,N}}}V(\bm Z_{t\wedge\eta_{R,N}}^{s,\bm x})
\right]
\leq V(\bm x)+\int_s^t e^{A_r}g(r)\,dr.
\]
Since $V\geq0$, Fatou's lemma as $N\to\infty$ gives
\[
\mathbb{E}\!\left[
e^{A_{t\wedge\tau_R}}V(\bm Z_{t\wedge\tau_R}^{s,\bm x})
\right]
\leq V(\bm x)+\int_s^t e^{A_r}g(r)\,dr.
\]
By finite time nonexplosion, $\tau_R\to\infty$ almost surely as
$R\to\infty$. A second application of Fatou therefore gives

\[\mathbb{E}\!\left[e^{A_t}V(\bm Z_t^{s,\bm x})\right] = e^{A_t}\mathbb{E}\!\left[V(\bm Z_t^{s,\bm x})\right] = e^{A_t}P_{s,t}V(\bm x) \leq V(\bm x)+\int_s^t e^{A_r}g(r)\,dr.\]
Thus

\begin{equation}\label{eq:sde-convergence-envelope}
P_{s,t}V(\bm x)
\leq R(s,t)V(\bm x)+\int_s^tR(r,t)g(r)\,dr.
\end{equation}
\Endproof

Lemma~\ref{lem:sde-convergence-envelope} supplies the continuous time
estimate used in the blockwise transfer argument.

The next proposition gives a general blockwise transfer result. Given a Lyapunov bound for the restarted SDE, it shows how a one-sided finite window comparison over successive restart windows gives a global bound for the stochastic approximation.

\begin{proposition}[General block residual finite window transfer]\label{prop:finite-window-block-residual-transfer}
Assume $V\in C^2(\mathbb R^d)$, $V\geq0$, and
$\mathbb{E}V(\bm X_{m_0})<\infty$. Assume positive stepsizes with
$\alpha_{\max}:=\sup_n\alpha_n<\infty$ and $\sum_n\alpha_n=\infty$, and suppose that the restarted SDE
is nonexplosive on finite intervals and that It\^o's formula applies locally.

Suppose, for deterministic nonnegative $\lambda$ and $g$ that are locally
integrable,
\[
\mathcal L_tV(\bm x)\leq-\lambda(t)V(\bm x)+g(t).
\]
Starting from $m_0$, set $m_{j+1}=N_T(m_j)$. Assume that, at each
required restart start $m$, for every $m\leq n\leq N_T(m)$,
\begin{equation}\label{eq:finite-window-V-approximation}
\mathbb{E}V(\bm X_n)
\leq
\mathbb{E}\!\left[P_{t_m,t_n}V(\bm X_m)\right]
+\epsilon_m\mathbb{E}V(\bm X_m)+r_{m,n},
\qquad m\leq n\leq N_T(m),
\end{equation}
where $\epsilon_m,r_{m,n}\geq0$ are deterministic. Put
$R_j=R(t_{m_j},t_{m_{j+1}})$ and suppose
$\sum_{j\geq0}\epsilon_{m_j}/R_j<\infty$. With

\[
C_\epsilon=\exp\left\{\sum_{j\geq0}\frac{\epsilon_{m_j}}{R_j}\right\},
\]
for $n>m_0$, choose $J$ so that $m_J<n\leq m_{J+1}$. Then
\[
\mathbb{E}V(\bm X_n)\leq C_\epsilon\Bigg[R(t_{m_0},t_n)\mathbb{E}V(\bm X_{m_0})+\int_{t_{m_0}}^{t_n}R(s,t_n)g(s)\,ds+\sum_{j=0}^{J-1}R(t_{m_{j+1}},t_n)r_{m_j,m_{j+1}}+r_{m_J,n}\Bigg].
\]

\end{proposition}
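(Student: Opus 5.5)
The plan is an induction over the restart windows. Write $a_j:=\mathbb E V(\bm X_{m_j})$ and $R(s,t)$ as in the statement. Fix a window $[m_j,m_{j+1}]$ and an index $m_j\le n\le m_{j+1}$. Since $m_{j+1}=N_T(m_j)$, we have $n\le N_T(m_j)$, so the hypothesis \eqref{eq:finite-window-V-approximation} applies at the restart start $m=m_j$.

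\emph{One-window step.} First bound the SDE term pointwise using Lemma~\ref{lem:sde-convergence-envelope} with $s=t_{m_j}$, $t=t_n$, $\bm x=\bm X_{m_j}$. Then take expectations; this is legitimate since everything is nonnegative and $R$ and $g$ are deterministic. The result is
\[
\mathbb E V(\bm X_n)\le \bigl\{R(t_{m_j},t_n)+\epsilon_{m_j}\bigr\}a_j+\int_{t_{m_j}}^{t_n}R(s,t_n)g(s)\,ds+r_{m_j,n}.
\]
When $n=m_{j+1}$, factor the first coefficient as $R_j(1+\epsilon_{m_j}/R_j)\le R_j\exp(\epsilon_{m_j}/R_j)$. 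This gives the scalar recursion
\[
a_{j+1}\le e^{\epsilon_{m_j}/R_j}\Bigl[R_j a_j+\int_{t_{m_j}}^{t_{m_{j+1}}}R(s,t_{m_{j+1}})g(s)\,ds+r_{m_j,m_{j+1}}\Bigr],
\]
where I have also inflated the $g$ and $r$ terms by the factor $e^{\epsilon_{m_j}/R_j}\ge1$.

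\emph{Iteration.} Next prove by induction on $J$ that
\[
a_J\le \prod_{j<J}e^{\epsilon_{m_j}/R_j}\Bigl[R(t_{m_0},t_{m_J})a_0+\int_{t_{m_0}}^{t_{m_J}}R(s,t_{m_J})g(s)\,ds+\sum_{j<J}R(t_{m_{j+1}},t_{m_J})r_{m_j,m_{j+1}}\Bigr].
\]
The inductive step substitutes the hypothesis into the recursion. It then uses the composition rule \eqref{eq:R-composition}, in the forms $R_J R(t_{m_0},t_{m_J})=R(t_{m_0},t_{m_{J+1}})$ and $R_J R(s,t_{m_J})=R(s,t_{m_{J+1}})$ for $s\le t_{m_J}$. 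Finally it merges the two pieces of the $g$-integral, over $[t_{m_0},t_{m_J}]$ and $[t_{m_J},t_{m_{J+1}}]$, into one integral with kernel $R(s,t_{m_{J+1}})$.

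\emph{Final partial window.} For a general $n$ with $m_J<n\le m_{J+1}$, apply the one-window step at $m_J$ and bound $a_J$ by the induction. The coefficient of $a_J$ is $R(t_{m_J},t_n)+\epsilon_{m_J}$. I would bound it by $R(t_{m_J},t_n)(1+\epsilon_{m_J}/R_J)$, which is valid because $R(t_{m_J},t_n)\ge R_J$ since $\lambda\ge0$. Composing again with $R(t_{m_J},t_n)$ via \eqref{eq:R-composition} yields the stated form. The products of exponential factors are at most $C_\epsilon$ because $\sum_j\epsilon_{m_j}/R_j<\infty$. The residual $r_{m_J,n}$ appears without a contraction factor, multiplied only by $C_\epsilon$, as in the statement. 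The case $n=m_J$ exactly is covered by the inductive bound itself, as the endpoint of the previous window.

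The main obstacle is bookkeeping rather than analysis: the multiplicative $\epsilon$ errors have to be converted into the single factor $C_\epsilon$. This needs the normalisation by $R_j$ and the monotonicity $R(t_{m_J},t_n)\ge R_J$, so that the additive error $\epsilon_{m_j}a_j$ is absorbed into the contraction rather than accumulating. Two finiteness checks are also needed. First, $\mathbb E V(\bm X_{m_j})<\infty$ must hold at every restart; this follows inductively from the bound itself. Second, every $m_j$ must be finite; this holds because $\sum_n\alpha_n=\infty$.
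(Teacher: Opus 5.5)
Your proposal is correct and follows essentially the paper's route. You combine the SDE envelope with the finite-window hypothesis on each block, iterate over restart windows using the composition rule, and absorb the $\epsilon$-errors via $R(t_{m_J},t_n)+\epsilon_{m_J}\le R(t_{m_J},t_n)\,e^{\epsilon_{m_J}/R_J}$, which uses $R(t_{m_J},t_n)\ge R_J$ on the final partial window. The only difference is bookkeeping: you inflate by $e^{\epsilon_{m_j}/R_j}$ at each step and induct, while the paper writes out the full discrete Grönwall expansion and bounds the products $\prod_{j}(R'_j+\epsilon_{m_j})$ afterwards.
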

\Beginproof{}
By Lemma~\ref{lem:sde-convergence-envelope}, we have the SDE bound
\eqref{eq:sde-convergence-envelope}.
Writing out \eqref{eq:sde-convergence-envelope} over the block from
$m_j$ to $m_{j+1}$ and taking expectations gives
\[
\mathbb E\!\left[
P_{t_{m_j},t_{m_{j+1}}}V(\bm X_{m_j})
\right]
\leq
R(t_{m_j},t_{m_{j+1}})
\mathbb E V(\bm X_{m_j})
+
\int_{t_{m_j}}^{t_{m_{j+1}}}
R(s,t_{m_{j+1}})g(s)\,ds.
\]
Applying this to the finite window approximation
\eqref{eq:finite-window-V-approximation} with $m=m_j$ and
$n=m_{j+1}$ gives
\[
\mathbb E V(\bm X_{m_{j+1}})
\leq
\left\{
R(t_{m_j},t_{m_{j+1}})
+\epsilon_{m_j}
\right\}
\mathbb E V(\bm X_{m_j})
+
\int_{t_{m_j}}^{t_{m_{j+1}}}
R(s,t_{m_{j+1}})g(s)\,ds
+
r_{m_j,m_{j+1}}.
\]

To compress the notation a bit,
for $m_J<n\leq m_{J+1}$, we define $u_j=\mathbb{E}V(\bm X_{m_j})$,
$0\leq j\leq J$, and $u_{J+1}=\mathbb{E}V(\bm X_n)$. For this fixed
terminal index, set $R'_j=R_j=R(t_{m_j},t_{m_{j+1}})$, $0\leq j<J$,
and $R'_J=R(t_{m_J},t_n)$. Define the residual used in the recursion by
$r'_j=r_{m_j,m_{j+1}}$, $0\leq j<J$, and $r'_J=r_{m_J,n}$. Finally,
\[
G_j=
\int_{t_{m_j}}^{t_{m_{j+1}}}
R(s,t_{m_{j+1}})g(s)\,ds,\quad 0\leq j<J,
\qquad\text{and}\qquad
G_J=
\int_{t_{m_J}}^{t_n}R(s,t_n)g(s)\,ds.
\]
Applying the finite window approximation
\eqref{eq:finite-window-V-approximation} and the SDE envelope bound above
gives, for every $0\leq j\leq J$,

\[
u_{j+1}
\leq
(R'_j+\epsilon_{m_j})u_j+G_j+r'_j.
\]
Iterating this recursion gives the standard discrete Grönwall expansion
\[
u_{J+1}\leq{}\left\{\prod_{j=0}^{J}(R'_j+\epsilon_{m_j})\right\}u_0+\sum_{\ell=0}^{J}\left\{\prod_{j=\ell+1}^{J}(R'_j+\epsilon_{m_j})\right\}\{G_\ell+r'_\ell\}.
\]

The inequality
$R'_J=R(t_{m_J},t_n)\geq R(t_{m_J},t_{m_{J+1}})=R_J$ implies that
$\epsilon_{m_J}/R'_J\leq\epsilon_{m_J}/R_J$ and also we know
$\prod_{j=\ell}^{J}R'_j=R(t_{m_\ell},t_n)$. Thus,
\begin{align*}
\prod_{j=\ell}^{J}(R'_j+\epsilon_{m_j})
=\left(\prod_{j=\ell}^{J}R'_j\right)
\prod_{j=\ell}^{J}\left(1+\frac{\epsilon_{m_j}}{R'_j}\right)
&\leq R(t_{m_\ell},t_n)
\exp\left\{\sum_{j=\ell}^{J}\frac{\epsilon_{m_j}}{R_j}\right\}\\
&\leq C_\epsilon R(t_{m_\ell},t_n).
\end{align*}

Applying these product bounds to the Grönwall expansion gives
\[
u_{J+1}\leq C_\epsilon\Bigg[R(t_{m_0},t_n)u_0+\sum_{\ell=0}^{J-1}R(t_{m_{\ell+1}},t_n)\{G_\ell+r_{m_\ell,m_{\ell+1}}\}+G_J+r_{m_J,n}\Bigg].
\]
By the definition of $G_\ell$ and the composition property,
\[
\sum_{\ell=0}^{J-1}
R(t_{m_{\ell+1}},t_n)G_\ell+G_J
=\int_{t_{m_0}}^{t_n}R(s,t_n)g(s)\,ds.
\]
Since $u_0=\mathbb{E}V(\bm X_{m_0})$ and
$u_{J+1}=\mathbb{E}V(\bm X_n)$, this is the displayed bound.
\Endproof

The preceding proposition is the general block residual statement. We now
apply it with the residual induced by the finite window approximation in the
theorem above.
\Beginproof{ of Theorem~\ref{thm:finite-window-sde-rate-transfer}}
For $n=0$, the result is immediate. Fix $n>0$ and choose $J$ so that
$m_J<n\leq m_{J+1}$. Applying
Proposition~\ref{prop:finite-window-block-residual-transfer} with residual
$r_{m,n}=\int_{t_m}^{t_n}\rho(s)\,ds$ gives
\begin{align}
\mathbb E V(\bm X_n)
\leq C_\epsilon\Bigg[&
R(0,t_n)\mathbb E V(\bm X_0)+\int_0^{t_n}R(s,t_n)g(s)\,ds\notag\\
&+\sum_{j=0}^{J-1}
R(t_{m_{j+1}},t_n)
\int_{t_{m_j}}^{t_{m_{j+1}}}\rho(s)\,ds+\int_{t_{m_J}}^{t_n}\rho(s)\,ds
\Bigg].
\label{eq:theorem-first-bound}
\end{align}
The fixed window construction gives
\(
t_{m_{j+1}}-t_{m_j}<T+\alpha_{\max}.
\)
For each $0\leq j<J$ and $s\in[t_{m_j},t_{m_{j+1}}]$,

\[
R(s,t_{m_{j+1}})
\geq e^{-\bar\lambda(T+\alpha_{\max})}
=K_T^{-1}.
\]
For the final partial interval $s\in[t_{m_J},t_n]$, similarly
\(R(s,t_n)\geq R(s,t_{m_{J+1}})\geq K_T^{-1}\).

Consequently,
\begin{align}
&\sum_{j=0}^{J-1}
R(t_{m_{j+1}},t_n)
\int_{t_{m_j}}^{t_{m_{j+1}}}\rho(s)\,ds
+\int_{t_{m_J}}^{t_n}\rho(s)\,ds\notag\\
&\quad=
\sum_{j=0}^{J-1}
\int_{t_{m_j}}^{t_{m_{j+1}}}
\frac{R(s,t_n)}{R(s,t_{m_{j+1}})}
\rho(s)\,ds
+
\int_{t_{m_J}}^{t_n}
\frac{R(s,t_n)}{R(s,t_n)}
\rho(s)\,ds\notag\\
&\quad\leq
K_T
\sum_{j=0}^{J-1}
\int_{t_{m_j}}^{t_{m_{j+1}}}
R(s,t_n)\rho(s)\,ds
+
K_T
\int_{t_{m_J}}^{t_n}
R(s,t_n)\rho(s)\,ds\notag\\
&\quad=
K_T\int_0^{t_n}R(s,t_n)\rho(s)\,ds.
\label{eq:theorem-residual-bound}
\end{align}
Substitution of \eqref{eq:theorem-residual-bound} into
\eqref{eq:theorem-first-bound} proves the theorem.
\Endproof

The general bound
\eqref{eq:finite-window-sde-rate-transfer-bound} transfers the SDE Lyapunov
estimate with a multiplicative factor $C_\epsilon$ and an additional forcing
term $K_T\rho(s)$. In the primitive comparison of
Proposition~\ref{prop:v3-primitive-window-verification}, $\epsilon_m=0$ and
$\rho(s)=C_T M_p\alpha(s)$. Thus $C_\epsilon=1$, and the additional forcing
takes the form $C\alpha(s)$ in the specialised bound
\eqref{eq:v3-first-order-transfer-bound}.


\begin{remark}[Direct SDE envelopes]
The standard applications verify the generator drift condition, which is why
it is the continuous input to Theorem~\ref{thm:finite-window-sde-rate-transfer}.
If an application already has the integrated estimate
\[
P_{s,t}V\leq R(s,t)V+\int_s^tR(u,t)g(u)\,du,
\]
the generator hypothesis in Theorem~\ref{thm:finite-window-sde-rate-transfer}
may be replaced by this integrated SDE envelope.
\end{remark}

\Beginproof{ of Theorem~\ref{thm:main-rate-transfer}}
By \eqref{eq:v3-primitive-lyapunov-growth} and the fundamental theorem
of calculus along the segment from $\bm 0$ to $\bm x$,
$|V(\bm x)|\leq C\polybrac{\bm x}^{v+1}$. Since
$p\geq v+\kappa+2>v+1$ and $M_p<\infty$, we have
$\mathbb EV(\bm X_0)<\infty$.
Proposition~\ref{prop:v3-primitive-window-verification} and the global
$p$-th polynomial moment closure give the finite window approximation required by

Theorem~\ref{thm:finite-window-sde-rate-transfer}, with
$\epsilon_m=0$ and $\rho(s)=C_TM_p\alpha(s)$. Hence
$C_\epsilon=1$. Applying the abstract transfer theorem using the explicitly

stated Lyapunov bound, and absorbing $K_T$, $M_p$, and the fixed window
constant into $C_T$, gives the displayed bound.

This completes the proof.
\Endproof

\begin{corollary}[Rate inheritance for common stepsize schedules]
\label{cor:stepsize-rate-inheritance}
Suppose that, at the diffusion times $t_n$, a nonnegative sequence satisfies
\begin{equation}\label{eq:stepsize-rate-inheritance-envelope}
E_n\leq e^{-\lambda t_n}E_0+C\int_0^{t_n}
e^{-\lambda(t_n-s)}\alpha(s)^\delta\,ds,
\qquad \lambda,\delta>0.
\end{equation}
For constant stepsizes $\alpha_n=h>0$, this gives
\[
E_n\leq e^{-\lambda hn}E_0+
\frac{C}{\lambda}h^\delta(1-e^{-\lambda hn})
\leq e^{-\lambda hn}E_0+\frac{C}{\lambda}h^\delta.
\]
Thus a fixed step gives convergence to an $O(h^\delta)$ neighbourhood,
not convergence to zero. For
$\alpha_n=a/(n+n_0)^\vartheta$, where $a,n_0>0$ and $0<\vartheta<1$,
integral comparison yields
\[
t_n=\frac{a}{1-\vartheta}
\bigl\{(n+n_0)^{1-\vartheta}-n_0^{1-\vartheta}\bigr\}+O(1),
\]
so $e^{-\lambda t_n}\leq C_0e^{-c(n+n_0)^{1-\vartheta}}$ with
$c=\lambda a/(1-\vartheta)$. Moreover,
\[
\int_0^{t_n}e^{-\lambda(t_n-s)}\alpha(s)^\delta\,ds
=O(\alpha_n^\delta)
=O((n+n_0)^{-\vartheta\delta}).
\]
Indeed, on $s\geq t_n/2$ the piecewise-constant stepsize is
$O(\alpha_n)$, whereas the contribution from $s<t_n/2$ is exponentially
small. Consequently,
\[
E_n=O\!\left(e^{-c(n+n_0)^{1-\vartheta}}
+(n+n_0)^{-\vartheta\delta}\right).
\]
At the harmonic boundary, $\alpha_n=a/(n+n_0)$ and
$t_n=a\log(n+n_0)+O(1)$. Comparing the forcing with
$\alpha(s)^\delta\asymp e^{-\delta s/a}$ gives
\[
E_n=\begin{cases}
O(n^{-\delta}),&\lambda a>\delta,\\
O(n^{-\delta}\log n),&\lambda a=\delta,\\
O(n^{-\lambda a}),&\lambda a<\delta.
\end{cases}
\]
The threshold is specific to this harmonic boundary: for $0<\vartheta<1$,
contraction is stretched exponential in iteration time and eventually
dominates polynomial forcing, while in the harmonic case contraction and
forcing decay polynomially and their exponents compete.
\end{corollary}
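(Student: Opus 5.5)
The plan is to handle the corollary case by case, working directly from the envelope \eqref{eq:stepsize-rate-inheritance-envelope}. Every case uses the same split of the forcing integral at $s=t_n/2$.

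\emph{Constant stepsize.} When $\alpha_n=h$ we have $t_n=nh$ and $\alpha(s)\equiv h$. The forcing integral can then be computed exactly:
\[
\int_0^{nh}e^{-\lambda(nh-s)}\,ds=\frac{1-e^{-\lambda hn}}{\lambda}.
\]
This gives the stated bound at once.

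\emph{Power law, $0<\vartheta<1$.} First I compare the sum defining $t_n$ with an integral. Since $x\mapsto a(x+n_0)^{-\vartheta}$ is decreasing,
\[
\int_{n_0}^{n+n_0}ax^{-\vartheta}\,dx\le t_n\le\int_{n_0}^{n+n_0}ax^{-\vartheta}\,dx+a n_0^{-\vartheta},
\]
which yields the stated expansion of $t_n$ with an $O(1)$ error. The bound $e^{-\lambda t_n}\le C_0e^{-c(n+n_0)^{1-\vartheta}}$ follows, with $C_0$ absorbing $e^{\lambda a n_0^{1-\vartheta}/(1-\vartheta)}$.

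For the forcing I split the integral. On $s\ge t_n/2$, write $s\in[t_k,t_{k+1})$. Then $t_k\ge t_n/2-\alpha_{\max}$, and inverting the asymptotics of $t_n$ gives $k+n_0\ge c'(n+n_0)$ for a constant $c'>0$. Hence $\alpha(s)=\alpha_k\le C\alpha_n$, and this piece is at most $C\alpha_n^\delta/\lambda$. On $s<t_n/2$, I bound $\alpha(s)\le\alpha_{\max}$, so this piece is at most $\alpha_{\max}^\delta e^{-\lambda t_n/2}t_n/2$. That is stretched-exponentially small, hence $O((n+n_0)^{-\vartheta\delta})$. Adding the transient gives the claimed $E_n$ bound.

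\emph{Harmonic case.} The same comparison gives $t_n=a\log(n+n_0)+O(1)$. Inverting, for $s\in[t_k,t_{k+1})$ we get $k+n_0\asymp e^{s/a}$, so $\alpha(s)^\delta\le Ce^{-\delta s/a}$. Therefore
\[
\int_0^{t_n}e^{-\lambda(t_n-s)}\alpha(s)^\delta\,ds\le Ce^{-\lambda t_n}\int_0^{t_n}e^{(\lambda-\delta/a)s}\,ds.
\]
I evaluate this in three subcases.
\begin{itemize}
\item If $\lambda a>\delta$, the bound is $O(e^{-\delta t_n/a})=O(n^{-\delta})$.
\item If $\lambda a=\delta$, it is $O(t_ne^{-\lambda t_n})=O(n^{-\delta}\log n)$.
\item If $\lambda a<\delta$, it is $O(e^{-\lambda t_n})=O(n^{-\lambda a})$.
\end{itemize}
The transient $e^{-\lambda t_n}E_0=O(n^{-\lambda a})$ is dominated by the forcing term in the first two subcases and matches it in the third. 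This gives the three regimes.

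The only step needing care is the inversion from diffusion time back to an index, i.e. the claim $\alpha(s)\le C\alpha_n$ on $[t_n/2,t_n]$ in the power-law case and $\alpha(s)\asymp e^{-s/a}$ in the harmonic case. Both follow from the two-sided integral comparison for $t_k$ together with the monotonicity of $\alpha_k$, so I expect this to be routine bookkeeping rather than a genuine obstacle.
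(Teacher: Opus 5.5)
Your proposal is correct and follows the argument the paper sketches inside the corollary statement itself. That argument is exact integration for constant steps, the two-sided integral comparison for $t_n$, the split at $s=t_n/2$ in the subharmonic case, and the bound $\alpha(s)^\delta\le Ce^{-\delta s/a}$ leading to the three harmonic regimes. You additionally supply the index-inversion bookkeeping that the paper leaves implicit.
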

The constant-step conclusion of
Corollary~\ref{cor:stepsize-rate-inheritance} is used here only in the
first-order setting with $\epsilon_m=0$; it makes no claim about a
constant-step result when the comparison has a persistent multiplicative
defect.

The applications illustrate different schedules. Strongly convex
SGD uses a decreasing subharmonic power law, while stochastic Heavy Ball uses
a constant step and contracts to an $O(h)$ neighbourhood. The transfer theorem
first gives a finite-time bound in diffusion time, and the learning-rate
schedule determines how that bound appears in iteration time.

\section{Stability from the associated SDE}
\label{sec:foster-lyapunov-stability-proof}
This section proves Proposition~\ref{prop:foster-lyapunov-stability-transfer} and records the power-lifting calculation used in Remark~\ref{rem:power-lifting-sde-stability}.

\begin{proof}
We first control the finite initial segment. Integrating the gradient
of $\Phi$ along the line from $\bm 0$ to $\bm x$ gives
\[
\begin{aligned}
\Phi(\bm x)
&=\Phi(\bm 0)+\int_0^1\nabla\Phi(\theta\bm x)^\top\bm x\,d\theta\\
&\leq\Phi(\bm 0)+C_\Phi\|\bm x\|
\int_0^1\polybrac{\theta\bm x}^{p-1}\,d\theta
\leq C\polybrac{\bm x}^{p}.
\end{aligned}
\]
The polynomial translation inequality and the primitive oracle bound
then give, for every $k$,
\[
\begin{aligned}
\mathbb E[\polybrac{\bm X_{k+1}}^p\mid\mathcal F_k]
&\leq C_p\left(\polybrac{\bm X_k}^p+
\alpha_k^p\mathbb E[\|\bm H_{k+1}\|^p\mid\mathcal F_k]\right)\\
&\leq C_p(1+M\alpha_k^p)\polybrac{\bm X_k}^p.
\end{aligned}
\]
The coercive bound in the proposition and
$\mathbb E\Phi(\bm X_0)<\infty$ first give
$\mathbb E\polybrac{\bm X_0}^p<\infty$. Since every $\alpha_k$ is finite,
induction in the conditional estimate therefore gives
$\mathbb E\polybrac{\bm X_k}^p<\infty$ for every $0\leq k\leq N$.
The polynomial upper bound on $\Phi$ gives
$\mathbb E\Phi(\bm X_k)<\infty$ on this finite prefix, in particular at
$k=N$. Thus no smallness condition is needed for the first $N$ steps.

We now start the Taylor argument at an index $n\geq N$ and write
$h=\alpha_n$. Taylor's formula with integral remainder gives
\[
\Phi(\bm X_n+h\bm H_{n+1})
=\Phi(\bm X_n)+h\nabla\Phi(\bm X_n)^\top\bm H_{n+1}
+h^2\int_0^1(1-\theta)\,
\bm H_{n+1}^\top\nabla^2\Phi(\bm X_n+\theta h\bm H_{n+1})
\bm H_{n+1}\,d\theta.
\]
The polynomial translation inequality
$\polybrac{\bm x+\bm y}^{r}\leq C_r(\polybrac{\bm x}^{r}+\|\bm y\|^r)$,
for $r\geq0$, and the Hessian bound imply that the conditional expectation
of the absolute remainder is at most
\[
C h^2\left[
\polybrac{\bm X_n}^{p-2}
\mathbb E(\|\bm H_{n+1}\|^2\mid\mathcal F_n)
+h^{p-2}\mathbb E(\|\bm H_{n+1}\|^p\mid\mathcal F_n)
\right].
\]
Conditional Jensen's inequality and the primitive $p$th-moment bound
give
\[
\mathbb E(\|\bm H_{n+1}\|^2\mid\mathcal F_n)
\leq\left(\mathbb E(\|\bm H_{n+1}\|^p\mid\mathcal F_n)\right)^{2/p}
\leq C\polybrac{\bm X_n}^{2}.
\]
Choose $\bar\alpha\leq1$. Since $h<\bar\alpha$, the remainder is
therefore bounded by
$Ch^2\polybrac{\bm X_n}^{p}$. Using
$\mathbb E[\bm H_{n+1}\mid\mathcal F_n]=\bm b(\bm X_n)$ and
$\polybrac{\bm x}^p\leq C_\Phi\Phi(\bm x)$, we obtain
\[
\mathbb E[\Phi(\bm X_{n+1})\mid\mathcal F_n]
\leq\Phi(\bm X_n)
+h\bm b(\bm X_n)^\top\nabla\Phi(\bm X_n)
+Ch^2\Phi(\bm X_n).
\]

The global Lipschitz bound on $\sigma$ implies
$\|\sigma(\bm x)\|_F\leq C\polybrac{\bm x}$. Hence
\[
|A(\bm x):\nabla^2\Phi(\bm x)|
\leq C\polybrac{\bm x}^{2}\polybrac{\bm x}^{p-2}
\leq C\Phi(\bm x).
\]
For $n\geq N$, we have $t_n\geq t_N\geq t_{N_0}$ and
$\alpha(t_n)=\alpha_n=h$. The tail generator inequality therefore yields
\[
\bm b(\bm x)^\top\nabla\Phi(\bm x)
=\mathcal L_{t_n}\Phi(\bm x)
-\frac h2 A(\bm x):\nabla^2\Phi(\bm x)
\leq-\gamma\Phi(\bm x)+K+Ch\Phi(\bm x).
\]
Substitution into the Taylor estimate gives
\[
\mathbb E[\Phi(\bm X_{n+1})\mid\mathcal F_n]
\leq\bigl(1-\gamma h+Ch^2\bigr)\Phi(\bm X_n)+Kh.
\]
Decrease $\bar\alpha$ if necessary so that
$C\bar\alpha\leq\gamma/2$ and $\bar\alpha\leq2/\gamma$. Then for
$n\geq N$,
\[
\mathbb E[\Phi(\bm X_{n+1})\mid\mathcal F_n]
\leq\left(1-\frac{\gamma h}{2}\right)\Phi(\bm X_n)+Kh,
\qquad 0\leq1-\frac{\gamma h}{2}\leq1.
\]
Taking expectations, set $u_n=\mathbb E\Phi(\bm X_n)$ and
$L=2K/\gamma$. For $n\geq N$,
\[
u_{n+1}\leq\left(1-\frac{\gamma\alpha_n}{2}\right)u_n
+\frac{\gamma\alpha_n}{2}L.
\]
The coefficient lies in $[0,1]$, so induction from $N$ gives
$u_n\leq\max\{u_N,L\}$ for every $n\geq N$. Combining this tail bound
with the finite prefix yields
\[
\sup_{n\geq0}\mathbb E\Phi(\bm X_n)
\leq\max\left\{\max_{0\leq k\leq N}\mathbb E\Phi(\bm X_k),
\frac{2K}{\gamma}\right\}<\infty.
\]
Finally, the one-sided coercive bound gives
$M_p:=\sup_{n\geq0}\mathbb E\polybrac{\bm X_n}^{p}
\leq C_\Phi\sup_{n\geq0}\mathbb E\Phi(\bm X_n)<\infty$.
\end{proof}

\begin{lemma}[Power lifting]
\label{lem:power-lifting-sde-stability}
Let $W\in C^2(\mathbb R^d)$ satisfy $W\geq0$, and let
$2\leq p_0\leq p$. Suppose, for positive finite constants $c_-$, $c_+$,
and $C_W$, that
\[
c_-\polybrac{\bm x}^{p_0}\leq1+W(\bm x)
\leq c_+\polybrac{\bm x}^{p_0},\qquad
\|\nabla W(\bm x)\|\leq C_W\polybrac{\bm x}^{p_0-1},\qquad
\|\nabla^2W(\bm x)\|\leq C_W\polybrac{\bm x}^{p_0-2}.
\]
Assume that $\sigma$ is globally Lipschitz, as in
Assumption~\ref{assum:v3-primitive-finite-window}, and that on a tail
\[
\mathcal L_tW(\bm x)\leq-\lambda_0W(\bm x)+K_0
\]
for some $\lambda_0>0$ and finite $K_0$. Set $r=p/p_0$ and
$\Phi=(1+W)^r$. Then:
\begin{enumerate}
\item[\rm(a)] $\Phi(\bm x)\asymp\polybrac{\bm x}^p$;
\item[\rm(b)] for a finite constant $C$,
\[
\|\nabla\Phi(\bm x)\|\leq C\polybrac{\bm x}^{p-1},\qquad
\|\nabla^2\Phi(\bm x)\|\leq C\polybrac{\bm x}^{p-2};
\]
\item[\rm(c)] there are $c,C,\bar\alpha>0$ such that, wherever the
lower-order Foster--Lyapunov inequality holds and $\alpha(t)\leq\bar\alpha$,
\[
\mathcal L_t\Phi(\bm x)\leq-c\Phi(\bm x)+C.
\]
\end{enumerate}
\end{lemma}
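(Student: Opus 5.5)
The plan is a direct chain-rule computation. Parts (a) and (b) follow from the two-sided comparison and the derivative bounds for $W$; part (c) follows from the generator identity for a power of $1+W$, after which the lower-order Foster--Lyapunov inequality and smallness of $\alpha(t)$ are used to absorb the second-order correction.

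For (a), raise $c_-\polybrac{\bm x}^{p_0}\leq1+W\leq c_+\polybrac{\bm x}^{p_0}$ to the power $r=p/p_0\geq1$. This gives $c_-^r\polybrac{\bm x}^p\leq\Phi\leq c_+^r\polybrac{\bm x}^p$.

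For (b), we have $\nabla\Phi=r(1+W)^{r-1}\nabla W$ and
\[
\nabla^2\Phi=r(1+W)^{r-1}\nabla^2W+r(r-1)(1+W)^{r-2}\nabla W\nabla W^\top.
\]
Since $r-1\geq0$ and $r-2$ may be negative, use the two-sided comparison to write $(1+W)^{r-j}\asymp\polybrac{\bm x}^{p-jp_0}$ in either case. Combined with the bounds on $\nabla W$ and $\nabla^2W$, this gives orders $p-p_0+p_0-1=p-1$ for the gradient. For the Hessian both terms have order $p-2$: the first is $(p-p_0)+(p_0-2)$ and the second is $(p-2p_0)+2(p_0-1)$.

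For (c), the key identity is
\[
\mathcal L_t\Phi=r(1+W)^{r-1}\mathcal L_tW+\tfrac{\alpha(t)}2 r(r-1)(1+W)^{r-2}\,A:\nabla W\nabla W^\top .
\]
This follows by inserting the Hessian formula into the generator: the drift part and the $\nabla^2W$ part recombine into $\mathcal L_tW$. On the tail, substitute $\mathcal L_tW\leq-\lambda_0W+K_0=-\lambda_0(1+W)+(\lambda_0+K_0)$. The first term is then at most
\[
-r\lambda_0\Phi+r(\lambda_0+K_0)(1+W)^{r-1}.
\]
Global Lipschitzness of $\sigma$ gives $\|A\|\leq C\polybrac{\bm x}^2$. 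Hence the correction term is bounded by
\[
C\alpha(t)(1+W)^{r-2}\polybrac{\bm x}^{2p_0}\leq C'\alpha(t)\Phi,
\]
again using the comparison in (a) (note $\polybrac{\bm x}^{2p_0}\asymp(1+W)^2$).

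The only step requiring care is absorbing the lower-order term $(1+W)^{r-1}$. If $r=1$ it is a constant. If $r>1$, Young's inequality with exponents $r/(r-1)$ and $r$ gives
\[
r(\lambda_0+K_0)(1+W)^{r-1}\leq\tfrac{r\lambda_0}{4}\Phi+C_{\lambda_0,K_0,r}.
\]
Then choose $\bar\alpha$ with $C'\bar\alpha\leq r\lambda_0/4$. Collecting terms yields $\mathcal L_t\Phi\leq-\tfrac{r\lambda_0}{2}\Phi+C$ wherever the tail inequality holds and $\alpha(t)\leq\bar\alpha$, which is (c) with $c=r\lambda_0/2$.
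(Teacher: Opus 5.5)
Your proposal is correct and matches the paper's proof step for step. Both use the chain-rule formulas for $\nabla\Phi$ and $\nabla^2\Phi$, with the lower coercivity bound handling the negative exponent when $r<2$. Both use the generator identity whose correction term $\tfrac{\alpha(t)}{2}r(r-1)(1+W)^{r-2}\nabla W^\top A\nabla W$ is bounded by $C\alpha(t)\Phi$, apply Young's inequality to absorb $(1+W)^{r-1}$, and then choose $\bar\alpha$ small; you simply make the constants explicit (e.g.\ $c=r\lambda_0/2$) and treat $r=1$ separately.
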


\begin{proof}
The two-sided growth assumption gives
$\Phi\asymp\polybrac{\bm x}^{p}$. Differentiation yields
\[
\nabla\Phi=r(1+W)^{r-1}\nabla W,\qquad
\nabla^2\Phi=r(1+W)^{r-1}\nabla^2W
+r(r-1)(1+W)^{r-2}\nabla W\nabla W^\top.
\]
The growth assumptions give the bounds in (b). For $1\leq r<2$,
the negative exponent $r-2$ in the Hessian's second term is bounded using
the lower coercivity bound on $1+W$. The generator chain rule gives
\[
\mathcal L_t\Phi
=r(1+W)^{r-1}\mathcal L_tW
+\frac{\alpha(t)}2r(r-1)(1+W)^{r-2}
\|\sigma^\top\nabla W\|^2.
\]
Writing $z=1+W$, the assumed drift bound gives
$r z^{r-1}\mathcal L_tW\leq-r\lambda_0z^r+Cz^{r-1}
\leq-c_0z^r+C$ by Young's inequality. Global Lipschitzness gives
$\|\sigma(\bm x)\|_F\leq C\polybrac{\bm x}$. Combining this with the
gradient bound and the lower coercivity bound shows
\[
(1+W)^{r-2}\|\sigma^\top\nabla W\|^2\leq C(1+W)^r=C\Phi.
\]
Consequently,
$\mathcal L_t\Phi\leq-c_0\Phi+C+C\alpha(t)\Phi$. Choosing
$\bar\alpha$ small enough to absorb the last term proves (c).
\end{proof}

\end{document}